\newcolumntype{L}[1]{>{\hsize=#1\hsize\raggedright\arraybackslash}X}%
\newcolumntype{R}[1]{>{\hsize=#1\hsize\raggedleft\arraybackslash}X}%
\newcolumntype{C}[1]{>{\hsize=#1\hsize\centering\arraybackslash}X}%
\newtheorem{theorem}{Theorem}
\newtheorem{result}{Result}
\newcommand{\bR}{\mathbf{R}}
\newcommand{\bx}{\mathbf{x}}
\newcommand{\by}{\mathbf{y}}
\newcommand{\ba}{\mathbf{a}}
\newcommand{\bA}{\mathbf{A}}
\newcommand{\bB}{\mathbf{B}}
\newcommand{\bp}{\mathbf{p}}
\newcommand{\bu}{\mathbf{u}}
\newcommand{\bt}{\mathbf{t}}
\newcommand{\cI}{\mathcal{I}}
\newcommand{\cH}{\mathcal{H}}
\newcommand{\cX}{\mathcal{X}}
\newcommand{\cY}{\mathcal{Y}}
\def\exp{\operatorname*{exp\,}}
\def\SO3{SO(3)}
\newcommand{\bT}{\mathbf{T}}
\newcommand*\rot{\rotatebox{90}}
\def\exp{\operatorname*{exp\,}}
\begin{document}
\title{Guaranteed Outlier Removal for Point Cloud Registration with Correspondences}

\author{\'{A}lvaro~{Parra~Bustos},
         Tat-Jun~Chin,~\IEEEmembership{Member,~IEEE}
\IEEEcompsocitemizethanks{
\IEEEcompsocthanksitem Parra Bustos and Chin are with School of Computer Science, The University of Adelaide.\protect\\
E-mail: \{alvaro.parrabustos, tat-jun.chin\}@adelaide.edu.au
}
}


\IEEEcompsoctitleabstractindextext{%
\begin{abstract}
An established approach for 3D point cloud registration is to estimate the registration function from 3D keypoint correspondences. Typically, a robust technique is required to conduct the estimation, since there are false correspondences or outliers. Current 3D keypoint techniques are much less accurate than their 2D counterparts, thus they tend to produce extremely high outlier rates. A large number of putative correspondences must thus be extracted to ensure that sufficient good correspondences are available. Both factors (high outlier rates, large data sizes) however cause existing robust techniques to require very high computational cost. In this paper, we present a novel preprocessing method called \emph{guaranteed outlier removal} for point cloud registration. Our method reduces the input to a smaller set, in a way that any rejected correspondence is guaranteed to not exist in the globally optimal solution. The reduction is performed using purely geometric operations which are deterministic and fast. Our method significantly reduces the population of outliers, such that further optimization can be performed quickly. Further, since only true outliers are removed, the globally optimal solution is preserved. On various synthetic and real data experiments, we demonstrate the effectiveness of our preprocessing method. \emph{Demo code is available as supplementary material}.

\end{abstract}

\begin{IEEEkeywords}
Point cloud registration, global optimality, preprocessing, guaranteed outlier removal.
\end{IEEEkeywords}
}

\maketitle

\IEEEdisplaynontitleabstractindextext

%
\IEEEpeerreviewmaketitle

\IEEEraisesectionheading{\section{Introduction}}

\IEEEPARstart{P}oint cloud registration is a core operation in computer vision and robotics. In general, point cloud registration is required whenever there is a need to integrate 3D measurements from different viewpoints or time steps. Given two point clouds $\cX$ and $\cY$, the aim is to find a transformation function $f$ that maps $\cX$ to the reference frame of $\cY$, in a way that the points are as ``aligned" as possible. In this work, we focus on rigidly moving point clouds, i.e., $f$ is a rotation or a Euclidean/rigid transformation.

A popular paradigm for point cloud registration is to estimate $f$ from a set of point correspondences or matches extracted using a 3D keypoint technique~\cite{tam13}. Let $\{(\bx_i, \by_i)\}_{i=1}^N$ denote a set of point matches between $\cX$ and $\cY$. If there are no false matches or outliers, the best $f$ in the least squares sense can usually be obtained analytically~\cite{horn87,arun87}. However, most automatic 3D keypoint matching methods~\cite{tombari13,rusu08,zhong09} invariably produce mismatches. Thus $f$ must be estimated robustly from the input correspondences.

To this end, many practitioners apply the maximum consensus approach, i.e., find the $f$ that is consistent up to threshold $\xi$ with as many of the input matches as possible
\begin{align}\label{eq:pcreg}
	\begin{aligned}
		&\underset{\Omega, \; \cI \subseteq \cH}{\text{maximize}}
		& & \left| \cI \right| \\ 
		&\text{subject to}
		& & \| f(\bx_i \mid \Omega) - \by_i\|\leq \xi, \; \forall i \in \cI,
	\end{aligned}
\end{align}
where $\Omega$ represents the parameters of $f$, $\cH = \{1,\dots,N\}$ are indices of the input data, and $\|\cdot\|$ denotes the Euclidean norm. The subset $\cI$ is often called a \emph{consensus set}. The optimal $\Omega^\ast$ to~\eqref{eq:pcreg} enables $f$ to be consistent with the largest possible consensus set $\cI^\ast$. Hereafter, we make no distinction between the indices and the data referred to by the indices. Also, primarily for convenience, we call the data in $\cI^\ast$ \emph{true inliers}, and the data in $\cH\setminus \cI^\ast$ \emph{true outliers}.
 
RANSAC~\cite{fischler81} is the standard approach to solve maximum consensus problems. In the context of~\eqref{eq:pcreg}, RANSAC randomly samples minimal subsets of size $m$ ($m = 2$ for rotation, $m = 3$ for rigid transformation) and estimates $\Omega$ from the minimal subsets (usually using least squares). The $\Omega$ with the largest consensus set is returned as the solution. The number of samples to generate is typically taken as
\begin{align}\label{eq:ransacstop}
\left\lceil \frac{\log (1-\rho)}{\log (1 - (1-\eta)^m)} \right\rceil,
\end{align}
where $\eta$ is the outlier proportion. Formula~\eqref{eq:ransacstop} is derived with the aim of retrieving at least one minimal subset that consists purely of inliers with probability $\rho$. Since $\eta$ is unknown initially, it is estimated on-the-fly based on the current best result $\tilde{\cI}$, thus~\eqref{eq:ransacstop} is merely a probabilistic \emph{lower bound} of the runtime of RANSAC.

\begin{figure*}[t]
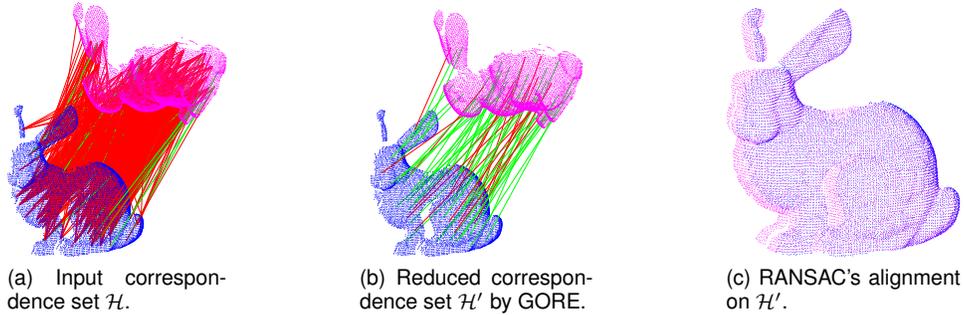

\centering
\subfloat[Input correspondence set $\cH$.]{\includegraphics[width=0.16\textwidth]{/1/bunny_2000_input.pdf}}
\hspace{5em}
\subfloat[Reduced correspondence set $\cH^\prime$ by GORE.]{
\begin{minipage}[t]{0.16\textwidth}\centering 
	\includegraphics[width=.97\textwidth]{/1/bunny_2000_gore_out.pdf}
\end{minipage}
}
\hspace{5em}
\subfloat[RANSAC's alignment on $\cH^\prime$.]{\includegraphics[width=0.171\textwidth]{/1/bunny_2000_ransac_align.pdf}}
\caption{Illustrating the concept of GORE for robust point cloud registration~\eqref{eq:pcreg}. In the above diagrams, green indicates true inliers $\cI^\ast$ and red indicates true outliers $\cH \setminus \cI^\ast$. In this instance, the input $\cH$ contains $N = 2000$ correspondences with outlier rate $\eta = 0.98$. In $4.62$ seconds, GORE reduced the input to a smaller correspondence set $\cH^\prime$ of size $60$ with $\eta = 0.28$. Using $\rho = 0.99$ in the stopping criterion~\eqref{eq:ransacstop}, executing RANSAC on $\cH^\prime$ terminated in $0.005$ seconds. Executing RANSAC on the original input $\cH$, however, required $132$ seconds to converge.}
	\label{fig:preview}
\end{figure*}

Although RANSAC is generally efficient, its runtime as predicted by~\eqref{eq:ransacstop} increases exponentially with the outlier ratio $\eta$. The sampling stage thus becomes a significant bottleneck in point cloud registration, since the accuracy of 3D keypoint matching methods is generally much lower than their 2D image counterparts such as SIFT~\cite{lowe04} and SURF~\cite{bay06} , due to inherent difficulties in the 3D case such as irregular point cloud density and lack of useful texture on the points. As we shall see later in Sec.~\ref{sec:results}, outlier ratios of more than $95\%$ is quite common in real point clouds.

Another fundamental limitation of RANSAC is that it does not provide the optimal solution in general. Formally, let $\tilde{\Omega}$ be the result of RANSAC, and $\tilde{\cI} \subseteq \cH$ be the associated consensus set. We have that $|\tilde{\cI}| \le |\cI^\ast|$, and in general $\tilde{\cI} \nsubseteq \cI^\ast$, i.e., true inliers may be discarded by RANSAC.

There exist globally optimal algorithms for point cloud registration. Most of the global methods~\cite{olsson09,enqvist08,hartley09,bazin12,yang13,parrabustos16_pami} employ branch-and-bound (BnB)~\cite{horst03} to systematically search the parameter space ($SO(3)$ for rotation, $SE(3)$ for rigid transformation) to deterministically optimize their respective objective function. Another class of global algorithms~{\cite{olsson08,enqvist12,enqvist14}} leverage the fact that the optimal solution to a robust  objective function can be obtained as the solution of the same problem on a $d$-subset of the $N$ input correspondences $\cH$. For example, for 6 DoF rigid registration~\eqref{eq:pcreg}, $d$ is equal to $6$. To find $\Omega^\ast$, the methods~{\cite{olsson08,enqvist12,enqvist14}} enumerate all {$\binom{N}{d}$} subsets of {$\cH$} and solve~\eqref{eq:pcreg} on each $d$-subset analytically (note that this approach differs from ``standard" RANSAC which solves for $\Omega$ via least squares on sampled minimal subsets of size $m$). 

A general weakness of global algorithms is their high computational cost, especially on data with large sizes $N$ and high outlier contamination rates. Theoretically, the runtime of BnB increases exponentially with the input size. The significant cost of BnB is also exacerbated by large outlier ratios, since outliers tend to increase local optima. As we will show in Sec.~\ref{sec:results}, runtimes in excess of several hours can be consumed by BnB when invoked on real correspondence sets of moderate size ($N = 1000$). In the second group of global algorithms~\cite{olsson08,enqvist12,enqvist14}, the number of unique subsets to test is impracticably large for practical input sizes, e.g., for $N = 1000$ there are $\ge 100$ Trillion $6$-subsets. Thus, such methods are feasible only on very small instances.

\subsection{Our contributions}

We propose a novel \emph{guaranteed outlier removal (GORE)} technique for robust point cloud registration~\eqref{eq:pcreg}. Specifically, our method is able to efficiently reduce $\cH$ to a subset $\cH^\prime$ of point matches, in a way that any correspondence $(\bx_i, \by_i)$ discarded by reducing $\cH$ to $\cH^\prime$ is a true outlier, i.e., any $(\bx_i, \by_i)$ that is removed does not belong to $\cI^\ast$. More formally, our method ensures that
\begin{align}\label{eq:preprop}
\cI^\ast \subseteq \cH^\prime \subseteq \cH.
\end{align}
Note that this fundamentally differs from the objective of RANSAC, which is to return a suboptimal consensus set $\tilde{\cI}$. See Fig.~\ref{fig:preview} for an illustration of the concept of GORE.

Based on computationally simple geometric consistency checks, our GORE technique is deterministic and efficient, e.g., the result in Fig.~\ref{fig:preview} was produced in $4.62$ seconds. Though GORE may not remove all true outliers, i.e., $\cH^\prime$ may not be a consensus set, it is able to aggressively reduce the population of true outliers\footnote{ Note that removing all true outliers amounts to solving the original maximum consensus problem~\eqref{eq:pcreg}, which is intractable. Thus, except in trivial instances, GORE does not generally remove all true outliers.}. On real data, almost $90\%$ of the true outliers can be eliminated (see results in Table~\ref{tab:comparison6dof} for mining and remote sensing datasets). We pose our technique as an \emph{efficient preprocessor} to robust point cloud registration, in that GORE reduces the amount of data and the ratio of outliers in the input $\cH$, without removing true inliers.

The output $\cH^\prime$ can be used in the following ways:
\begin{itemize}[parsep=1pt]
\item Since the ratio of outliers is much lower in $\cH^\prime$, approximate maximum consensus algorithms such as RANSAC, whose runtime increases exponentially with outlier rate, can also benefit. Practically, preprocessing with GORE enables RANSAC to return good solutions in more reasonable times.
\item Due to condition~\eqref{eq:preprop}, the globally optimal solution $\Omega^\ast$ is preserved in $\cH^\prime$. Thus, one can conduct further optimization on $\cH^\prime$ to find $\Omega^\ast$. By applying GORE to drastically reduce the amount of data and outlier ratio, the \emph{total} runtime can be reduced significantly (by more than an order of magnitude on real data).
\end{itemize}
Comprehensive experimental evaluation is provided in Sec.~\ref{sec:results} to demonstrate the above claims\footnote{Matlab implementation is available as supplementary material.}.

As a by-product, a sub-optimal solution $\tilde{\Omega}$ is also returned by GORE. We empirically show that this sub-optimal transformation represents an approximate solution of comparable quality to the result of RANSAC. However, contrary to RANSAC, our method is deterministic.


\textbf{Previous work.} This work is an extension of our previous conference paper~\cite{parrabustos15-1} which proposed GORE for 3 DoF rotational registration. Here, we propose a novel broader GORE algorithm for 6 DoF Euclidean registration, which significantly increases the practical usefulness of GORE.

\subsection{Related work: preprocessing for robust estimation}\label{sec:survey}

A priori reducing the data size before conducting optimization is an age-old idea in computer science. In the context of robust estimation problems such as~\eqref{eq:pcreg}, one could first execute RANSAC with a large $\xi$ to remove hopefully most of the outlying data, before performing a more careful optimization (e.g., using BnB) on the remaining data. However, such a heuristic ``pipeline" does not provide any optimality guarantees, and worse, the rough initial culling could unintentionally remove many of the useful data such that a decent result is no longer attainable.

More closely in the area of robust registration from point correspondences, preprocessing methods that quickly reject outliers based on geometric consistency have been proposed. One of the earliest work with this flavor is by Adam et al.~\cite{adam01} who targeted the epipolar geometry estimation problem. Given two images with putative feature correspondences, they seek a rotation on one of the images such that the ``flow vectors" of the correspondences are as aligned as possible. Correspondences that are not aligned are removed as outliers. Adam et al.~showed that their method can significantly lower the outlier rate in the input data. However, their technique is heuristic, thus there are no strict guarantees on the preservation of solution optimality.

Another relevant work is by Sv\"{a}rm et al.~\cite{svarm14}, who proposed a technique for camera localization from 2D-3D correspondences. In their work, the usage of gravitational sensors reduces camera localization to a 3 DoF problem (2D translation and 1D rotation). Their approach also conducts a guaranteed outlier rejection scheme for the 2D-3D point matches, before a globally optimal algorithm is invoked. The underlying geometric consistency check is based on analysing the planar projection of backprojected viewing cones using Minkowski sums. Since our target problems (3 DoF rotational and 6 DoF rigid registration) differ from Sv\"{a}rm et al.'s, the core geometric motivations and operations of our work are vastly different from theirs.

Another previous work~\cite{chin16} investigated the usage of mixed integer linear programming (MILP) to perform GORE on robust linear regression. Apart from differences in intended applications (their method cannot be utilized for rotational and rigid registration), \cite{chin16} also ``outsourced" the calculation of bounds required for GORE to a MILP solver. This contrasts greatly with the present work which studies and exploits the problem geometry for consistency checks.

\subsection{Recent work on point cloud registration}

Note that we focus on correspondence-based registration, which differs from registering ``raw" point clouds~\cite{chen99,aiger08}.

Zhou et al.~\cite{zhou16} conducted robust registration under the outlier removal framework of~\cite{black96}. Their idea is to alleviate the effects of local optima by minimizing a continuation of increasingly good approximations of a robust objective function. The method locally solves each objective function from the previous objective value. Hence, there are no global optimality guarantees.

Petrelli and Di Stefano~\cite{petrelli16} used an heuristic based on the alignment of local reference frames~\cite{tombari10} as a preprocessor of RANSAC. The method relies on a unique local reference frame for 3D features, such that every correspondence induces a local transformation. A suboptimal consensus is obtained by using a 3D Hough voting scheme over a transformed representative point (of the point cloud to be aligned) when applying the induced transformations. Other heuristics based on unique local reference frames could also be applied, e.g.: clustering the induced transformations~\cite{mian10}. Unlike GORE, these heuristics do not guarantee preservation of solution optimality. As observed in~\cite{petrelli16}, its method may fail under the presence of incorrect local frames. 




Albarelli et al.~\cite{albarelli10,albarelli09,rodala12,rodola13} proposed a point cloud registration method based on evolutionary game theory~\cite{rotabulo11}. Their objective is to find the subset of correspondences that maximize average pairwise consistency, defined by a payoff matrix---thus, their notion of "inliers" differs from ours~\eqref{eq:pcreg}. The solution is obtained as an \emph{evolutionary stable state} of an inlier selection process. Their algorithm is stochastic and has no global optimality guarantees, unlike GORE. In Sec.~\ref{sec:albarelli}, we will compare GORE with~\cite{albarelli10} as a representative of a state-of-the-art correspondence-based point cloud registration method.

\section{Theory of GORE}

The robust registration problem~\eqref{eq:pcreg} can be rewritten as
\begin{align}\label{eq:mcs2}
\underset{k \in \cH}{\text{maximize}} \; \; p_k,
\end{align}
where $p_k$ is defined as the maximum objective value of the subproblem $P_k$, for $k = 1,\dots,N$:
\begin{align}\label{eq:fk}
\begin{aligned}
&\underset{\Omega_k,\; \cI_k \subseteq \cH \setminus \{ k \}}{\text{maximize}}
& & \left| \cI_k \right| +1 \\ 
& \text{subject to}
& & \|f(\bx_i \mid \Omega_k)- \by_i\| \leq \xi, \; \forall i \in \cI_k, \\
& & & \|f(\bx_k \mid \Omega_k)- \by_k\|\leq \xi.
\end{aligned}
\tag{$P_k$}
\end{align}
In words, $P_k$ seeks the transformation $f$ that agrees with as many of the data as possible, given that $f$ \emph{must align} $(\bx_k,\by_k)$. Our reformulation~\eqref{eq:mcs2} does not make the original problem~\eqref{eq:pcreg} any easier - its utility derives from showing how an upper bound on the quality $p_k$ of the subproblem $P_k$ allows to identify true outliers.

Let $l\leq |\cI^*|$ be a lower bound for the solution of the original problem~\eqref{eq:pcreg}. Note that any suboptimal solution $\tilde{\cI}$ to~\eqref{eq:pcreg} can give rise to $l$. Let $\hat{p}_k$ be an upper bound on the quality of $P_k$, i.e., $\hat{p}_k \geq p_k$. Given the lower and upper bound values, the following result can be established.

\begin{theorem}\label{theo:reject}
If $\hat{p}_k < l$, then $(\bx_k,\by_k)$ is a true outlier, i.e., $k$ does not exist in the solution $\cI^\ast$ to~\eqref{eq:pcreg}.
\end{theorem}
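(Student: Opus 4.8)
The plan is to argue by contraposition: I will show that if $(\bx_k,\by_k)$ is a true inlier, i.e., $k \in \cI^\ast$, then necessarily $\hat{p}_k \geq l$, which is exactly the negation of the hypothesis. The engine of the argument is a short chain of inequalities linking the upper bound $\hat{p}_k$, the subproblem optimum $p_k$, the global optimum $|\cI^\ast|$, and the lower bound $l$.

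First I would observe that the subproblem $P_k$ in~\eqref{eq:fk} is just a restriction of~\eqref{eq:pcreg} in which the transformation is forced to align $(\bx_k,\by_k)$. The key step is to show that whenever $k \in \cI^\ast$, the globally optimal consensus set $\cI^\ast$, together with its optimal transformation $\Omega^\ast$, furnishes a feasible point for $P_k$. Concretely, I would set $\Omega_k = \Omega^\ast$ and $\cI_k = \cI^\ast \setminus \{k\}$ and check that both constraint families of~\eqref{eq:fk} hold: the residual constraint on every $i \in \cI_k$ holds because these indices lie in $\cI^\ast$ and $\Omega^\ast$ aligns all of $\cI^\ast$, while the separate constraint on the index $k$ holds precisely because $k \in \cI^\ast$. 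This feasible point attains objective value $|\cI_k| + 1 = |\cI^\ast \setminus \{k\}| + 1 = |\cI^\ast|$, so by optimality of $P_k$ we obtain $p_k \geq |\cI^\ast|$.

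With this inequality in hand, the remainder is bookkeeping. Combining the defining property of the upper bound, $\hat{p}_k \geq p_k$, the inequality just established, and the defining property of the lower bound, $l \leq |\cI^\ast|$, yields the chain $\hat{p}_k \geq p_k \geq |\cI^\ast| \geq l$. Hence $\hat{p}_k \geq l$, contradicting the assumption $\hat{p}_k < l$. Therefore $k$ cannot belong to $\cI^\ast$, which establishes the claim.

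There is no genuinely hard step here; the statement is essentially a soundness certificate for the rejection rule. The only point requiring care is the middle step: one must confirm that the ``$+1$'' in the objective of $P_k$ and the split of $\cI^\ast$ into the forced index $k$ and the free set $\cI^\ast \setminus \{k\}$ are accounted for exactly, so that the feasible objective value equals $|\cI^\ast|$ rather than $|\cI^\ast| \pm 1$. Getting this cardinality bookkeeping right is what makes the bound $p_k \geq |\cI^\ast|$ tight enough to drive the contradiction.
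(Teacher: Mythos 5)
Your proof is correct and follows essentially the same route as the paper's: a proof by contradiction resting on the chain $\hat{p}_k \geq p_k \geq |\cI^\ast| \geq l$ when $k \in \cI^\ast$. The only difference is that you explicitly construct the feasible point $(\Omega^\ast, \cI^\ast \setminus \{k\})$ for \eqref{eq:fk} to justify $p_k \geq |\cI^\ast|$, whereas the paper simply asserts $p_k = |\cI^\ast|$ without detail---your version is slightly more careful, and correctly notes that only the inequality (not the equality) is needed.
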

\begin{proof}
The proof is by contradiction. If $k$ is in $\cI^\ast$, then we must have that $p_k = |\cI^\ast|$. However, if we are given that $\hat{p}_k < l$, then $p_k < l \le |\cI^\ast|$, which contradicts the previous condition. Hence, $k$ cannot exist in $\cI^\ast$.
\end{proof}

The primary computation performed by GORE is to obtain $l$ and $\hat{p}_k$ for $k = 1,\dots,N$, such that the test based on Theorem~\ref{theo:reject} can be attempted to reject true outliers. In Sec.~\ref{sec:gore-r}, we describe how to calculate $\hat{p}_k$ efficiently for rotational registration. In Sec.~\ref{sec:gore-main}, a novel algorithm to efficiently calculate $\hat{p}_k$ for 6 DoF rigid registration is described. Apart from the calculations of $\hat{p}_k$, in the following sections, we also present heuristic procedures that rapidly tighten the bounds $l$ and $\hat{p}_k$ as the rejection tests are being iterated over $k$. As a by-product of our GORE algorithms, good suboptimal solutions (of quality $l$) are also produced.

\section{GORE for rotational registration}\label{sec:gore-r}

To promote better flow of the ideas, we first present GORE for rotational registration. The algorithm presented here will form a ``kernel" for the 6 DoF case in Sec.~\ref{sec:gore-main}.

As a specialization of~\eqref{eq:pcreg}, the maximum consensus formulation for rotational registration is 
\begin{align}\label{eq:rotsearch}
\begin{aligned}
&\underset{\bR \in SO(3), \; \cI \subseteq \cH}{\text{maximize}}
& & \left| \cI \right| \\ 
&\text{subject to}
& & \| \bR\bx_i - \by_i\|\leq \xi, \; \forall i \in \cI.
\end{aligned}
\end{align}
It is clear that, since rotational motion is more ``restricted", any correspondence $(\bx_i,\by_i)$ where
\begin{align}\label{eq:normprune}
\left| \| \bx_i \| - \| \by_i \| \right| > \xi
\end{align}
cannot be rotationally aligned and thus does not affect the result. We assume that such data have been removed.

We further rewrite~\eqref{eq:rotsearch} to become
\begin{align}\label{eq:rotsearch2}
\begin{aligned}
&\underset{\bR \in\SO3, \; \cI \subseteq \cH}{\text{maximize}}
& & \left| \cI \right|\\
&\text{subject to}
& & \angle(\bR \bx_i, \by_i) \leq \epsilon_i, \; \forall i \in \cI,
\end{aligned}
\end{align}
where we have changed the error metric from Euclidean to angular. Further, the inlier threshold $\epsilon_i$ (in radians) is now data dependent and is defined as
\begin{align}\label{eq:defineeps}
\epsilon_i = 
\begin{cases}
\arccos (\lambda_i)  & \text{if}\;\;  \lambda_i > -1 \\
\pi & \text{otherwise,} 
\end{cases}
\end{align}
where 
\begin{align}
	\lambda_i := \dfrac{ \|\bx_i\|^2 + \|\by_i\|^2 - \xi^2}{2 \|\bx_i\| \|\by_i\|}.
\end{align}
See Fig.~\ref{fig:eps} for a depiction of the principle behind calculating $\epsilon_i$. By construction, problems~\eqref{eq:rotsearch} and~\eqref{eq:rotsearch2} are equivalent. Further, since the norm of the points do not matter under the angular error, in the rest of Sec.~\ref{sec:gore-r}, we regard all points to have unit norm.

\begin{figure}[ht]\centering
\subfloat[]{\includegraphics[width=.54\linewidth]{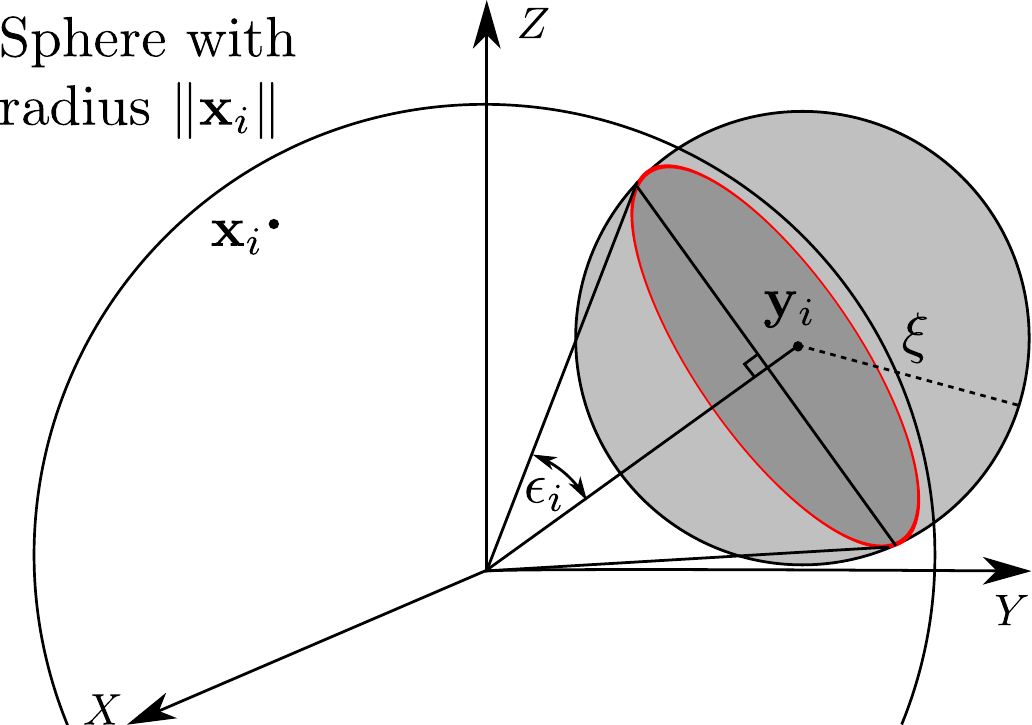}} \hfill
\subfloat[]{\includegraphics[width=.34\linewidth]{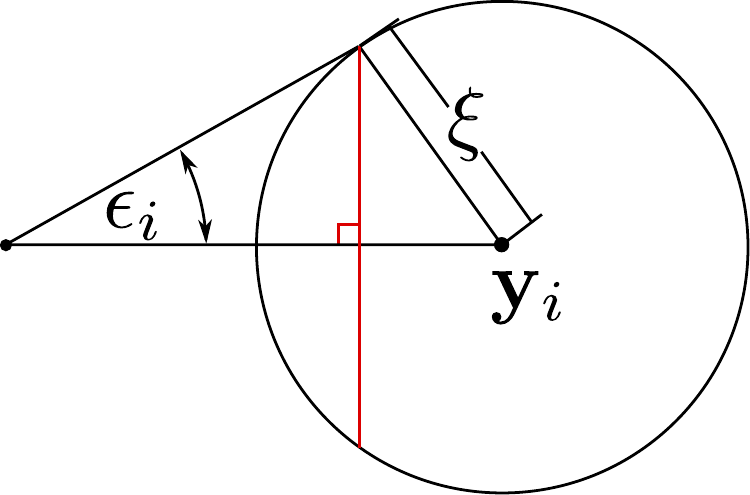}}
\caption{Relating the Euclidean and the angular error. (a) A solid ball intersects the surface of a sphere at a spherical patch, which has a circular outline in the sphere (highlighted in red). (b) Side view of the circular outline.}
\label{fig:eps}
\end{figure}

Specializing the subproblem~\ref{eq:fk} to rotational registration~\eqref{eq:rotsearch2}, we obtain
\begin{align}\label{eq:frk}
\begin{aligned}
&\underset{\bR_k\in\SO3,\; \cI_k \subseteq \cH \setminus \{ k \}}{\text{maximize}}
& & \left| \cI_k \right| +1 \\ 
& \text{subject to}
& & \angle(\bR_k \bx_i, \by_i)\leq \epsilon_i, \; \forall i \in \cI_k, \\
& & & \angle(\bR_k \bx_k, \by_k)\leq\epsilon_k.
\end{aligned}
\tag{$P^{rot}_k$}
\end{align} 
Secs.~\ref{sec:upbndrot} and~\ref{sec:stabbing} are devoted to describing an efficient algorithm for calculating an upper bound $\hat{p}_k$ to \ref{eq:frk}. In Sec.~\ref{sec:main-r}, an overall algorithm which encapsulates the bound calculation to conduct GORE will be presented.

\subsection{Efficient algorithm for upper bound}\label{sec:upbndrot}

\begin{figure*}[ht]
\centering
\subfloat[]{\includegraphics[scale=0.42]{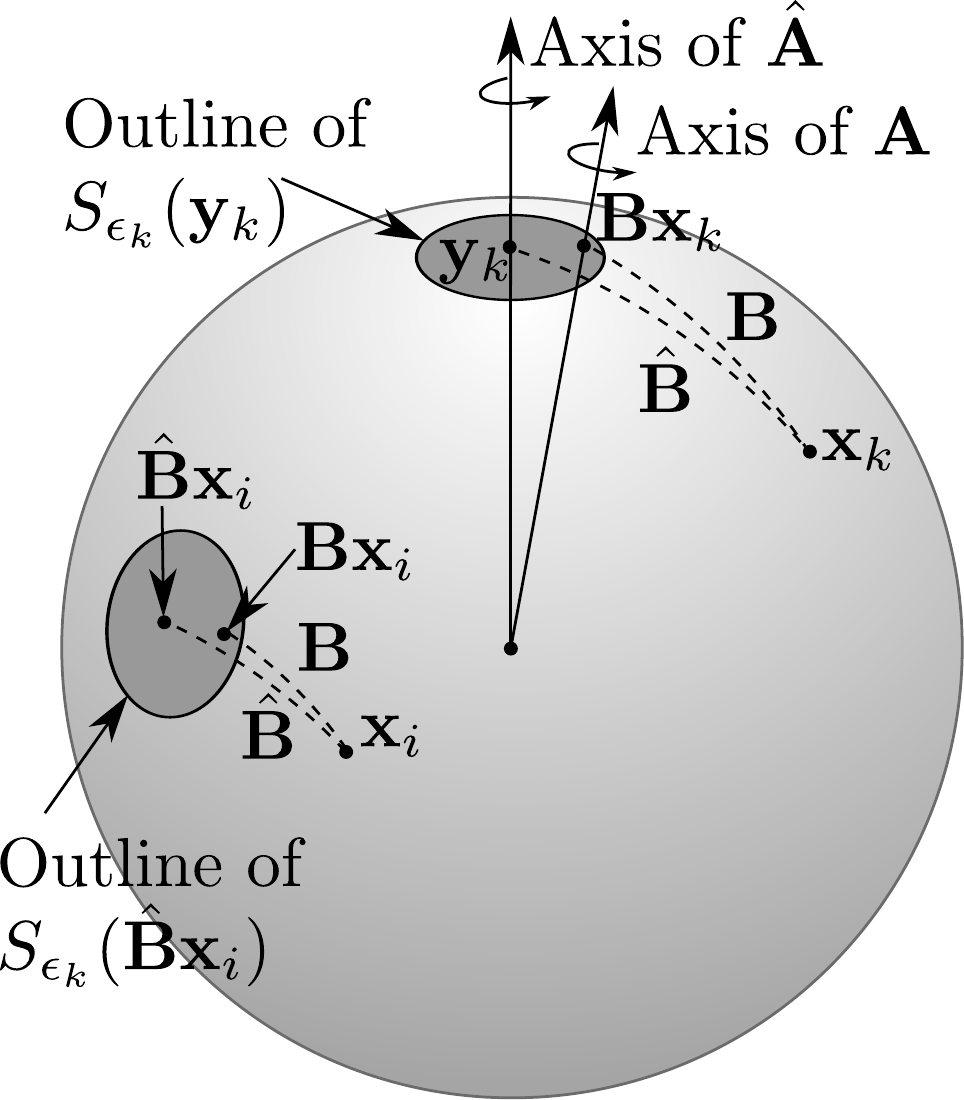}\label{fig:bound_a}}\hspace{.2cm}
\subfloat[]{\includegraphics[scale=0.42]{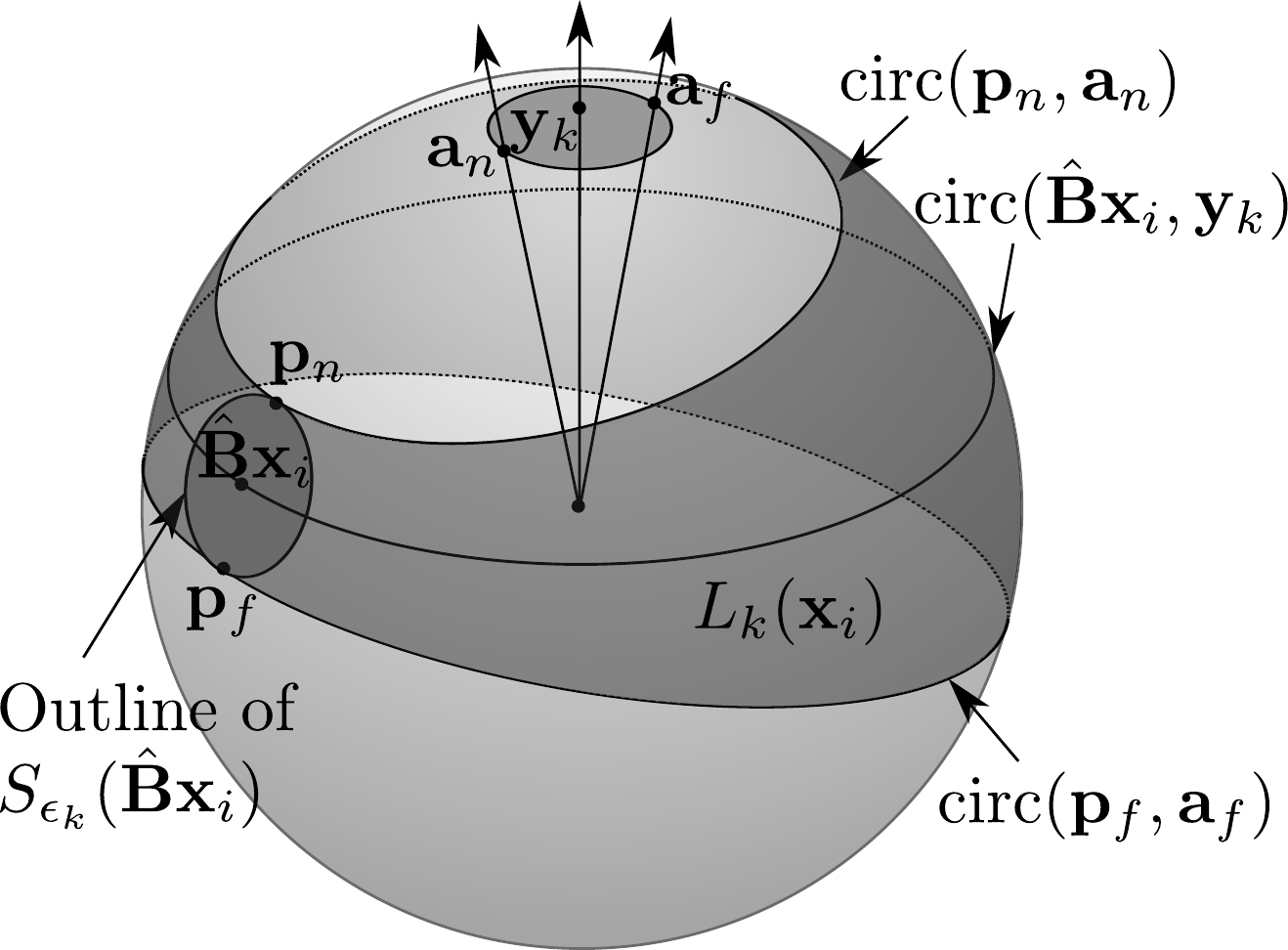}\label{fig:bound_c}}
\subfloat[]{\includegraphics[scale=0.42]{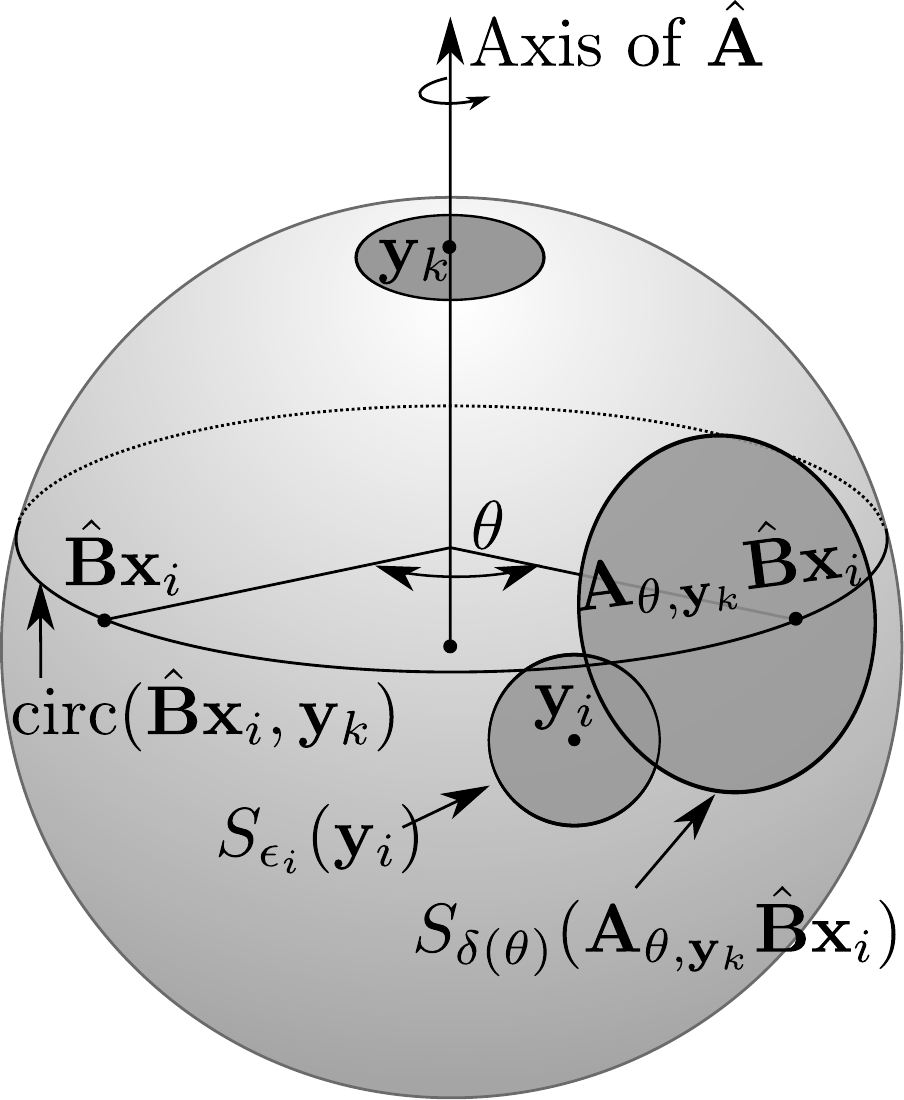}\label{fig:bound_b}}\hfill
\subfloat[]{\includegraphics[scale=0.42]{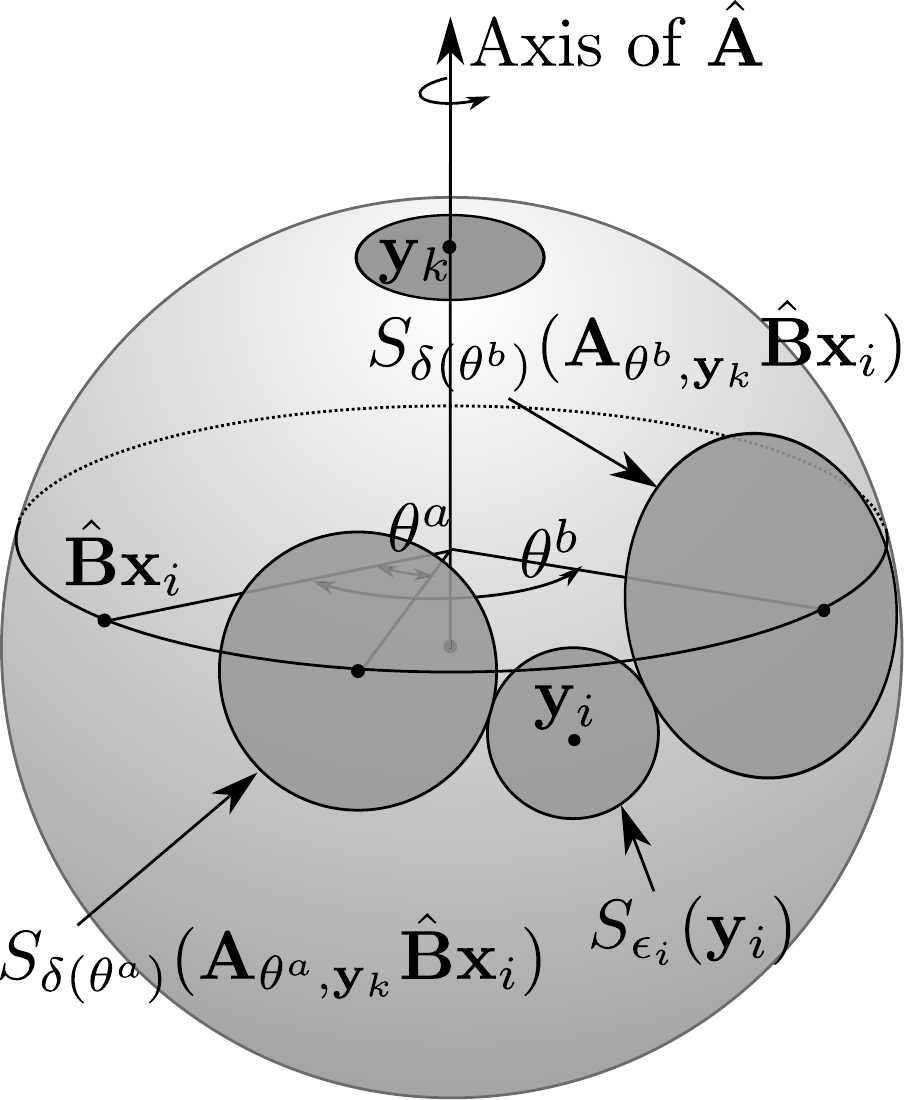}\label{fig:angint}} 
\caption{(a) Interpreting rotation $\bR_k$ according to~\eqref{eq:decompose}. (b) The uncertainty region $L_k(\bx_i)$~\eqref{eq:fea}. (c) This figure shows $S_{\delta(\theta)}(\bA_{\theta,\by_k}\hat{\bB}\bx_i)$ intersecting with $S_{\epsilon_i}(\by_i)$ for a particular $\theta$. (d) We wish to find a bounding interval $\Theta_i = [\theta^a, \theta^b] \subset [-\pi,\pi]$ on $\theta$ for which the intersection is non-empty. 
}
\label{fig:sphere}
\end{figure*}

Recall that any candidate rotation $\bR_k$ to solve \ref{eq:frk} must bring $\bx_k$ within angular distance $\epsilon_k$ from $\by_k$, i.e.,
\begin{align}\label{eq:constraint}
	\angle(\bR_k \bx_k, \by_k)\leq\epsilon_k.
\end{align}
We interpret $\bR_k$ by decomposing it into two rotations
\begin{align}\label{eq:decompose}
	\bR_k = \bA\bB,
\end{align}
where we define $\bB$ as a rotation that honors the condition
\begin{equation}\label{eq:bnd1}
\angle(\bB \bx_k, \by_k) \leq \epsilon_k,
\end{equation}
and $\bA$ as a rotation about axis $\bB \bx_k$. Since $\bA$ leaves $\bB\bx_k$ unchanged, the condition~\eqref{eq:bnd1} and hence constraint~\eqref{eq:constraint} are always satisfied. Fig.~\ref{fig:bound_a} illustrates this interpretation.

Solving \ref{eq:frk} thus amounts to finding the combination of the rotation $\bB$ (a 2 DoF problem, given~\eqref{eq:bnd1}) and the rotation angle of $\bA$ (a 1 DoF problem) that maximize the objective.

\subsubsection{The ideal case}

In the absence of noise and outliers, $\bx_i$ can be aligned exactly with $\by_i$ for all $i$. Based on~\eqref{eq:decompose}, we denote the rotation that solves \ref{eq:frk} under this ideal case as
\begin{align}
	\hat{\bR}_{k} = \hat{\bA} \hat{\bB},
\end{align}
which can be solved as follows (refer also to Fig.~\ref{fig:bound_a}). First, find a rotation $\hat{\bB}$ that aligns $\bx_k$ exactly with $\by_k$, i.e.,
\begin{align}\label{eq:ideal}
	\hat{\bB} \bx_k = \by_k.
\end{align}
For example, take $\hat{\bB}$ as the rotation that maps $\bx_k$ to $\by_k$ with the minimum geodesic motion. To solve for $\hat{\bA}$, take any $i \ne k$, then find the angle $\hat{\theta}$ of rotation about axis $\hat{\bB}\bx_k$ that maps $\hat{\bB}\bx_i$ to $\by_i$. Then,
\begin{align}
\hat{\bA} = \exp(\hat{\theta}[\hat{\bB}\bx_k]_{\times}),
\end{align}
where $\exp(\cdot)$ is the exponential map, and $[\bx]_\times$ returns the cross product matrix of $\bx$. The above steps affirm that rotation estimation requires a minimum of two point matches~\cite{horn87}.


\subsubsection{Uncertainty bound}

In the usual case, we must contend with noise and outliers. The aim of this section is to establish a bound on the position of $\bx_i$ when acted upon by the set of feasible rotations $\bR_k$, i.e., those that satisfy~\eqref{eq:constraint} for~\ref{eq:frk}.

The set of $\bB$ that maintain~\eqref{eq:bnd1} cause $\bB \bx_k$ to lie within a spherical region of angular radius $\epsilon_k$ centered at $\by_k$, i.e.,
\begin{align}
\bB\bx_k \in S_{\epsilon_k}(\by_k)
\end{align}
where
\begin{align}
S_{\epsilon}(\by) := \{ \bx \in \mathbb{R}^3 \mid  \| \bx\| = 1, \angle(\bx,\by) \le \epsilon \}.
\end{align}
Since $\bB \bx_k$ is the rotation axis of $\bA$, the interior of $S_{\epsilon_k}(\by_k)$ is also the set of possible rotation axes for $\bA$. Further, for any $i \neq k$, we can establish a second spherical region 
\begin{align}\label{eq:patch}
\bB\bx_i \in S_{\epsilon_k}(\hat{\bB}\bx_i)
\end{align}
such that the set of feasible $\bB$ cause $\bB \bx_i$ to lie in a spherical region; see Sec.~A in the supp.~material for its derivation.

Fig.~\ref{fig:bound_a} also shows $S_{\epsilon_k}(\by_k)$ and $S_{\epsilon_k}(\hat{\bB}\bx_i)$. The bound on $\bR_k\bx_i$ can thus be analyzed based on these two regions.


To make explicit the dependence of $\bA$ on a rotation axis $\ba$ and angle $\theta$, we now denote it as $\bA_{\theta,\ba}$, where
\begin{align}
	\bA_{\theta,\ba} = \exp(\theta[\ba]_\times).
\end{align}
Let $\bp$ be an arbitrary unit-norm point. Define
\begin{align}
	\text{circ}(\bp,\ba) := \{ \bA_{\theta,\ba}\bp \; | \; \theta \in [-\pi,\pi] \}
\end{align}
as the circle traced by $\bp$ when acted upon by rotation $\bA_{\theta,\ba}$ for all $\theta$ at a particular axis $\ba$.

The set of possible positions of $\bR_k \bx_i$ is then defined by
\begin{align}\label{eq:fea}
	L_k(\bx_i) := \{ \text{circ}(\bp,\ba) \; | \; \bp \in S_{\epsilon_k}(\hat{\bB}\bx_i), \ba \in S_{\epsilon_k}(\by_k) \}.
\end{align}
Fig.~\ref{fig:bound_c} illustrates this feasible region, which exists on the unit sphere. The region is bounded within the two circles
\begin{align}\label{eq:boundary}
	\text{circ}(\bp_n,\ba_n) \; \; \text{and} \; \; \text{circ}(\bp_f,\ba_f),
\end{align}
which are highlighted in Fig.~\ref{fig:bound_c}. Intuitively, $\bp_n$ and $\ba_n$ (resp.~$\bp_f$ and $\ba_f$) are the closest (resp.~farthest) pair of points from $S_{\epsilon_k}(\hat{\bB}\bx_i)$ and $S_{\epsilon_k}(\by_k)$. See Sec.~A in the supp.~material for their mathematical representations.

The derivations above lead to our first result.

\begin{result}\label{res:prune}
For any $i \ne k$, if $S_{\epsilon_i}(\by_i)$ does not intersect with $L_k(\bx_i)$, then $(\bx_i,\by_i)$ cannot be aligned by any rotation $\bR_k$ that satisfies~\eqref{eq:constraint}. The correspondence $(\bx_i,\by_i)$ can then be safely removed without affecting the result of~\ref{eq:frk}.
\end{result}

\subsubsection{Reducing the uncertainty}\label{sec:reduce}

For each point match $(\bx_i,\by_i)$ that survives the pruning by Result~\ref{res:prune}, we reduce its uncertainty bound~\eqref{eq:fea} into an \emph{angular interval}. This reduction is crucial for our efficient upper bound algorithm to be introduced in Sec.~\ref{sec:stabbing}.

Consider rotating a point $\bp$ in $S_{\epsilon_k}(\hat{\bB}\bx_i)$ with $\bA_{\theta,\bu}$ for a fixed angle $\theta$ and an axis $\bu \in S_{\epsilon_k}(\by_k)$. We wish to bound the possible locations of $\bA_{\theta,\bu}\bp$ given the uncertainty in $\bp$ and $\bu$. To this end, the following bound can be established (see Sec.~A in the supp.~material for its derivation)
\begin{align}
\max_{ \substack{  \bp \in S_{\epsilon_k}(\hat{\bB}\bx_i)     \\ \bu \in S_{\epsilon_k}(\by_k)         } } \angle(\bA_{\theta,\bu}\bp, \, \bA_{\theta,\by_k}\hat{\bB}\bx_i) 
\le \delta(\theta),\label{eq:bound2}
\end{align}
where 
\begin{align} \label{eq:delta}
\delta(\theta) := 2|\theta|\sin(\epsilon_k/2) + \epsilon_k.
\end{align}
The inequality~\eqref{eq:bound2} states that for a fixed $\theta$ and for all $\bu \in S_{\epsilon_k}(\by_k)$ and $\bB\bx_i \in S_{\epsilon_k}(\hat{\bB}\bx_i)$, the point $\bA_{\theta,\bu}\bB\bx_i$ lies in
\begin{align}\label{eq:bound3}
	S_{\delta(\theta)}(\bA_{\theta,\by_k}\hat{\bB}\bx_i).
\end{align}
Fig.~\ref{fig:bound_b} depicts this spherical region. Observe that for all $\theta \in [-\pi,\pi]$, the center of the region lies in $\text{circ}(\hat{\bB}\bx_i,\by_k)$. Intuitively, this is a circle of a fixed latitude on the globe when $\by_k$ is the ``North Pole". Further, the spherical region attains the largest angular radius at $\theta = \pm \pi$.

For a pair $(\bx_i,\by_i)$, we wish to obtain a bound $\Theta_i = [\theta^a, \theta^b]$ on the range of $\theta$ that enable $\bA_{\theta,\bu}\bB\bx_i$ to align with $\by_i$, given the uncertainties $\bu \in S_{\epsilon_k}(\by_k)$ and $\bB\bx_i \in S_{\epsilon_k}(\hat{\bB}\bx_i)$. This is analogous to seeking a bound on the $\theta$ that allows $S_{\delta(\theta)}(\bA_{\theta,\by_k}\hat{\bB}\bx_i)$ to ``touch" $S_{\epsilon_i}(\by_i)$; see Fig.~\ref{fig:angint}.

Analytically solving for $\Theta_i$ is non-trivial; Sec.~A (supplementary material) describes a fast approximate algorithm that is able to yield a tight bounding interval $\Theta_i$.

\begin{result}\label{res:interval}
	For any $i \ne k$, if $S_{\epsilon_i}(\by_i)$ intersects with $L_k(\bx_i)$, the range of angles $\theta$ such that $\angle(\bA_{\theta,\bu}\bB\bx_i,\by_i) \le \epsilon_i$ for all $\bu \in S_{\epsilon_k}(\by_k)$ and $\bB\bx_i \in S_{\epsilon_k}(\hat{\bB}\bx_i)$ is bounded by $\Theta_i$, where $\Theta_i$ is as defined in Sec.~\ref{sec:reduce} and computed as in Sec.~A (supp.~material).
\end{result}

\subsection{Interval stabbing}\label{sec:stabbing}

For problem~\ref{eq:frk}, on the input point matches that remain after pruning by the application of Result~\ref{res:prune}, we use Result~\ref{res:interval} to convert them into a set of angular intervals $\{ \Theta_j \}$, where each $\Theta_j = [\theta^a_j,\theta^b_j]$. We aim to find the largest number of point matches that can be aligned by the same rotation angle $\theta$. More formally, we seek the solution
\begin{align}\label{eq:stab}
	O_k = \underset{\theta \in [-\pi,\pi]}{\text{maximize}} \;\; \sum_j \mathbb{I}(\theta \in [\theta^a_j,\theta^b_j])
\end{align}
where $\mathbb{I}(\cdot)$ is an indicator function that returns $1$ if the input predicate is true and $0$ otherwise. This is the well-known \emph{interval stabbing} problem, for which efficient deterministic algorithms exist~\cite[Chap.~10]{berg08}. We take $\hat{p}_k = O_k + 1$ as an upper bound to the value $p_k$ to \ref{eq:frk}.

\begin{theorem}
$O_k + 1$ is an upper bound to the value of \ref{eq:frk}.
\end{theorem}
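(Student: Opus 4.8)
The plan is to exhibit, for the optimal solution of \ref{eq:frk}, a single rotation angle that is contained in at least $p_k-1$ of the intervals $\{\Theta_j\}$, so that the interval stabbing objective satisfies $O_k \ge p_k-1$ and hence $O_k+1 \ge p_k$. Let $\bR_k^\ast$ attain the optimum $p_k = |\cI_k^\ast|+1$ of \ref{eq:frk}, with $\cI_k^\ast$ the associated consensus set. First I would record the decomposition of Sec.~\ref{sec:upbndrot}: since $\bR_k^\ast$ satisfies \eqref{eq:constraint}, choosing $\bB^\ast$ to be any rotation with $\bB^\ast\bx_k = \bR_k^\ast\bx_k$ gives $\angle(\bB^\ast\bx_k,\by_k)\le\epsilon_k$, while $\bA^\ast := \bR_k^\ast(\bB^\ast)^{-1}$ fixes $\bB^\ast\bx_k$ and is therefore of the form $\bA_{\theta^\ast,\bB^\ast\bx_k}$ for some $\theta^\ast \in [-\pi,\pi]$. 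Thus $\bR_k^\ast = \bA_{\theta^\ast,\bB^\ast\bx_k}\bB^\ast$ matches \eqref{eq:decompose}, with rotation axis $\bB^\ast\bx_k \in S_{\epsilon_k}(\by_k)$.

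Next I would show that every $i \in \cI_k^\ast$ survives the pruning of Result~\ref{res:prune} and satisfies $\theta^\ast \in \Theta_i$. For survival, observe that $\bR_k^\ast\bx_i = \bA_{\theta^\ast,\bB^\ast\bx_k}(\bB^\ast\bx_i)$ with $\bB^\ast\bx_i \in S_{\epsilon_k}(\hat{\bB}\bx_i)$ by \eqref{eq:patch} and axis $\bB^\ast\bx_k \in S_{\epsilon_k}(\by_k)$, so $\bR_k^\ast\bx_i \in L_k(\bx_i)$ by \eqref{eq:fea}; since $i \in \cI_k^\ast$ also yields $\bR_k^\ast\bx_i \in S_{\epsilon_i}(\by_i)$, the regions $L_k(\bx_i)$ and $S_{\epsilon_i}(\by_i)$ share this point and hence intersect, so by Result~\ref{res:prune} the index $i$ is retained and an interval $\Theta_i$ is defined. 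For membership of the angle, I would invoke the uncertainty bound \eqref{eq:bound2}--\eqref{eq:bound3} at the fixed angle $\theta^\ast$ with $\bu = \bB^\ast\bx_k$ and $\bB\bx_i = \bB^\ast\bx_i$, which places $\bR_k^\ast\bx_i$ inside $S_{\delta(\theta^\ast)}(\bA_{\theta^\ast,\by_k}\hat{\bB}\bx_i)$. Because this same point lies in $S_{\epsilon_i}(\by_i)$, the region $S_{\delta(\theta^\ast)}(\bA_{\theta^\ast,\by_k}\hat{\bB}\bx_i)$ ``touches'' $S_{\epsilon_i}(\by_i)$, which is precisely the condition characterising $\Theta_i$; by Result~\ref{res:interval} this forces $\theta^\ast \in \Theta_i$.

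Combining the two facts, the single angle $\theta^\ast$ lies in $\Theta_i$ for all $|\cI_k^\ast|$ indices of $\cI_k^\ast$, so $\sum_j \mathbb{I}(\theta^\ast \in \Theta_j) \ge |\cI_k^\ast|$, and by the definition \eqref{eq:stab} of $O_k$ as the maximum stabbing count we get $O_k \ge |\cI_k^\ast| = p_k-1$, i.e.\ $O_k+1 \ge p_k$. The main obstacle is the second claim of the middle paragraph: it depends on $\Theta_i$ being a genuine \emph{over}-approximation of the feasible angles, so that the containment runs from ``some feasible $\bu,\bB\bx_i$ align $(\bx_i,\by_i)$ at angle $\theta$'' into $\Theta_i$. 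This is exactly what the worst-case radius $\delta(\theta)$ in \eqref{eq:bound2} secures, since it bounds \emph{all} feasible positions of $\bR_k\bx_i$ and thereby makes ``touching'' a necessary condition for alignment; I would therefore take care that the interval produced by the approximate routine of Sec.~A is a superset of the true touching range, as only that direction keeps $O_k+1$ an upper rather than a lower bound on $p_k$.
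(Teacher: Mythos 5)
Your proposal is correct and is essentially the paper's own argument made explicit: the paper's proof is a two-line appeal to Result~\ref{res:interval} (each $\Theta_j$ over-estimates the feasible range of angles, hence the maximum stab count plus one dominates the maximum alignable set), and your construction---decomposing the optimizer $\bR_k^\ast$ as $\bA_{\theta^\ast,\bB^\ast\bx_k}\bB^\ast$, checking survival of the Result~\ref{res:prune} pruning, and showing the single angle $\theta^\ast$ stabs every inlier's interval---is precisely that appeal unrolled, with the correct emphasis in your closing paragraph on the over-approximation direction being what makes $O_k+1$ an upper rather than lower bound. One small caution: $\bB^\ast$ should be taken from the canonical family for which \eqref{eq:patch} is actually derived in the supplementary material (e.g., the minimum-geodesic rotation carrying $\bx_k$ to $\bR_k^\ast\bx_k$) rather than \emph{any} rotation with $\bB^\ast\bx_k=\bR_k^\ast\bx_k$, since for an arbitrary such rotation the containment $\bB^\ast\bx_i\in S_{\epsilon_k}(\hat{\bB}\bx_i)$ can fail; the discrepancy is a rotation about the axis $\bB^\ast\bx_k$, which your decomposition absorbs into $\bA_{\theta^\ast,\bB^\ast\bx_k}$ without loss of generality, so the argument stands after this one-word repair.
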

\begin{proof}
	By Result~\ref{res:interval}, each interval $\Theta_j$ is an over-estimation of the range of angles of rotation $\bA_{\theta,\bu}$ that permit the associated point match to be aligned. The number $O_k + 1$ must thus be greater than or equal to the maximum number of point matches that can be aligned under problem \ref{eq:frk}.
\end{proof}

As a by-product of interval stabbing, we derive
\begin{align}\label{eq:suboptrot}
	\tilde{\bR}_k = \bA_{\tilde{\theta},\hat{\bB}\bx_k}\hat{\bB},
\end{align}
where $\tilde{\theta}$ is an angle that globally solves~\eqref{eq:stab}. We can take $\tilde{\bR}_k$ as an approximate solution to the rotational registration problem~\eqref{eq:rotsearch2}. Aligning the input data with $\tilde{\bR}_k$ thus provides a lower bound $l$ to the problem~\eqref{eq:rotsearch2}.

\subsection{Main algorithm}\label{sec:main-r}

Algorithm~\ref{alg:gore-r} summarizes our GORE method for rotational registration~\eqref{eq:rotsearch2}. Given a set of input point matches $\cH$, our method iterates over each point match $(\bx_k,\by_k)$ and performs two operations: seek an improved lower bound $l$ to problem~\eqref{eq:rotsearch2} and an upper bound {$\hat{p}_k$} to subproblem \ref{eq:frk}; both steps are conducted simultaneously using our techniques in Secs.~\ref{sec:upbndrot} and~\ref{sec:stabbing}. Both values are then compared to try to reject the current point match as a true outlier. The output is a reduced set of point matches $\cH^\prime \subseteq \cH$ guaranteed to include the globally optimal solution $\cI^*$ to~\eqref{eq:rotsearch2}.

\begin{algorithm}[t]
	\begin{algorithmic}[1]
		\REQUIRE Point correspondences $\{(\bx_i,\by_i)\}^{N}_{i=1}$ (points are assumed to have unit norm), inlier thresholds  $\{\epsilon_i\}_{i=1}^N$.
		\STATE $\cH \leftarrow \{1,2,\dots,N\}$.
		\STATE $\cH' \leftarrow \cH$, $\mathcal{O} \leftarrow \cH$, $\mathcal{V} \leftarrow \emptyset$, and $l \leftarrow 0$.
		\FORALL{$k \in \mathcal{O}$}\label{gore-r:step:loop}
		\STATE $\mathcal{V} \leftarrow \mathcal{V} \cup \{k\}$.
		\STATE Compute upper bound $\hat{p}_k$ and suboptimal rotation $\tilde{\bR}_k$ (Secs.~\ref{sec:upbndrot} and~\ref{sec:stabbing}) for problem \ref{eq:frk} on data indexed by $\cH^\prime$.\label{gore-r:step:pk}
		\STATE $\mathcal{C}_k \leftarrow \{ i \in \cH^\prime  \mid \| f(\bx_i \mid \tilde{\bR}_k ) - \by_i\| \le \epsilon_i \}$.
		\STATE $l_k \leftarrow |\mathcal{C}_k|$.
		\IF{$l_k>l$} 
		\STATE $l \leftarrow l_k$, $\tilde{\bR}  \leftarrow \tilde{\bR}_k$.
		\STATE $\mathcal{O} \leftarrow \cH^\prime \setminus \mathcal{C}_k$.
		\ENDIF
		\IF{$\hat{p}_k<l$}
		\STATE $\cH' \leftarrow \cH' \setminus \{k\}$.
		\ENDIF
		\STATE $\mathcal{O} \leftarrow \mathcal{O} \setminus \mathcal{V}$.
		\ENDFOR
		\RETURN $ \cH^\prime$ and $\tilde{\bR}$.
	\end{algorithmic}
	\caption{GORE for rotational registration~\eqref{eq:rotsearch2}.}
	\label{alg:gore-r}
\end{algorithm}

The worst case time complexity can be established as follows: for each $k$, the bounding interval $\Theta_i$ for each $i \ne k$ is obtained in constant time. Given $N$ intervals, the stabbing problem~\eqref{eq:stab} can be solved in $\mathcal{O}(N\log N)$ time~\cite[Chap.~10]{berg08}. Thus, Line~\ref{gore-r:step:pk} in Algorithm~\ref{alg:gore-r} takes $\mathcal{O}(N\log N)$ time. In the worst case, Line~\ref{gore-r:step:pk} is performed $N$ times, and Algorithm~\ref{alg:gore-r} thus consumes $\mathcal{O}(N^2\log N)$ time. Overall, Algorithm~\ref{alg:gore-r} contains only very simple geometric operations. In the next subsection, we demonstrate the extreme efficiency of the algorithm in processing large input data sizes for robust rotational registration.

\subsection{Results for rotational registration}\label{sec:rotresults}

We first present experimental results for Algorithm~\ref{alg:gore-r}. Only synthetic data was used for the experiment here, since
\begin{itemize}
\item GORE for rotational alignment will primarily be used as a sub-routine in the more general 6 DoF GORE algorithm in Sec.~\ref{sec:gore-main}. A synthetic data experiment enables a more comprehensive analysis of the performance of Algorithm~\ref{alg:gore-r} as a ``black box".
\item In real-life applications such as surveying and robotics, the point clouds usually differ by more than a 3 DoF rotation. It is thus more cogent to test on real data for the 6 DoF GORE algorithm. Comprehensive evaluation on real data will be given in Sec.~\ref{sec:results}.
\end{itemize}
All techniques/algorithms used in this section were implemented in C++. Experiments were conducted on a standard PC with a 2.50GHz CPU. Our implementations are also provided in the supplementary material.

\subsubsection{Data generation and experimental setting}\label{sec:synth}

A data instance for~\eqref{eq:rotsearch} was generated as follows: $N$ points in a solid ball of radius $100$ were randomly produced to obtain set $\cX$. Set $\cX$ was randomly rotated to produce set $\cY$, which was then added with uniformly distributed noise on a sphere centred at the origin with radius $\xi = 0.5$. We ensured that all correspondences $(\bx_i,\by_i)$ satisfy
\begin{align}
\left| \| \bx_i \| - \| \by_i \| \right| \le \xi
\end{align}
such that no data can be pruned by simply comparing norms; see~\eqref{eq:normprune}. For a given outlier rate $\eta$, $\eta N$ point correspondences $(\bx_i,\by_i)$ were randomly chosen from $(\cX, \cY)$ and the $\bx_i$ was randomly rotated to produce outliers. Each data instance was then converted to the equivalent form~\eqref{eq:rotsearch2}.

In our experiments, $N=\{100,250,500\}$ and $\eta=\{0,0.025,\ldots, 0.95\}$ were used. For each $(N,\eta)$ combination, $1000$ data instances were generated. The following approaches/pipelines were run on each instance:
\begin{itemize}
	\item \textbf{RANSAC}: A confidence level of $\rho = 0.99$ was used for the stopping criterion~\cite{fischler81}. For each data instance, median runtime over $100$ runs were taken.
	\item \textbf{GORE}: Algorithm~\ref{alg:gore-r}. No particular ordering for the data was conducted beyond the order of generation.
	\item \textbf{BnB}: A simplification of the method of~\cite{parrabustos16_pami} by inputting the point correspondences instead of the raw point clouds $\cX$ and $\cY$. Discussing the BnB algorithm is beyond the scope of this paper. See~\cite{parrabustos16_pami} for details.
	\item \textbf{GORE+BnB}: Data remaining after GORE $\cH^\prime$ was fed to BnB. The lower bound of BnB was also initialized as the value of $l$ at the termination of Algorithm~\ref{alg:gore-r}.
	\item \textbf{GORE+RANSAC}: Data remaining after GORE $\cH^\prime$ was fed to RANSAC.
	\item \textbf{RGORE+BnB}: Same as {GORE+BnB}, but the initial value of $l$ in Algorithm~\ref{alg:gore-r} for GORE was obtained by first running RANSAC to yield a suboptimal result.
	\item \textbf{GORE+aBnB}: Same as {GORE+BnB}, but all the original data was given to BnB (GORE was only used to initialize the lower bound of BnB).
\end{itemize}

\subsubsection{Runtime comparisons}

\begin{figure*}
\scriptsize
\begin{tabularx}{\textwidth}{C{1} C{1} C{1}}
$N=100$ & $N=250$& $N=500$\\
\hline
\includegraphics[width=.32\textwidth]{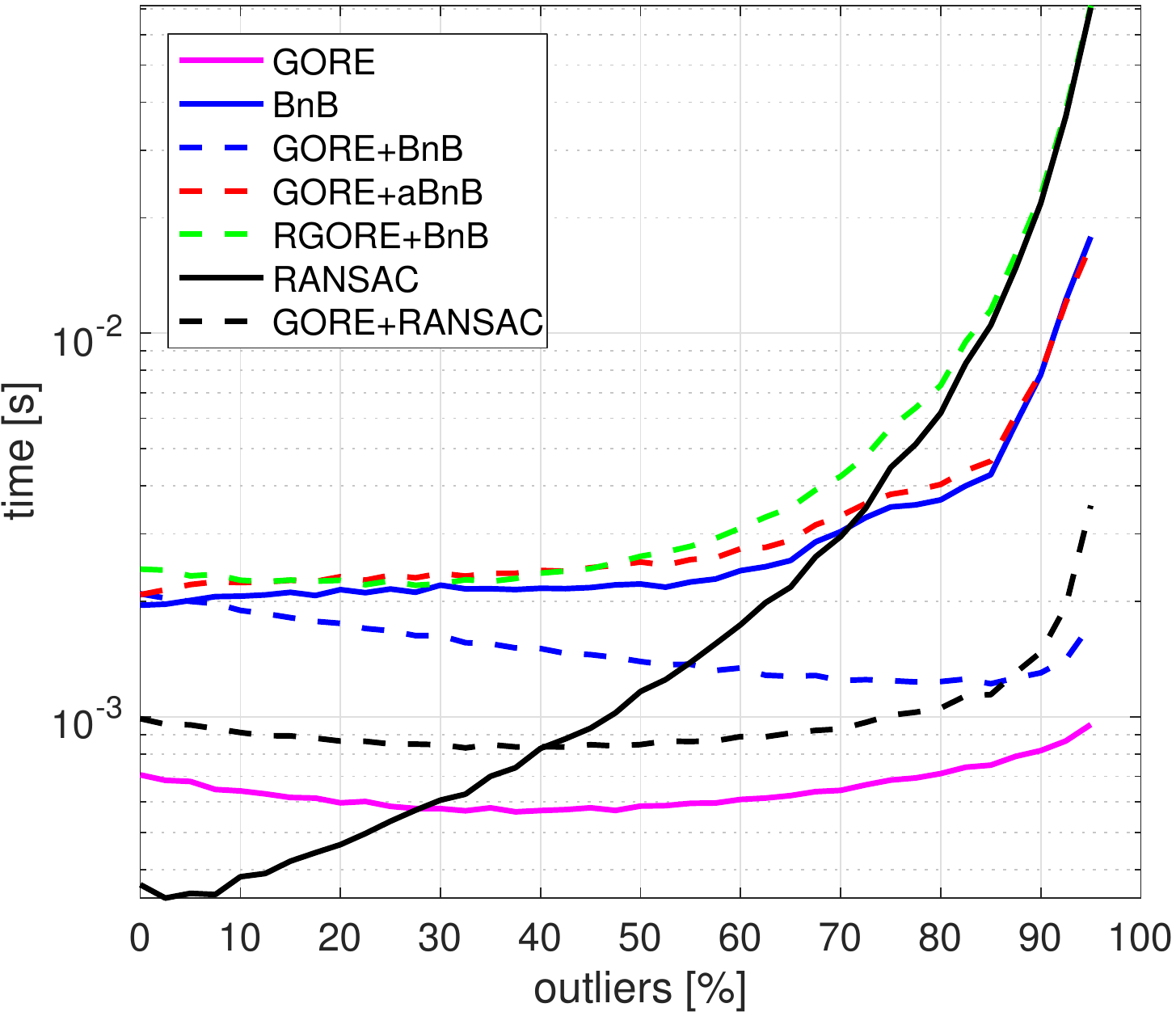}&
\includegraphics[width=.32\textwidth]{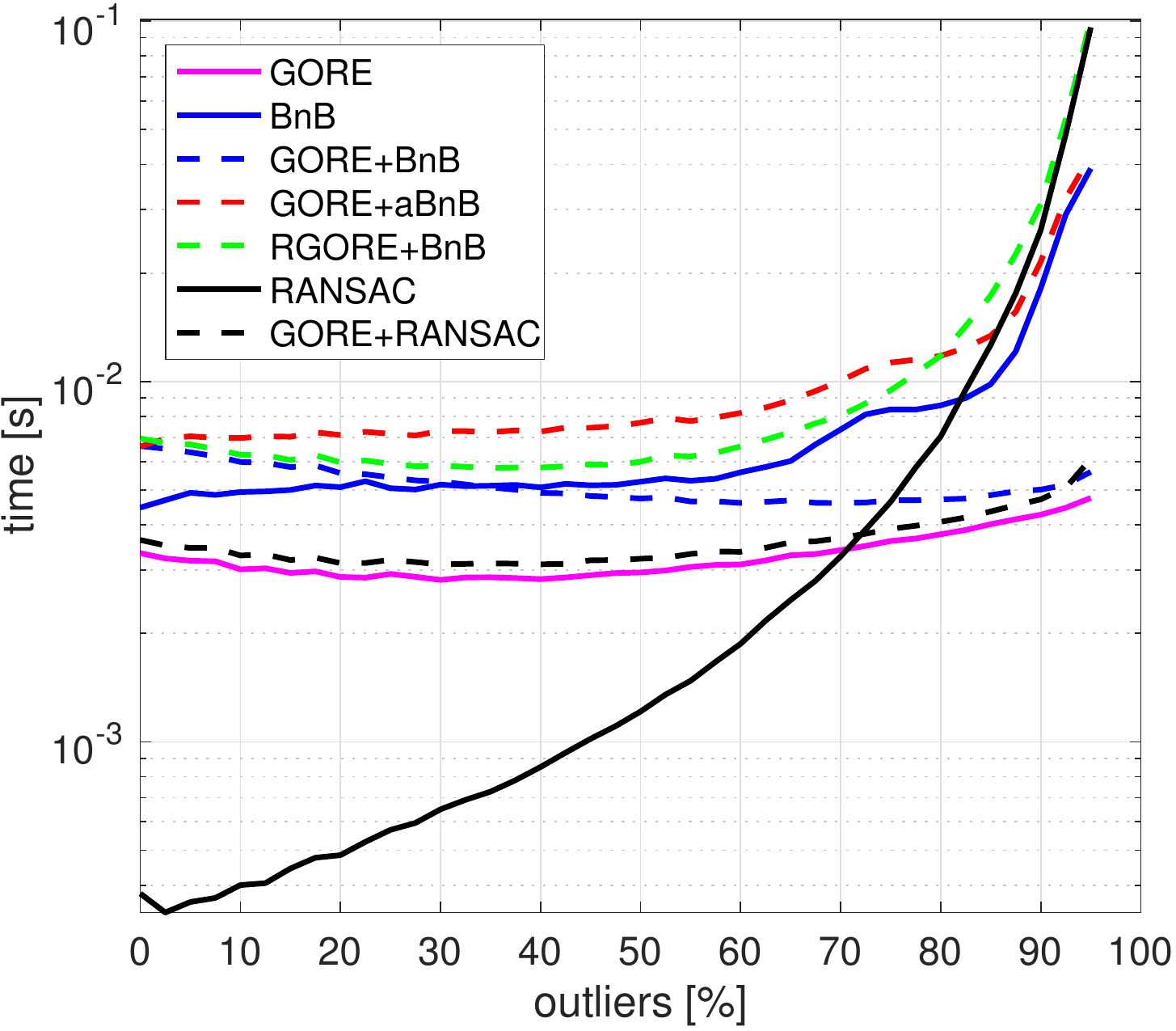}&
\includegraphics[width=.33\textwidth]{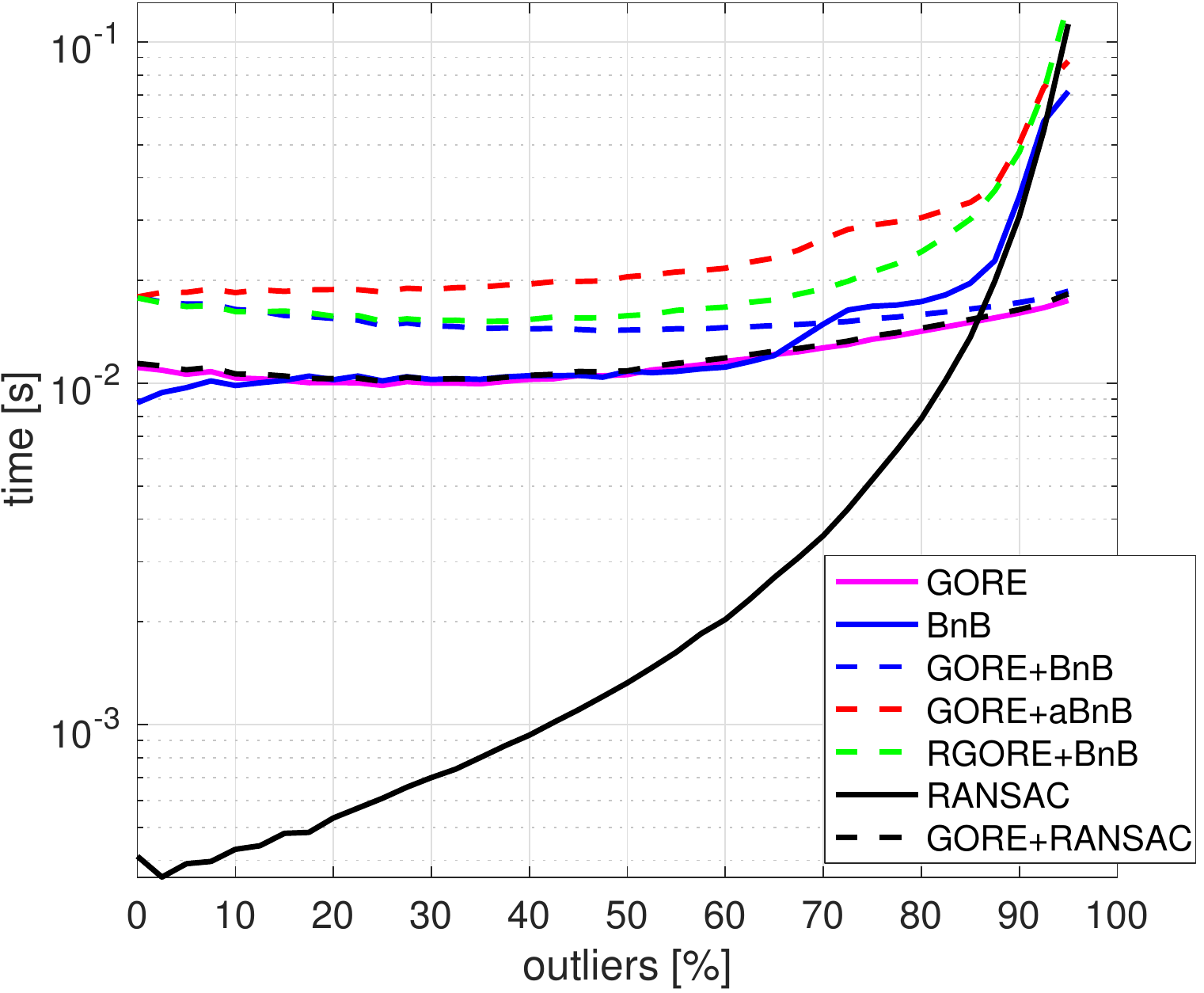}\\
\includegraphics[width=.33\textwidth]{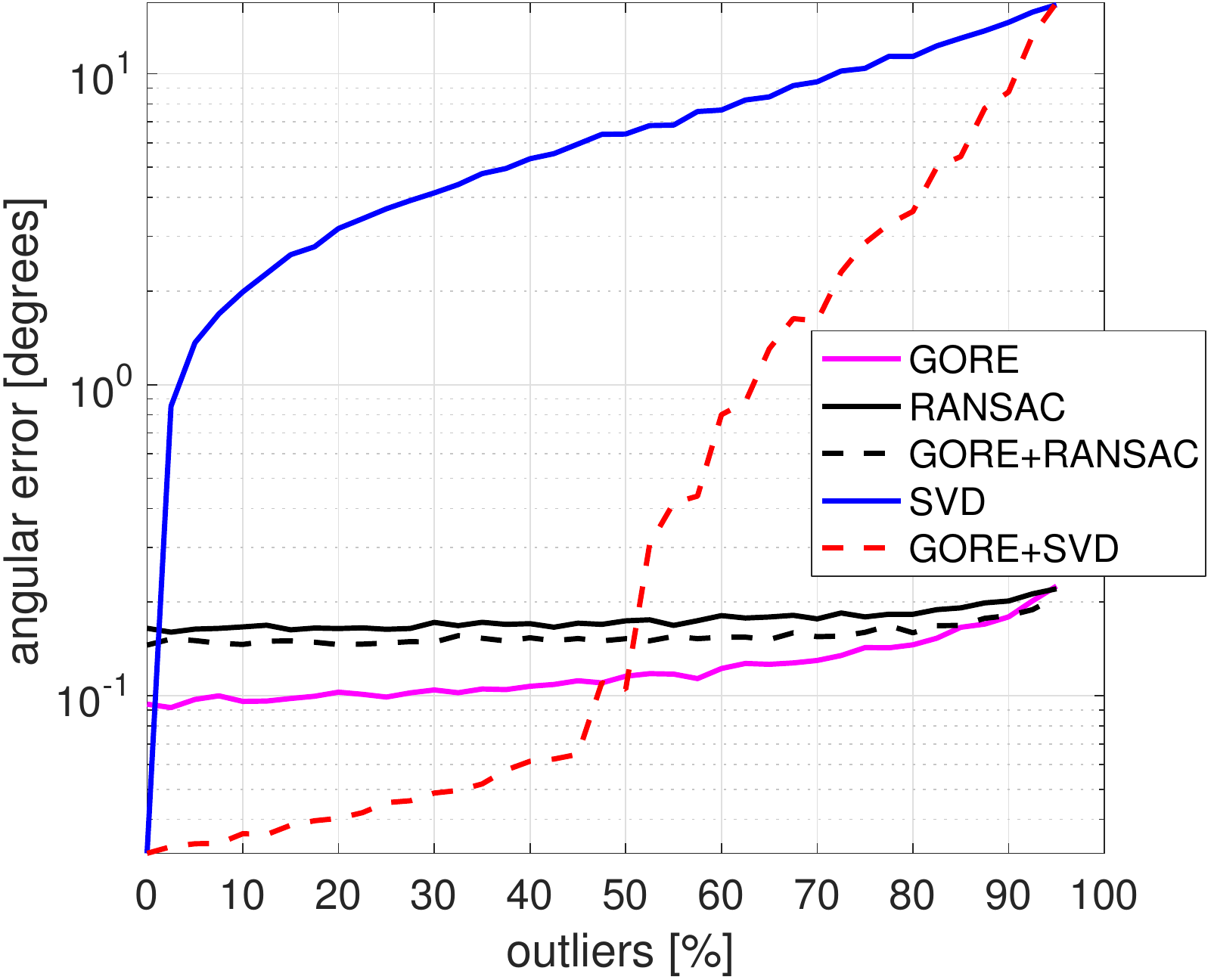}&
\includegraphics[width=.29\textwidth]{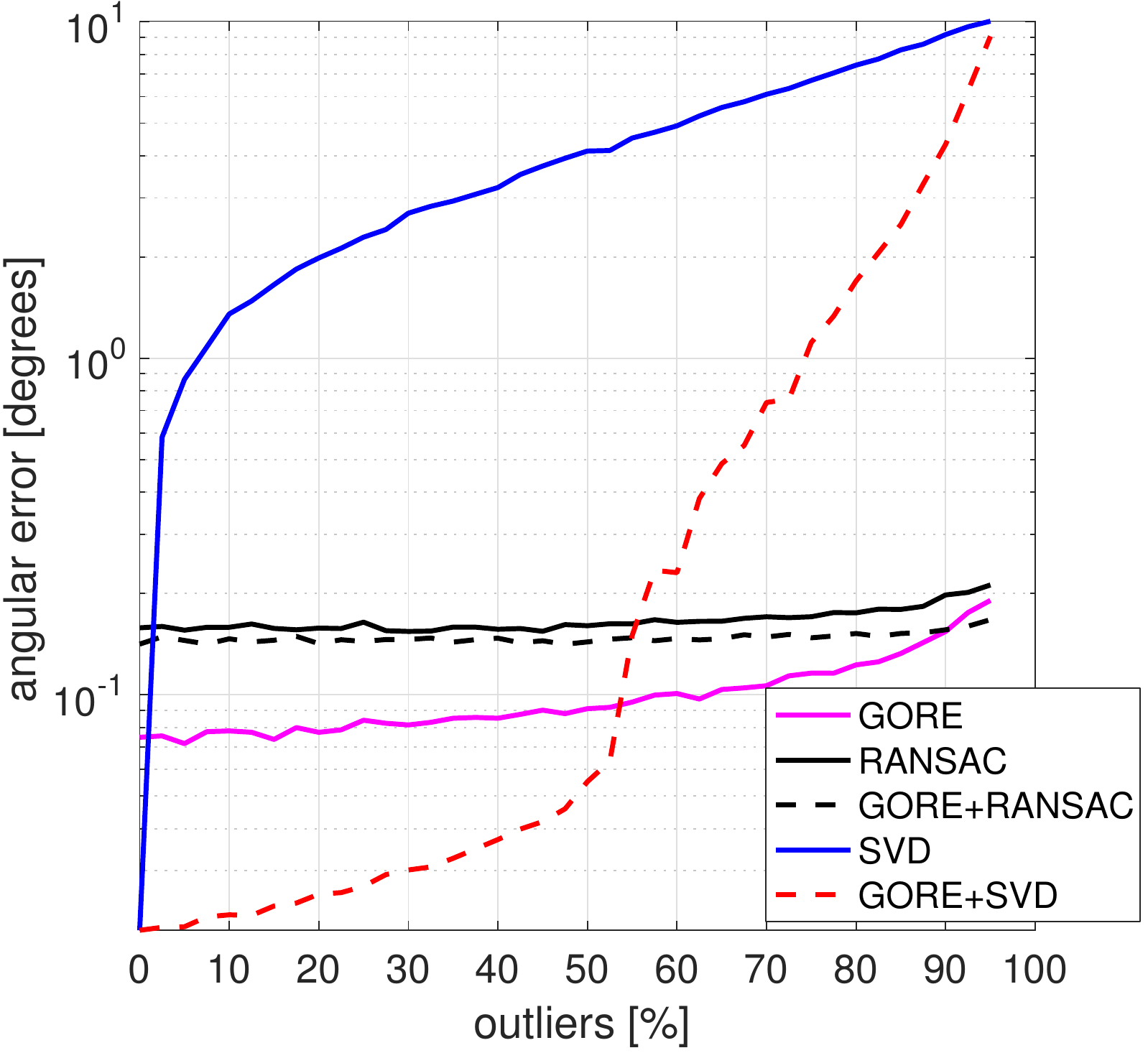}&
\includegraphics[width=.33\textwidth]{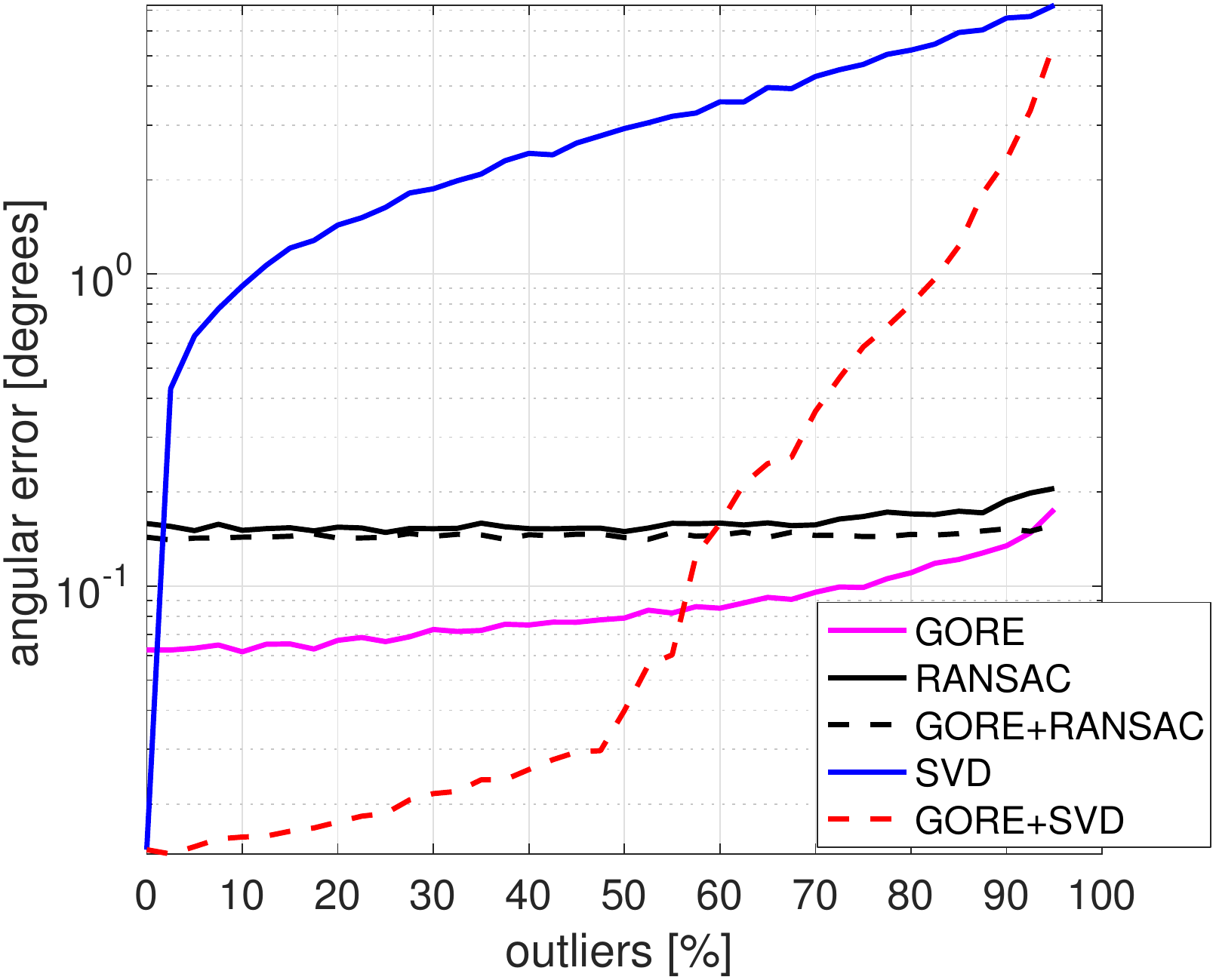}\\
\end{tabularx}
\caption{Results of rotation search on synthetic data. Top row: Runtimes of different optimization pipelines under different data size $N$ and outlier rates. Bottom row: Angular errors of estimated rotations w.r.t.~the globally optimal rotation.}
\label{fig:synthetic}
\end{figure*}

The first row of Fig.~\ref{fig:synthetic} shows the median \emph{total} runtime over all data instances for the methods. While RANSAC was faster than GORE at low outlier rates, as the outlier rate increased, the runtime of RANSAC increased exponentially. In contrast, the runtime of GORE grew at a much lower rate. The runtime of BnB also grew rapidly at higher outlier rates. Observe that at very high outlier rates ($\eta>80\%$), RANSAC consumed at least as much time as BnB.

For all $N$, the comparative trends between BnB versus GORE+BnB, and RANSAC versus GORE+RANSAC, were similar. At low outlier rates, preprocessing with GORE did not ``payoff" since there were few outliers to reject. However, as the outlier rate increased, the benefits of preprocessing with GORE steadily became obvious, i.e., we observe crossing of the curves of pipelines that preprocess with GORE and pipelines that do not. For all $N$, the performance gain is significant for $\ge 90\%$ outliers. As we will show in Sec.~\ref{sec:results}, outlier rates greater than $95\%$ are very common in real point clouds. We remind readers that in this experiment, the data were generated in such a way that the correspondences cannot be rejected by comparing norms, thus the outlier percentages displayed are ``actual".

The results of RGORE+BnB at low outlier rates show that initializing GORE with RANSAC only marginally reduced the total runtime. However, at high outlier rates, the total runtime increased dramatically following the exponentially slowing down of RANSAC.

The trend of BnB shows clearly that the dominating factor in speeding up BnB is in reducing the data amount, not in initializing BnB with a good lower bound. Hence, warm starting BnB with the suboptimal result $|\tilde{\cI}|$ of RANSAC will not reduce runtime (not to mention that at high outlier rates, the computation of RANSAC itself is a major burden).

\subsubsection{Evaluation of suboptimal rotation}

Here we provide empirical evidence that, although GORE cannot completely eliminate all outliers, the best suboptimal rotation $\tilde{\bR}_k$ calculated by Algorithm~\ref{alg:gore-r} is actually a good approximate solution. On each data instance generated above, we calculated the error of the best $\tilde{\bR}_k$ to the globally optimal solution $\bR^*$, where the error is measured by
\begin{align}
d\angle(\tilde{\bR}_k, \bR^*)=\|\log(\tilde{\bR}_k (\bR^*)^T)\|_2
\end{align}
with $\log(\cdot)$ the inverse of the exponential map. The distance is interpreted as the minimum geodesic motion between $\tilde{\bR}_k\bp$ and $\bR^*\bp$ where $\bp$ is an arbitrary point~\cite{hartley13}.

The second row in Fig.~\ref{fig:synthetic} shows the error of the $\tilde{\bR}_k$ returned by GORE. In the same diagrams, we show the error of the rotations obtained by RANSAC and GORE+RANSAC. The error of $\tilde{\bR}_k$ remained very low ($\le 0.05\degree$) even for high outlier rates, indicating that GORE is more accurate than RANSAC. This indicates the efficacy of GORE as an approximate rotational registration method. The results also show that a further reduction of RANSAC's error can be achieved by preprocessing with GORE.

As baselines, we obtained the error of the rotations estimated using SVD (least squares)~\cite{arun87} directly on the input data, and on the reduced data $\cH^\prime$ after GORE. As expected, SVD rotation estimation is easily biased by outliers. Also, the non-negligible error of GORE+SVD points to the presence of remaining outliers after GORE.

\section{GORE for Euclidean registration}\label{sec:gore-main}

In the context of 6 DoF rigid registration, the maximum consensus problem~\eqref{eq:pcreg} becomes
\begin{align}\label{eq:rigid}
\begin{aligned}
&\underset{\bT \in SE(3), \; \cI \subseteq \cH}{\text{maximize}}
& & \left| \cI \right| \\ 
&\text{subject to}
& & \| \bR\bx_i + \bt - \by_i\| \leq \xi, \; \forall i \in \cI,
\end{aligned}
\end{align}
where $\bT = (\bR,\bt)$ defines a rigid transformation. Specializing subproblem~\ref{eq:fk} to rigid registration yields
\begin{align}\label{eq:fkrig}
\begin{aligned}
&\underset{\bT_k \in SE(3),\; \cI_k \subseteq \cH \setminus \{ k \}}{\text{maximize}}
& & \left| \cI_k \right| +1 \\ 
& \text{subject to}
& & \| \bR_k\bx_i + \bt_k - \by_i\| \leq \xi, \; \forall i \in \cI_k, \\
& & & \|\bR_k\bx_k + \bt_k - \by_k\|\leq \xi,
\end{aligned}
\tag{$P^{rig}_k$}
\end{align}
where $\bT_k = (\bR_k,\bt_k)$. For the purpose of GORE, we need to compute an upper bound $\hat{p}_k$ to the value $p_k$ of \ref{eq:fkrig}.

\subsection{Efficient calculation of upper bound}\label{sec:6dofbound}

Define
\begin{align}\label{eq:shift}
\bx^{(k)}_i := \bx_i - \bx_k, \;\; \by^{(k)}_i := \by_i - \by_k,
\end{align}
and consider the following rotational registration problem
\begin{align}\label{eq:upbndfk}
\begin{aligned}
&\underset{\bR_k \in SO(3),\;  \cI_k \subseteq \cH \setminus \{ k \} }{\text{maximize}}
& & \left| \cI_k \right| +1 \\ 
& \text{subject to}
& & \|\bR_k \bx^{(k)}_i - \by^{(k)}_i \|\leq 2\xi, \; \forall i \in \cI_k.
\end{aligned}
\tag{$Q_k^{rig}$}
\end{align}
Intuitively, we obtain~\ref{eq:upbndfk} from~\ref{eq:fkrig} by translating the point clouds $\cX$ and $\cY$ such that the reference frame is centered at $(\bx_k,\by_k)$, then doubling the threshold to $2\xi$.

Define $q_k$ as the maximum objective value of problem~\ref{eq:upbndfk}. We first establish the following proposition, which permits the usage of $q_k$ as the required upper bound $\hat{p}_k$ to the value of subproblem~\ref{eq:fkrig}.

\begin{theorem}\label{theo2}
$q_k \ge p_k$, where $p_k$ is the value of~\ref{eq:fkrig}.
\end{theorem}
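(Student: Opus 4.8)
The plan is to exhibit a value-preserving map from feasible solutions of~\ref{eq:fkrig} to feasible solutions of~\ref{eq:upbndfk}. Since $q_k$ is defined as the \emph{maximum} objective over feasible solutions of~\ref{eq:upbndfk}, it suffices to produce a \emph{single} feasible solution of~\ref{eq:upbndfk} whose objective equals $p_k$; this immediately gives $q_k \ge p_k$. Concretely, I would take an optimal solution $(\bR_k,\bt_k,\cI_k)$ of~\ref{eq:fkrig}, so that $p_k = |\cI_k| + 1$, and argue that the \emph{same} rotation $\bR_k$ together with the \emph{same} index set $\cI_k$ is feasible for~\ref{eq:upbndfk}.

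The key step is a purely algebraic cancellation. Using the shifted coordinates~\eqref{eq:shift}, for any $i \in \cI_k$ I would write
\begin{align*}
\bR_k \bx^{(k)}_i - \by^{(k)}_i
&= \bR_k(\bx_i - \bx_k) - (\by_i - \by_k) \\
&= \left(\bR_k\bx_i + \bt_k - \by_i\right) - \left(\bR_k\bx_k + \bt_k - \by_k\right),
\end{align*}
in which the translation $\bt_k$ cancels. This is the crux of the argument: it shows that the rotation-only residual appearing in~\ref{eq:upbndfk} is exactly the difference of two full rigid residuals of~\ref{eq:fkrig}, namely the residual at $i$ and the anchor residual at $k$.

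Applying the triangle inequality and then invoking the two constraints of~\ref{eq:fkrig}---the inlier constraint $\|\bR_k\bx_i + \bt_k - \by_i\| \le \xi$ and the anchor constraint $\|\bR_k\bx_k + \bt_k - \by_k\| \le \xi$---I would bound
\begin{align*}
\|\bR_k \bx^{(k)}_i - \by^{(k)}_i\|
&\le \|\bR_k\bx_i + \bt_k - \by_i\| + \|\bR_k\bx_k + \bt_k - \by_k\| \\
&\le 2\xi
\end{align*}
for every $i \in \cI_k$. Hence $(\bR_k,\cI_k)$ satisfies all constraints of~\ref{eq:upbndfk} with objective value $|\cI_k| + 1 = p_k$, and therefore $q_k \ge p_k$.

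I do not anticipate any serious obstacle, as the argument reduces to triangle-inequality bookkeeping. The only point requiring care---and the conceptual reason the threshold must be \emph{doubled} to $2\xi$ rather than kept at $\xi$---is that the anchor constraint contributes one full $\xi$ of slack, which is precisely what is absorbed once the common translation is eliminated. I would also emphasize that this construction establishes only feasibility, hence the inequality $q_k \ge p_k$ and not equality: $Q_k^{rig}$ is a genuine relaxation, which is exactly what makes $q_k$ a valid upper bound $\hat{p}_k$ for the rejection test of Theorem~\ref{theo:reject}.
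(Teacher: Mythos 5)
Your proof is correct and follows essentially the same route as the paper's (supplementary) argument: re-centre at the anchor pair $(\bx_k,\by_k)$ so the translation cancels, then use the triangle inequality on the inlier residual and the anchor residual to show the optimal $(\bR_k,\cI_k)$ of~\ref{eq:fkrig} is feasible for~\ref{eq:upbndfk} at threshold $2\xi$, whence $q_k \ge |\cI_k|+1 = p_k$. Your closing remark correctly identifies why the doubled threshold is exactly the slack contributed by the anchor constraint, which is the same intuition the paper states when introducing~\ref{eq:upbndfk}.
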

See Sec.~A in the supp.~material for a proof of Theorem~\ref{theo2}. We apply Theorem~\ref{theo2} in a ``two-step" procedure:
\begin{enumerate}
\item First, we convert \ref{eq:upbndfk} to a rotational registration problem of the form~\eqref{eq:rotsearch2} which uses angular errors. Then, we execute Algorithm~\ref{alg:gore-r} to conduct GORE. Let $\cH^\prime_k$ denote the remaining correspondences. We have that $|\cH^\prime_k| + 1 \ge q_k$. Therefore, if $|\cH_k^\prime| + 1 < l$, where $l$ is a lower bound to problem~\eqref{eq:rigid}, then
\begin{align}
p_k \le q_k < l
\end{align}
and we can immediately reject $(\bx_k,\by_k)$ as a true outlier to problem~\eqref{eq:rigid}.
\item If the above rejection is not successful, then on the remaining data $\cH^\prime_k$ (which can be much smaller than the original input $\cH \setminus \{ k\}$), we perform BnB optimization to exactly solve \ref{eq:upbndfk}. Then we compare $q_k$ and $l$ again to attempt the rejection. The convenience of this step is investigated in Sec.~\ref{sec:usageofbnb}.
\end{enumerate}
To conduct the BnB, we again apply the method of~\cite{parrabustos16_pami} with a simple modification to accept point correspondences instead of ``raw" point clouds without correspondences.

Let $\tilde{\bR}_k$ be the solution of~\ref{eq:upbndfk} (either a suboptimal solution output by Algorithm~\ref{alg:gore-r} for \ref{eq:upbndfk}, or a globally optimal solution). A candidate solution to~\eqref{eq:rigid} can be realised as
\begin{align}\label{eq:cand6dof}
\tilde{\bT}_k = (\tilde{\bR}_k, \; \by_k - \tilde{\bR}_k \bx_k),
\end{align}
which can then be used to attempt to improve $l$.

\subsection{Main algorithm}\label{sec:main}

Algorithm~\ref{alg:gore} summarizes GORE for 6 DoF Euclidean registration. The overall operation is largely similar to Algorithm~\ref{alg:gore-r} for the rotational registration case, i.e., given a set of input point correspondences $\cH$, iterate over each correspondence and attempt to reject it by comparing the upper bound $\hat{p}_k$ with the lower bound $l$. Both values are computed/updated with the technique described in Sec.~\ref{sec:6dofbound}. The main differences with Algorithm~\ref{alg:gore-r} are due to the two-step procedure to compute $\hat{p}_k$ (see Sec.~\ref{sec:6dofbound}).

\begin{algorithm}[t]\centering
	\begin{algorithmic}[1]
		\REQUIRE Point correspondences $\{(\bx_i,\by_i)\}^{N}_{i=1}$, threshold $\xi$.
		\STATE $\cH \leftarrow \{1,2,\dots,N\}$.
		\STATE $\cH' \leftarrow \cH$, $\mathcal{O} \leftarrow \cH$, $\mathcal{V} \leftarrow \emptyset$, and $l \leftarrow 0$.
		\FORALL{$k \in \mathcal{O}$}\label{step:loop}
		\STATE $\mathcal{V} \leftarrow \mathcal{V} \cup \{k\}$.
		\STATE $(\cH^\prime_k,\tilde{\bR}_k) \leftarrow$ Output of Alg.~\ref{alg:gore-r} to solve \ref{eq:upbndfk} on $\cH^\prime$.\label{step:prune}
		\IF{$|\cH_k^\prime|+1 < l$}
		    \STATE  $\hat{p}_k \leftarrow |\cH_k^\prime|+1$.
		\ELSE
		    \STATE $(\hat{p}_k, \tilde{\bR}_k) \leftarrow$ Maximized value and maximizer of~\ref{eq:upbndfk} on $\cH_k^\prime$ using BnB. \label{step:bnb}
		\ENDIF
		\STATE $\tilde{\bT}_k \leftarrow (\tilde{\bR}_k,~\by_k - \tilde{\bR}_k \bx_k)$.
		\STATE $\mathcal{C}_k \leftarrow \{ i \in \cH^\prime \mid \| f(\bx_i \mid \tilde{\bT}_k ) - \by_i \| \le \xi \}$.		
		\STATE $l_k \leftarrow |\mathcal{C}_k|$.
		\IF{$l_k>l$} 
		\STATE $l \leftarrow l_k$, $\tilde{\bT} \leftarrow \tilde{\bT}_k$.
		\STATE $\mathcal{O} \leftarrow \cH^\prime \setminus \mathcal{C}_k$.
		\ENDIF
		\IF{$\hat{f}_k<l$}
		\STATE $\cH' \leftarrow \cH' \setminus \{k\}$.
		\ENDIF
		\STATE $\mathcal{O} \leftarrow \mathcal{O} \setminus \mathcal{V}$.
		\ENDFOR
		\RETURN $\cH^\prime$ and $\tilde{\bT}$.
	\end{algorithmic}
	\caption{GORE for 6 DoF Euclidean registration~\eqref{eq:rigid}.}
	\label{alg:gore}
\end{algorithm}

Algorithm~\ref{alg:gore}, however, has exponential worst case time complexity, due to the possible usage of BnB to solve \ref{eq:upbndfk} (Line~\ref{step:bnb}). However, in practice, BnB is efficient since it is run on a subset $\cH^\prime_k$ of the original input $\cH_k$, where usually $|\cH^\prime_k| \ll |\cH^\prime|$ due to the usage of Algorithm~\ref{alg:gore-r} to first prune the correspondences (Line~\ref{step:prune}). In Sec.~\ref{sec:results}, we will demonstrate the high efficiency of Algorithm~\ref{alg:gore} in processing large inputs.

\section{Results for rigid registration}\label{sec:results}

\subsection{Point cloud registration} \label{sec:resultpcr}
\subsubsection{Experimental setup} \label{sec:expsetuppcr}

We tested our GORE algorithms on real data. This experiment was designed as follows. We used scans of objects from four different sources, namely, our own dataset of laser scans of underground mines (specifically \emph{mine-a}, \emph{mine-b} and \emph{mine-c}), the Stanford 3D Scanning Repository~\cite{curless96}, (specifically \emph{bunny}, \emph{armadillo}, \emph{dragon} and \emph{buddha}), Mian's dataset\footnote{\url{http://staffhome.ecm.uwa.edu.au/~00053650/3Dmodeling.html}} (specifically \emph{t-rex}, \emph{parasauro}, \emph{chef} and \emph{chicken}), and remote sensing data from the ISPRS\footnote{\url{http://www2.isprs.org/commissions/comm3/wg4/tests.html}} (specifically \emph{vaihingen-a} and \emph{vaihingen-b}). Scans of two objects per dataset are shown in Fig.~\ref{fig:6dofresults} (scans of remaining objects can be found in the Sec.~B in the supp.~material).


Two partially overlapping scans $\cX$ and $\cY$ were selected for each object (for the ISPRS set, we manually divided the full scan of a scene into two overlapping scans). The average sizes of $\cX$ and $\cY$ per dataset are listed on the first row in Table~\ref{tab:comparison6dof}. $\cX$ and $\cY$ were centred and scaled such that their centroids coincided with the origin and both point sets were contained in the cube $[-50,50]^3$. Point matches between $\cX$ and $\cY$ were obtained using the state-of-the-art ISS3D~\cite{zhong09} detector and the PFH~\cite{rusu08} descriptor, as implemented on Point Cloud Library\footnote{\url{http://pointclouds.org/}}.

The $N$ best keypoint correspondences (sorted based on the $\ell_2$-norm of the PFH descriptors), where $N \in \{500,1000,2000\}$, were retained to create instances of the robust point cloud registration problem~\eqref{eq:rigid}. The ratio of true outliers $\eta$ are listed in Table~\ref{tab:comparison6dof} per each dataset, based on the inlier threshold $\xi$ taken as the average nearest neighbor distance in $\cX$ and $\cY$. Observe that the outlier rates in this problem are extremely high, even reaching $99\%$ in some ($N$, dataset) configurations. For each correspondence set, $10$ different randomized transformations were applied on $\cX$ to produce $10$ data instances for Euclidean registration.

\subsubsection{Qualitative evaluation}\label{sec:pcrqualitative}

Fig.~\ref{fig:6dofresults} shows qualitative results from applying GORE (Algorithm~\ref{alg:gore}) on the data instances with $N = 500$. All of these instances contain  $>90\%$ outliers. GORE terminated within $5$ seconds, except for \emph{vaihingen-b}, for which GORE terminated within $11$ seconds (see Table~1 in Sec.~B of the supp.~material for results on individual instances). Importantly, GORE reduced the input data $\cH$ to be much smaller set $\cH^\prime$ ($< 15\%$ of original size). The approximate solutions $\tilde{\bT}$ output by GORE also gave satisfactory registrations.

GORE even produced satisfactory alignments for the mining dataset. In this dataset, registration instances contain \emph{structured outliers} resulting from the “self-similar” structures of channels.  Moreover, point density vary w.r.t. the distance to the LIDAR scan. To efficiently capturing a mining site, scanning devices are distant located each other. Consequently, overlapping between scans is low.

\begin{figure*}
	\begin{tabularx}{\textwidth}{c|c|C{.9}|C{.8}|C{.7}}
\footnotesize
		& & {\footnotesize Input correspondences $\cH$} & {\footnotesize Remaining data $\cH^\prime$ after GORE} & {\footnotesize Registration using approximate solution $\tilde{\bT}$ from GORE}\\ 
		\hline
		&&&&\\[-.35cm]
		\multirow{2}{*}{\rotatebox{90}{ Stanford}} 
		&\rotatebox[origin=l]{90}{\emph{bunny}}&
		\includegraphics[height=1.7cm]{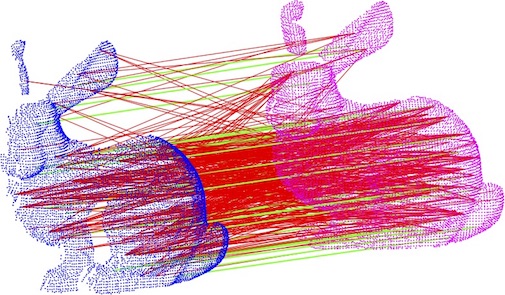} &
		\includegraphics[height=1.7cm]{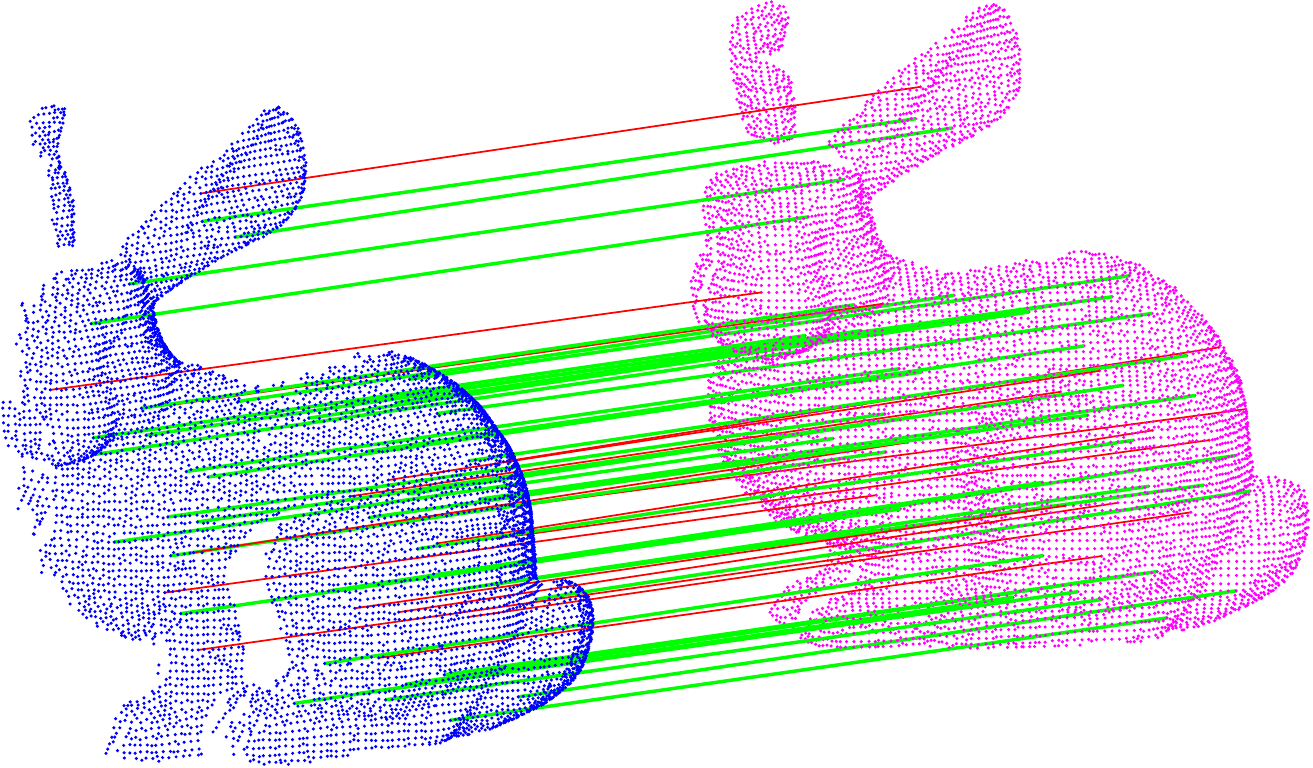} &
		\includegraphics[height=1.7cm]{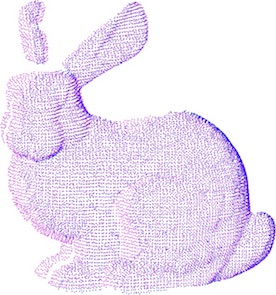} 
		\\[-.1cm]
		&\rotatebox[origin=l]{90}{\emph{dragon}}&
		\includegraphics[height=1.6cm]{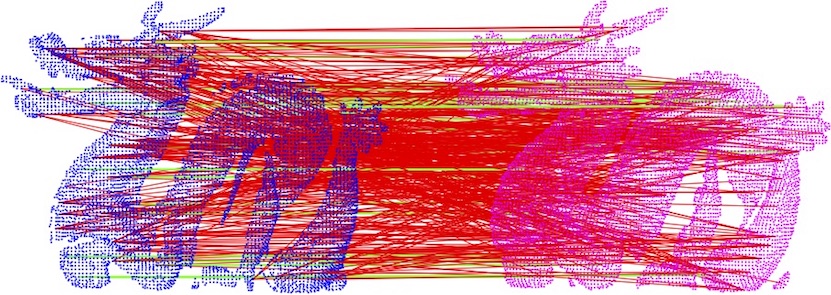}&
		\includegraphics[height=1.6cm]{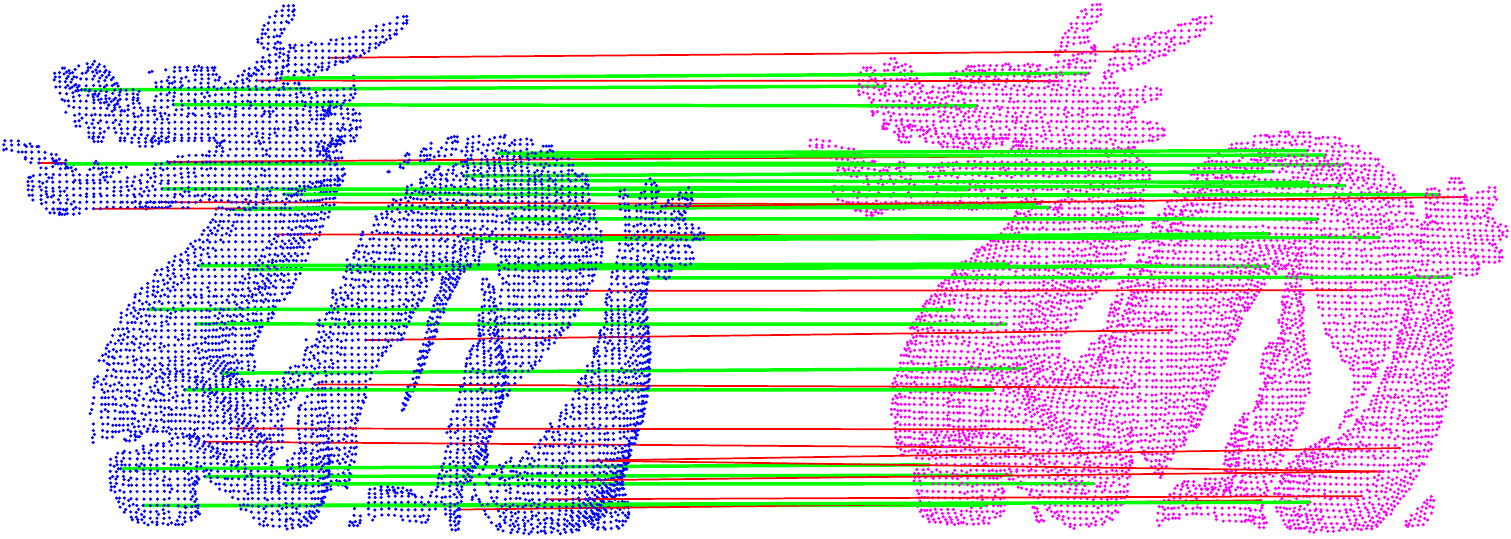}&
		\includegraphics[height=1.6cm]{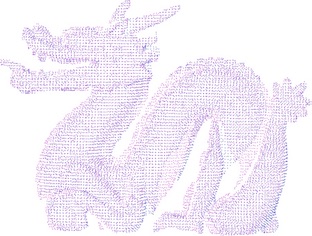}
		\\
		\hline
		&&&&\\[-.35cm]
		\multirow{2}{*}{\rotatebox{90}{ Mian}} 
		&\rotatebox[origin=l]{90}{ \emph{t-rex}}&
		\includegraphics[height=1.5cm]{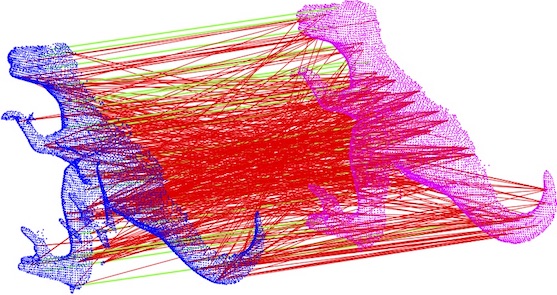}&
		\includegraphics[height=1.5cm]{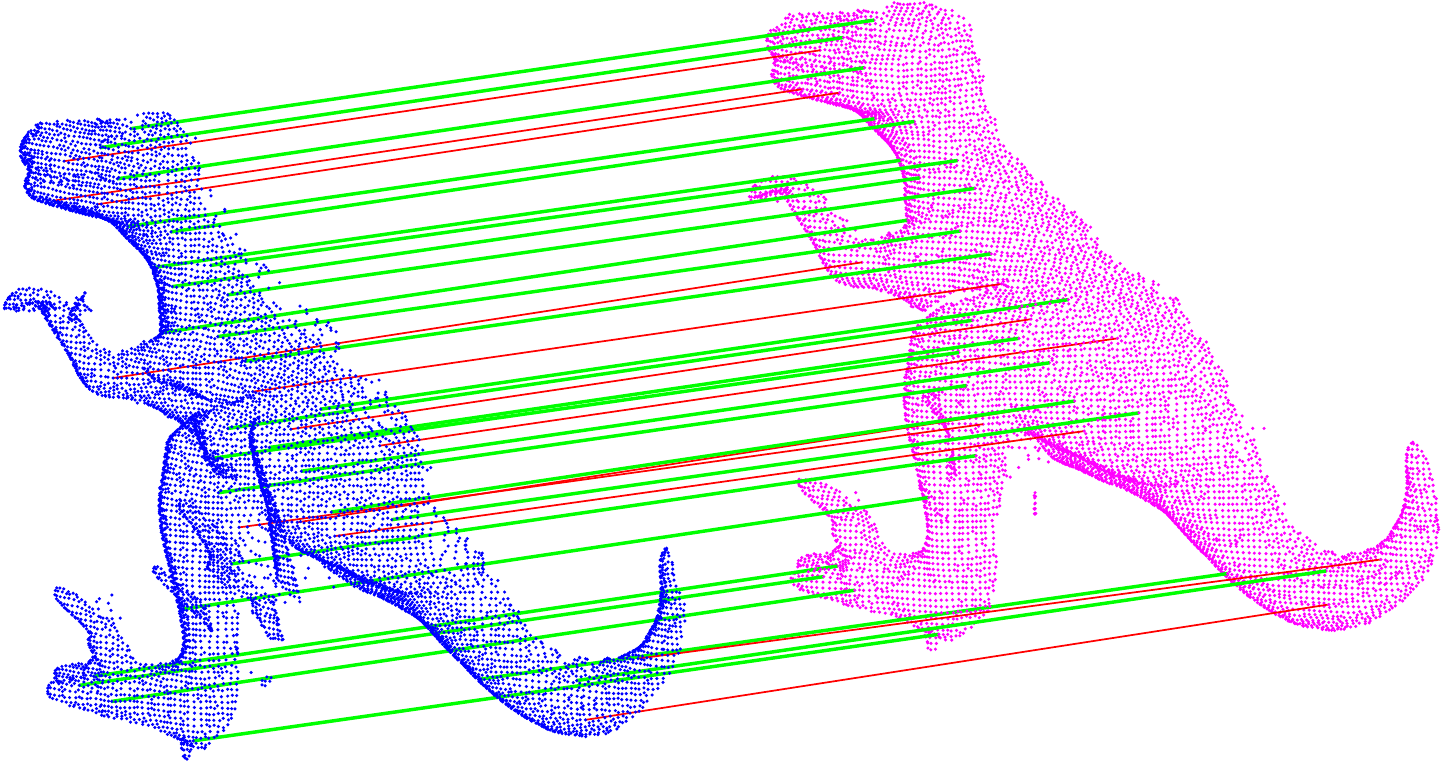}&
		\includegraphics[height=1.5cm]{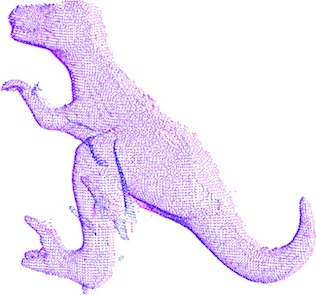}
		\\
		&\rotatebox[origin=l]{90}{\emph{parasauro}}&
		\includegraphics[height=1.7cm]{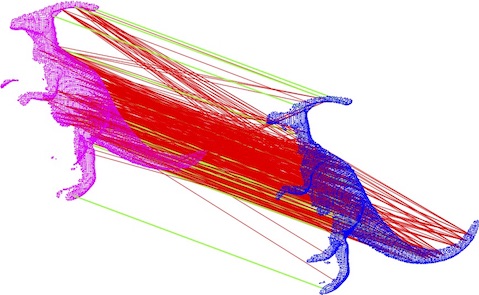}&
		\includegraphics[height=1.7cm]{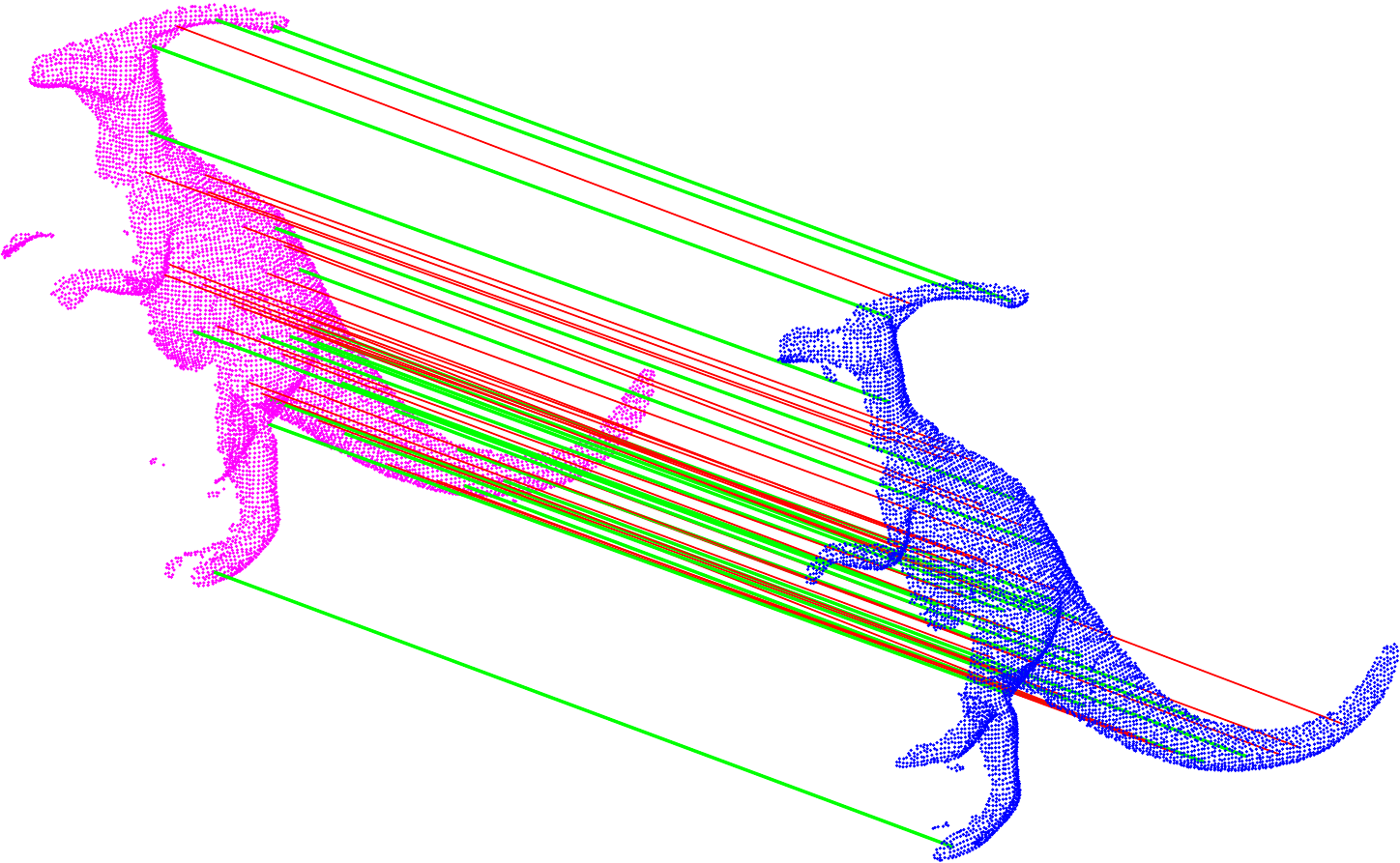}&
		\includegraphics[height=1.7cm]{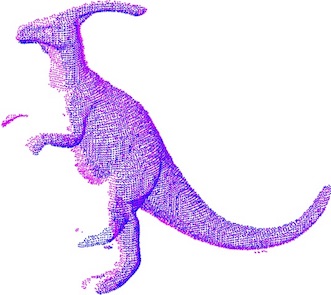}
		\\
		\hline
&&&&\\[-.35cm]
		\multirow{3}{*}{\rotatebox{90}{ Mining~~}} 
		&\rotatebox[origin=l]{90}{\footnotesize\emph{mining-a}}&
		\includegraphics[height=1.7cm]{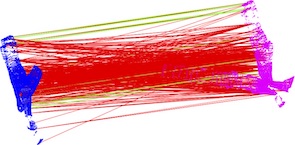}&
		\includegraphics[height=1.7cm]{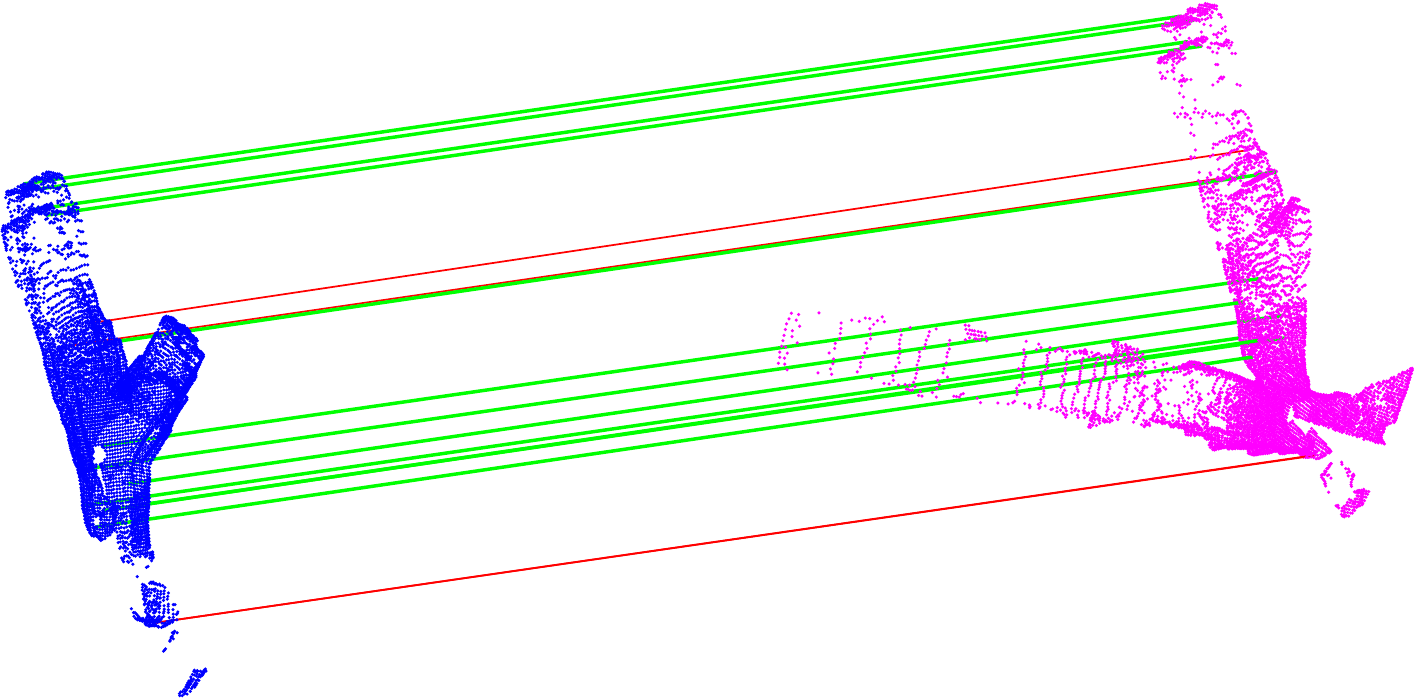}&
		\includegraphics[height=1.7cm]{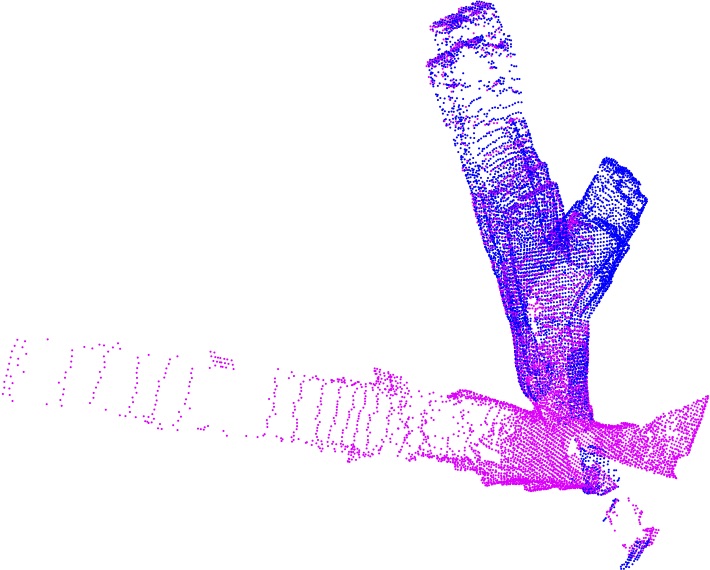}
		\\
		&\rotatebox[origin=l]{90}{\emph{mining-b}}&
		\includegraphics[height=1.8cm]{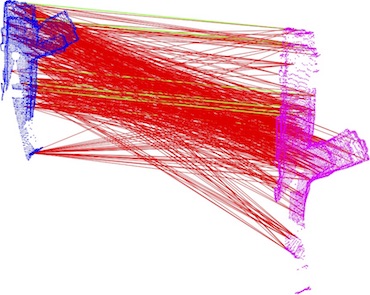}&
		\includegraphics[height=1.8cm]{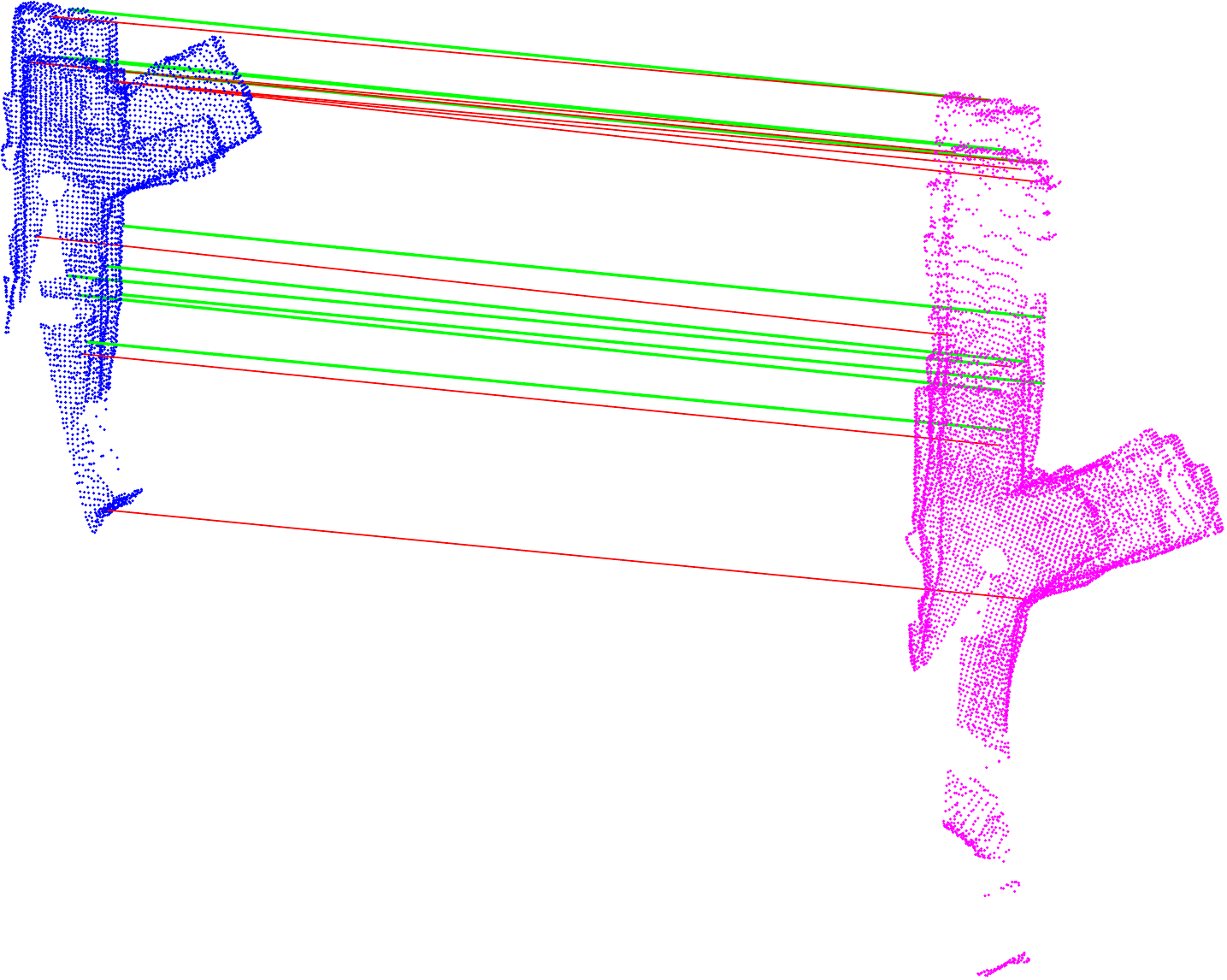}&
		\includegraphics[height=1.8cm]{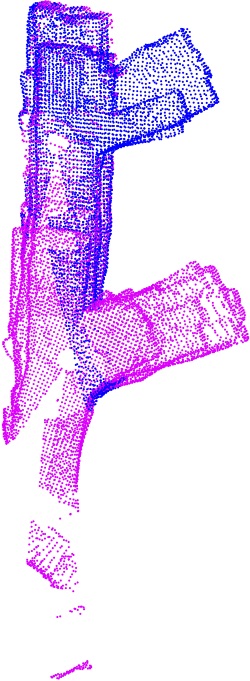}
		\\
\hline
&&&&\\[-1em]
\multirow{2}{*}{\rotatebox{90}{ Remote Sensing \hspace{-2em}}} 
&\rotatebox[origin=l]{90}{\emph{vaihingen-a}}&
\includegraphics[height=2cm]{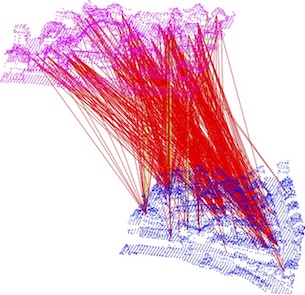}&
\includegraphics[height=2cm]{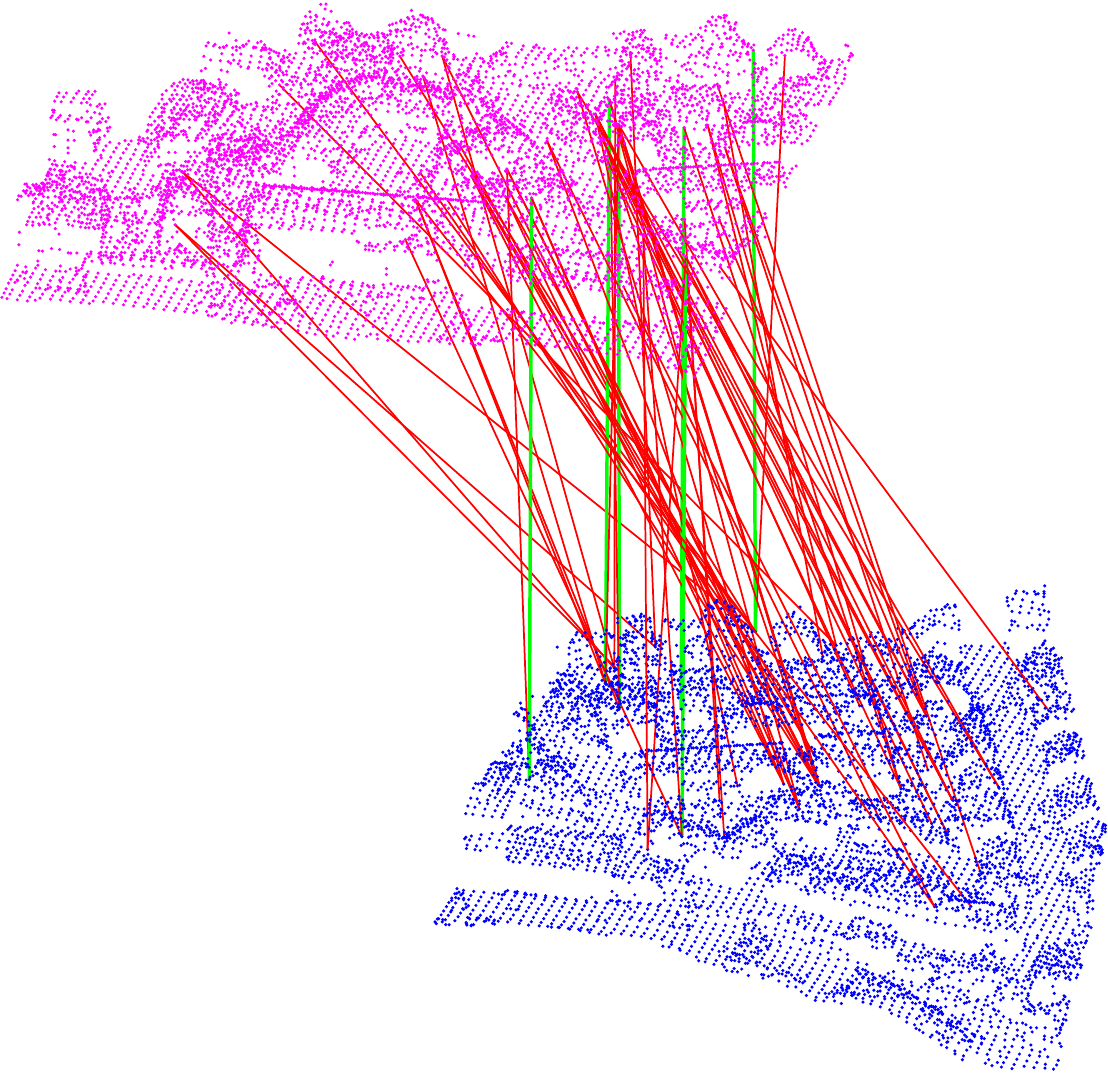}&
\includegraphics[height=1.8cm]{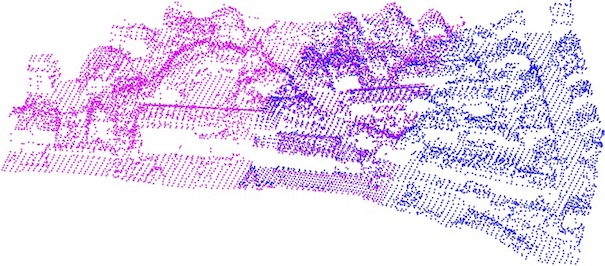}
\\
&\rotatebox[origin=l]{90}{\emph{vaihingen-b}}&
\includegraphics[height=2.2cm]{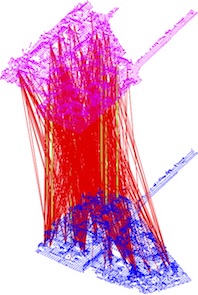}&
\includegraphics[height=2.2cm]{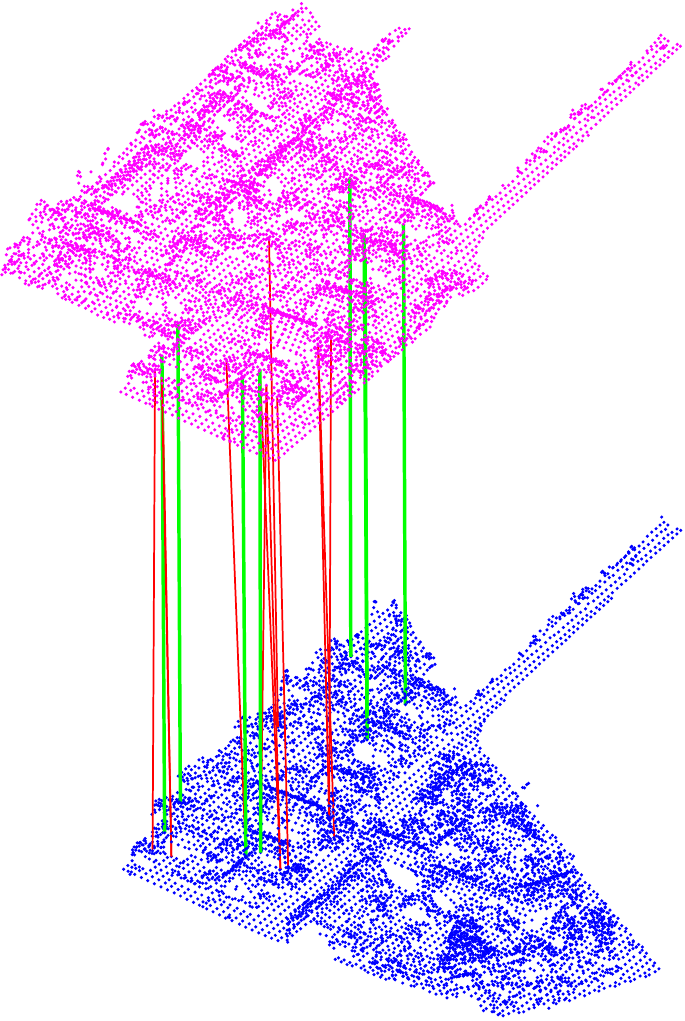}&
\includegraphics[height=2cm]{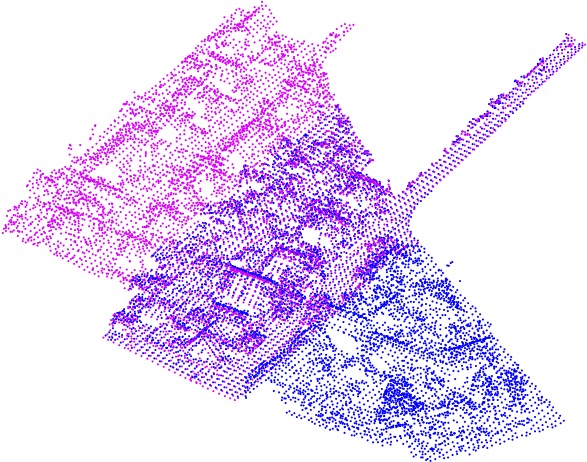} 
\\
\end{tabularx}
\caption{Qualitative results of GORE for 6 DoF Euclidean registration~\eqref{eq:rigid} with $N=500$. Column 1: Input correspondences (true inliers are represented by green lines, and true outliers by red lines). Column 2: Data remaining after GORE. Column 3: Registration using approximate solution $\tilde{\bT}$ produced by GORE.}
\label{fig:6dofresults}	
\end{figure*}

\subsubsection{Quantitative benchmarking}\label{sec:benchpcr}

We tested the following approaches or pipelines: GORE, GORE+RANSAC, RANSAC, GORE+BnB, and BnB (for BnB, the 6 DoF variant in~\cite{parrabustos16_pami} was used). We recorded the following measures for each method/run.
\begin{itemize}
\item $|\cI|$: Consensus size of best solution found.
\item RMSE: Root mean square error between the transformed points in the consensus (with the best solution found) and the true locations.
\item $|\cH^\prime|$: Number of remaining data after GORE.
\item time (s): Total runtime.  A timeout of $5$ hours ($18,000$ seconds) was imposed on all methods.
\item $|\cI^\ast|$: Consensus size of global solution.
\end{itemize}
The median values over all $10$ data instances per every ($N$, dataset) combination are summarized in Table~\ref{tab:comparison6dof}  ($|\cI|$,$|\cH'|$ and $|\cI^*|$) and in Fig.~\ref{fig:pcr_timerms}  (time and RMS). RMSE was lower than $0.5$ for all methods showing a correct  alignment. Not all BnB runs finalized within the time limit; see Table~1 in Sec.~B (supp.~material) for results on individual objects.



\begin{figure*}
\centering
\scriptsize
\begin{tabularx}{\textwidth}{C{1} C{1} C{1} C{1}}
	Stanford & Mian & Mining & Remote Sensing\\
	\hline
\includegraphics[width=.24\textwidth]{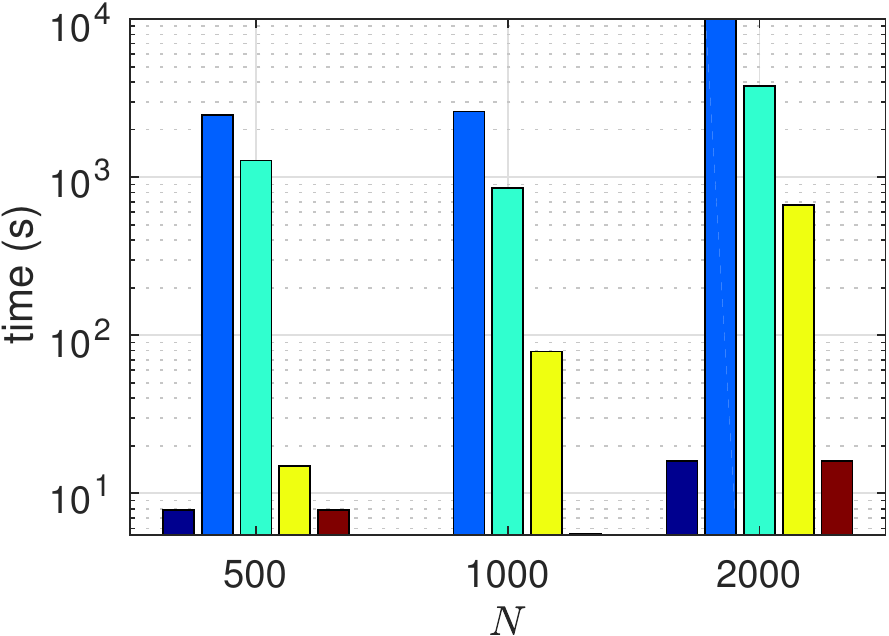} &
\includegraphics[width=.24\textwidth]{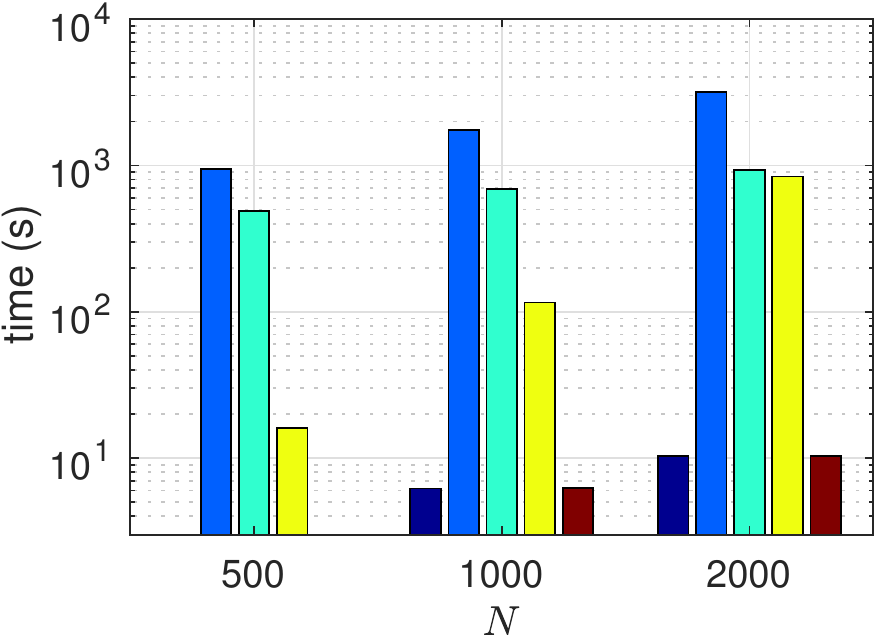}     &
\includegraphics[width=.24\textwidth]{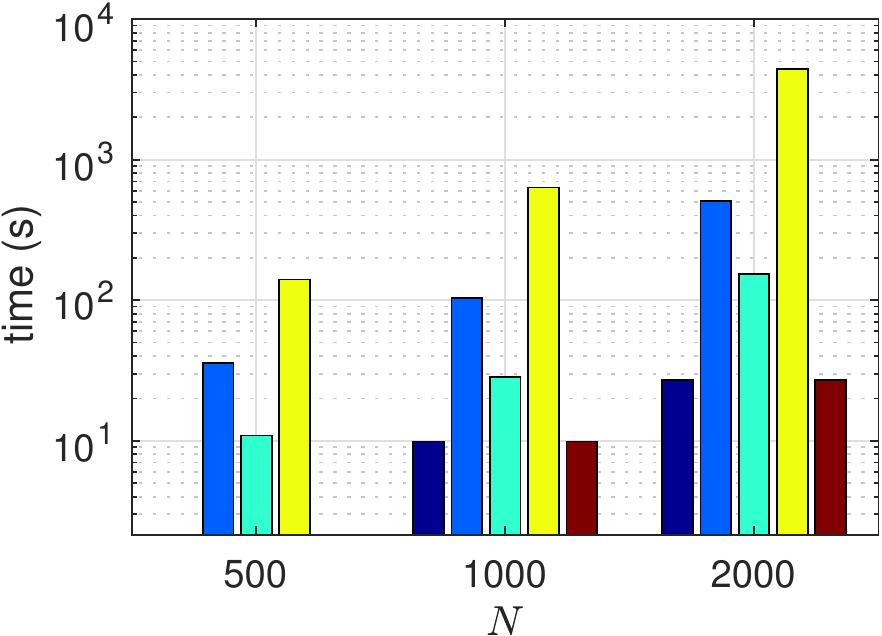}   &
\includegraphics[width=.24\textwidth]{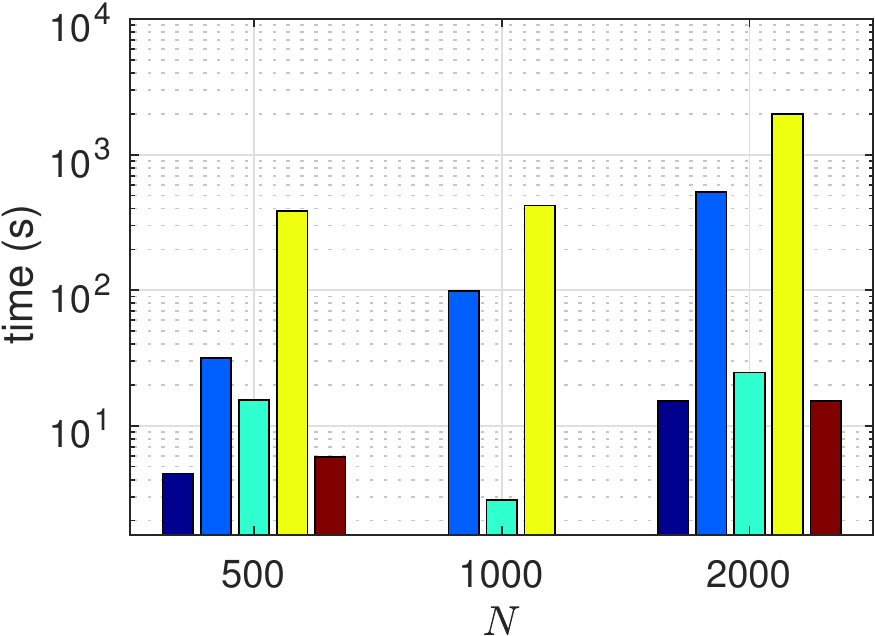}\\[-.3cm]
\includegraphics[width=.24\textwidth]{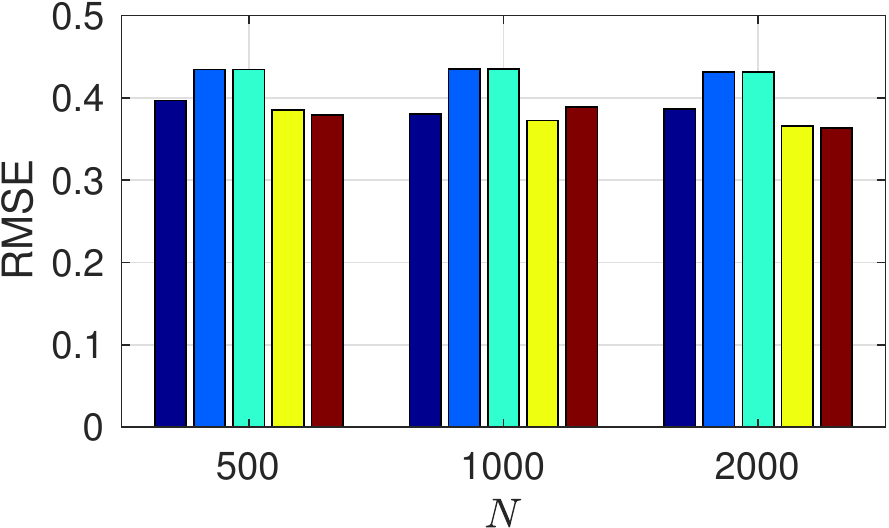}  &
\includegraphics[width=.24\textwidth]{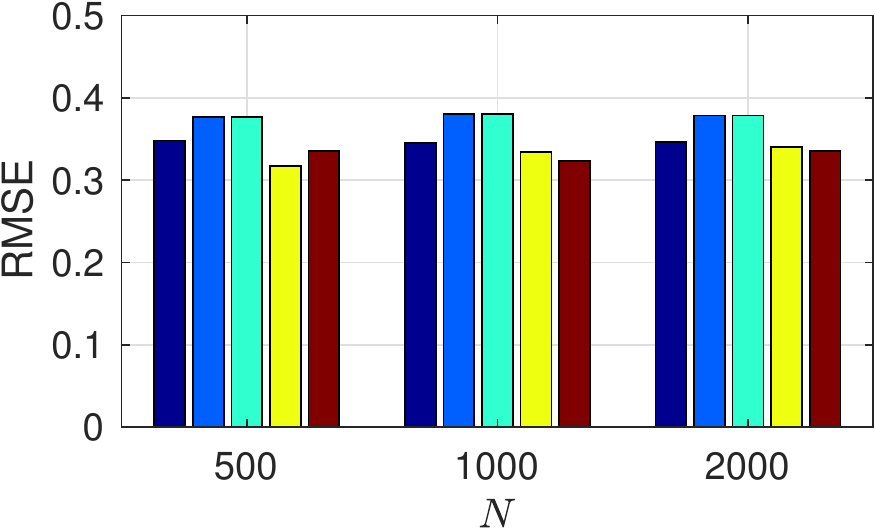}      &
\includegraphics[width=.24\textwidth]{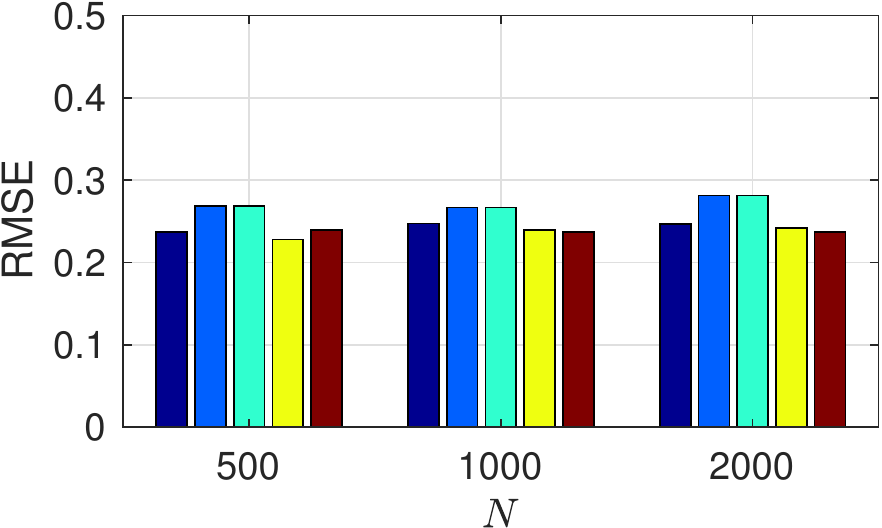}    &
\includegraphics[width=.24\textwidth]{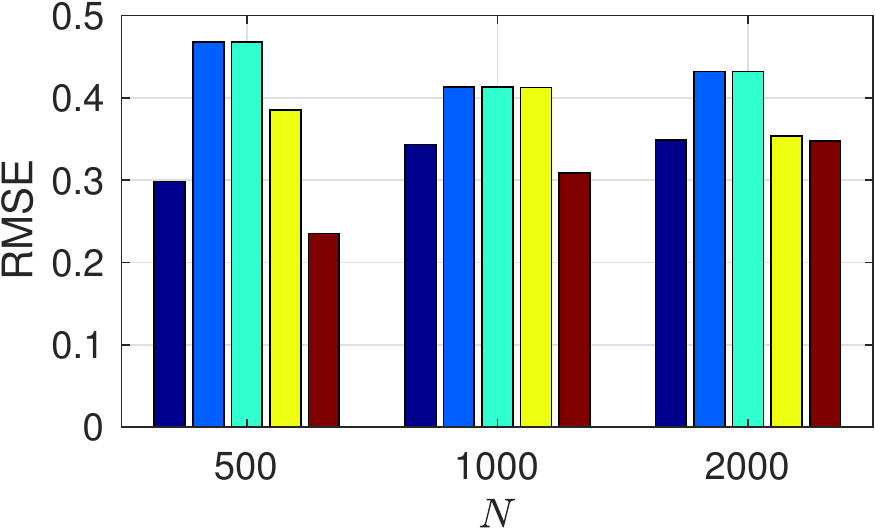} \\[-.25cm]
\multicolumn{4}{c}{
	\setlength{\fboxsep}{1pt}%
	\fbox{
		\includegraphics[height=.24cm]{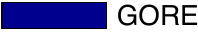} \;\;\; 
		\includegraphics[height=.24cm]{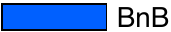}\;\;\; 
		\includegraphics[height=.24cm]{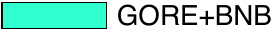}\;\;\; 
		\includegraphics[height=.24cm]{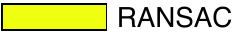}\;\;\;
		\includegraphics[height=.24cm]{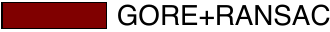}
	}	
}\\
\end{tabularx}
\caption{Time (top row) and RMSE (bottom row) per every ($N$, dataset) combination for 6 DoF Euclidean registration. Time is plotted in log scale. }
\label{fig:pcr_timerms}
\end{figure*}

\begin{table*} 
	\begin{center}
		
			\begin{tabular}{|c|c|lr|r|r|r|r|}
				\cline{5-8}
				\multicolumn{4}{c|}{} & 
				\multicolumn{1}{c|}  {\multirow{1}{*}{Stanford}} & 
				\multicolumn{1}{c|}  {\multirow{1}{*}{Mian}} & 
				\multicolumn{1}{c|}  {\multirow{1}{*}{Mining}} &
				\multicolumn{1}{c|}  {\multirow{1}{*}{Remote Sensing}} \\
				\multicolumn{4}{c|}{}  & 
				$|\cX|=7767.5$ & $|\cY|=7807.0$ & $|\cX|=7667.0$ & $|\cX|=7780.5$ \\
				\multicolumn{4}{c|}{}  & 
				$|\cY|=7172.0$ & $|\cX|=8000.5$ & $|\cY|=6452.0$ & $|\cY|=7448.0$\\
				\hline
				
				\multirow{5}{*}{$N = 500$} & 
				
				\multicolumn{3}{c|}{outlier ratio} & 0.96 & 0.96 & 0.98 & 0.99\\
				\cline{2-8}
				
				& \multirow{6}{*}{\rot{Pipeline}} 
				
				&\multirow{2}{*}{GORE} & $|\cI|$ & 18 & 17 & 9 & 5\\
				& & & $|\cH^\prime|$ & 61 & 46 & 18 & 46\\
				& & GORE+BnB & $|\cI^*|$ & 25 & 21 & 12 & 7\\
				& & GORE+RANSAC & $|\cI|$ &	21 & 18 & 10 & 6\\
				& & RANSAC & $|\cI|$ & 21 & 17 & 8 & 6\\
				\hline
				
				\multirow{5}{*}{$N=1000$} & 
				
				\multicolumn{3}{c|}{outlier ratio} &
				0.98 & 0.98 & 0.99 & 0.99\\
				\cline{2-8}
				
				& \multirow{6}{*}{\rot{Pipeline}} 
				&\multirow{2}{*}{GORE} & $|\cI|$ & 21 & 18 & 10 & 11\\
				& & & $|\cH^\prime|$ & 60 & 57 & 25 & 14\\
				& & GORE+BnB & $|\cI^*|$ & 27 & 22 & 12 & 12\\
				& & GORE+RANSAC & $|\cI|$ & 20 & 19 & 10 & 11\\
				& & RANSAC & $|\cI|$ & 22 & 19 & 10 & 11\\
			\hline
				
				\multirow{5}{*}{ $N=2000$ } & 
				
				\multicolumn{3}{c|}{outlier ratio} & 
				0.98 & 0.99 & 0.99 & 0.99\\
				\cline{2-8}
				
				& \multirow{6}{*}{\rot{Pipeline}} 
				
				&\multirow{2}{*}{GORE} & $|\cI|$ & 	22 & 19 & 12 & 15\\
				& & & $|\cH^\prime|$ & 69 & 64 & 35 & 21\\
				& & GORE+BnB & $|\cI^*|$ & 28 & 24 & 14 & 17\\
				& & GORE+RANSAC & $|\cI|$ & 22 & 19 & 12 & 15\\
				& & RANSAC & $|\cI|$ &	24 & 19 & 12 & 15\\
			\hline
				
			\end{tabular}
	\end{center}
	\caption{6 DoF Euclidean registration results.}
	\label{tab:comparison6dof}
\end{table*}

The result of GORE could vary depending on the initial relative pose of the point clouds (usually by  $\leq2$ points in ours experiments), since the suboptimal rotation~\eqref{eq:suboptrot} used by GORE to reject outliers is obtained from minimum geodesic motions. However, initial pose does not have to be "close" to the correct alignment for GORE to be efficient. 

On all the ($N$, dataset) combinations, GORE was able to terminate within $30$ seconds; in fact, for $N\leq 1000$, it was able to finish within $10$ seconds in all the datasets. Also, for most of the instances, GORE reduced the input $\cH$ to a very small subset $\cH^\prime$ of size $< 70$. Due to the efficacy of GORE in reducing the data size and outlier rate, the time required to attain a solution (either global or local) to the robust registration problem~\eqref{eq:rigid} has been greatly reduced.


Specifically, the combination GORE+BnB was able to find the globally optimal result using typically only half of the time required by BnB alone. This was due to the massive reduction of data size before BnB - in this experiment, after GORE the median problem size to BnB was just $50$. Note that GORE+BnB is guaranteed to be globally optimal.


As evidenced from the significant runtimes, RANSAC is not a practical algorithm to solve~\eqref{eq:rigid} due to the very high outlier rates. However, the strategy GORE+RANSAC was able to drastically reduce RANSAC's runtime and yield very satisfactory registration results. In fact, we see that the runtimes of GORE+RANSAC were two orders of magnitude smaller than RANSAC alone.



\subsection{Depth map registration}

\begin{figure*}
	\footnotesize
	\begin{tabularx}{\textwidth}{c|c|C{.7}|C{.7}|C{.5}}
		& & {\footnotesize Input correspondences $\cH$} & {\footnotesize Remaining data $\cH^\prime$ after GORE} & {\footnotesize Registration using approximate solution $\tilde{\bT}$ from GORE}\\ 
		\hline
		&&&&\\[-.3cm]
		\multirow{2}{*}{\rotatebox{90}{ Stitching (Microsoft 7-scenes)\hspace{-1.5cm}} } 
		&\rotatebox[origin=l]{90}{\emph{\hspace{2em}office}}&
		\includegraphics[height=2.1cm]{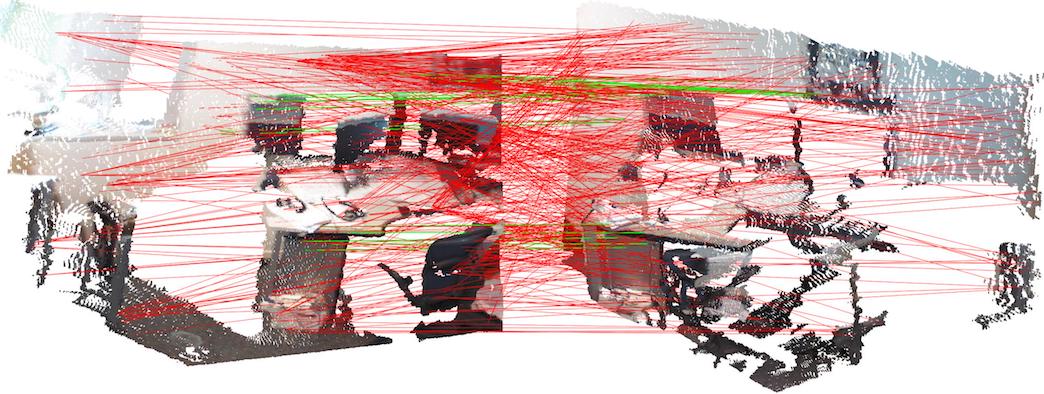}&
		\includegraphics[height=2.1cm]{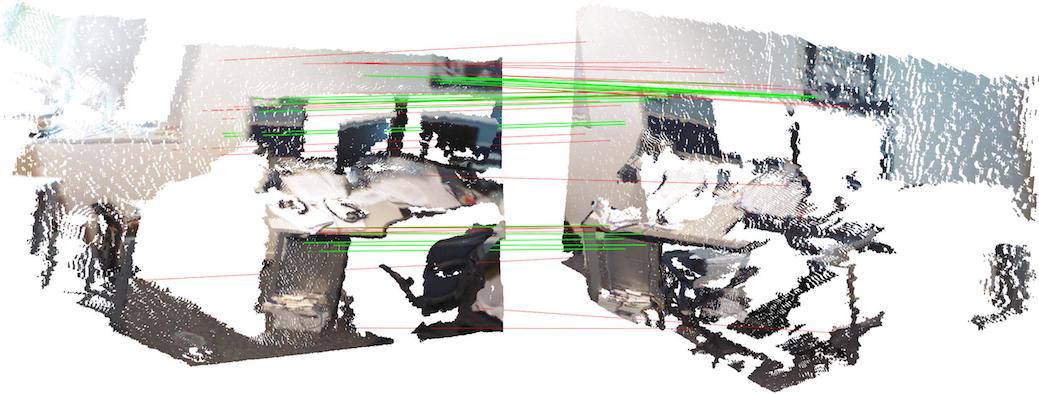}&
		\includegraphics[height=2.3cm]{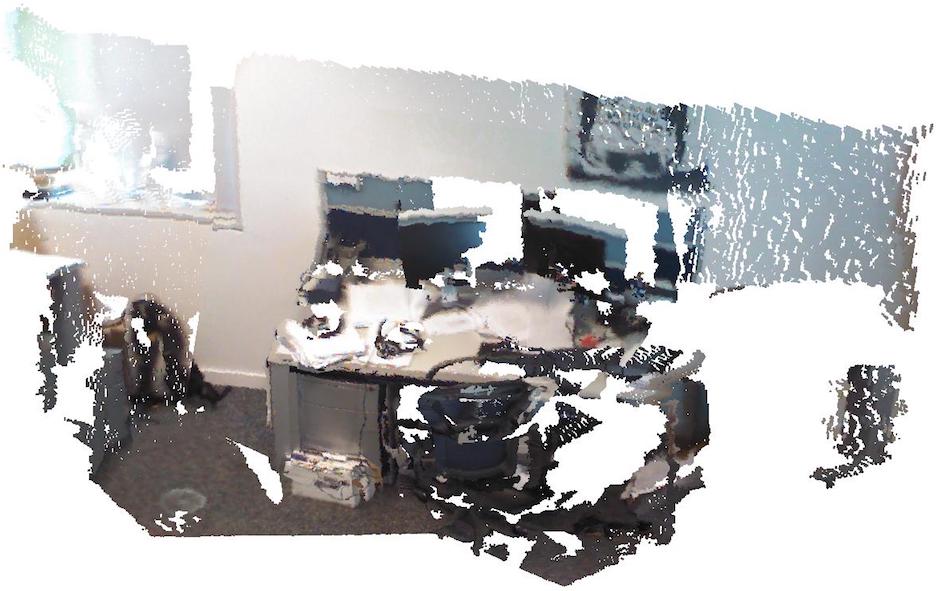}
		\\[-.3cm]
		&\rotatebox[origin=l]{90}{\emph{\hspace{4em}kitchen}}&
		\includegraphics[height=2.5cm]{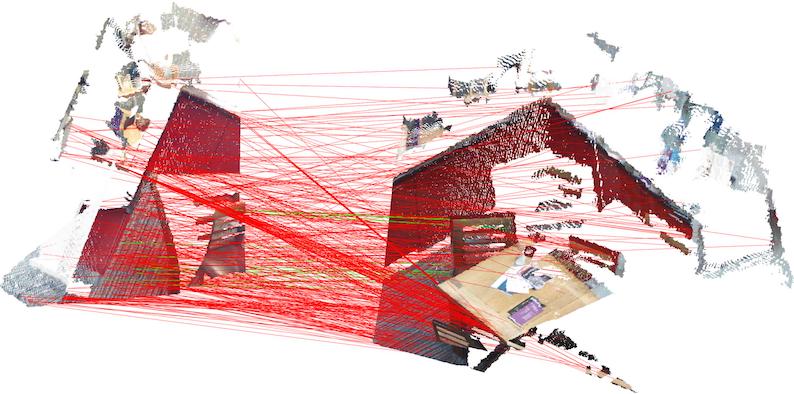}&
		\includegraphics[height=2.5cm]{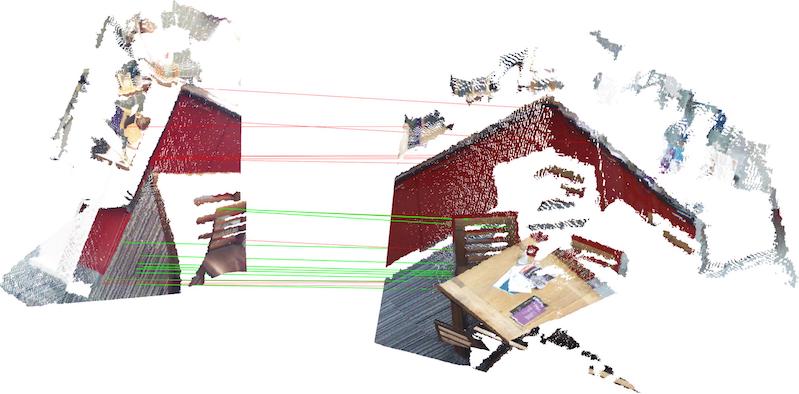}&
		\includegraphics[height=2.5cm]{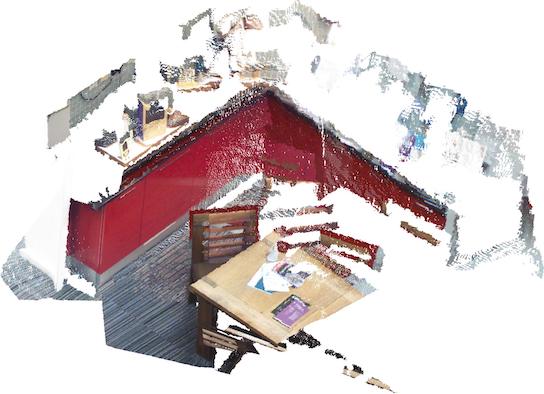}
		\\
		\hline
		&&&&\\[-.2cm]
		\multirow{2}{*}{\rotatebox[origin=c]{90}{ Localization (RGB-D objects and scenes dataset) \hspace{-3.3cm}}} &
		\rotatebox{90}{\hspace{.5cm}(\emph{cereal-box}, \emph{scene13})} & 
		\includegraphics[height=3.7cm]{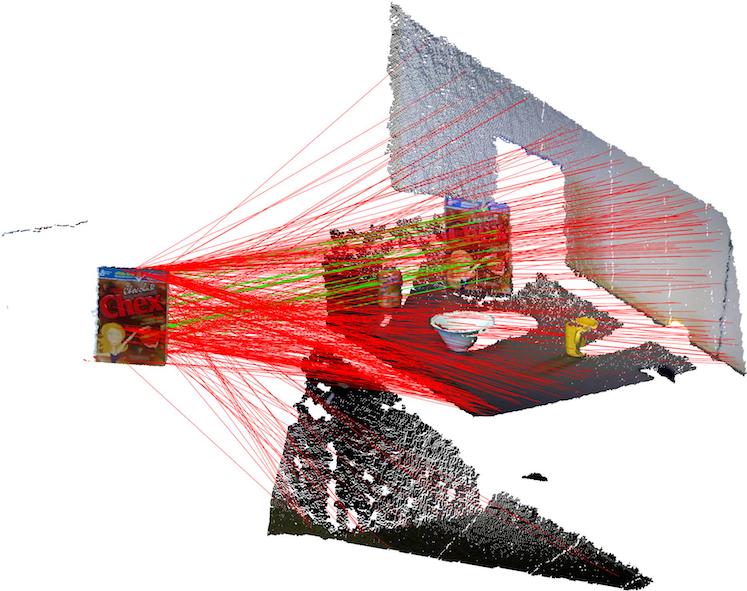} &
		\includegraphics[height=3.7cm]{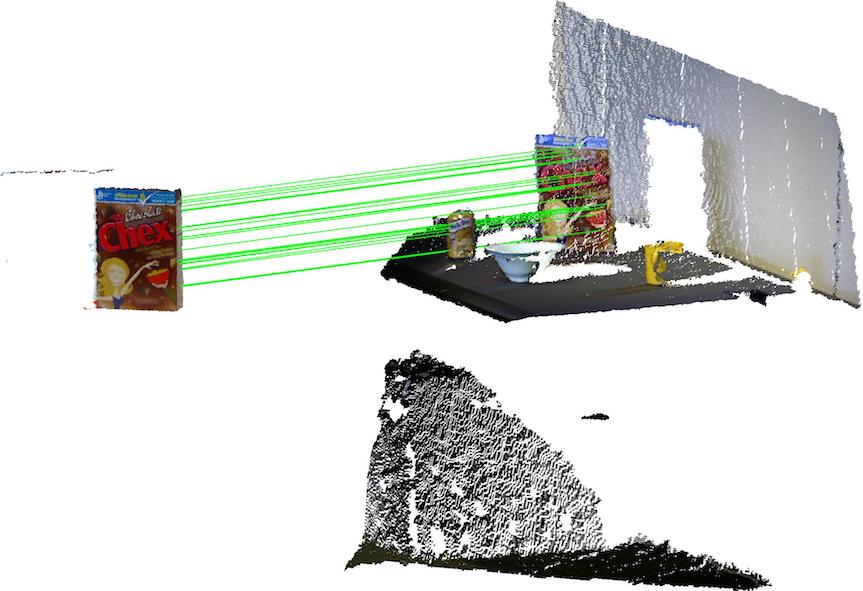} &
		\includegraphics[height=3.7cm]{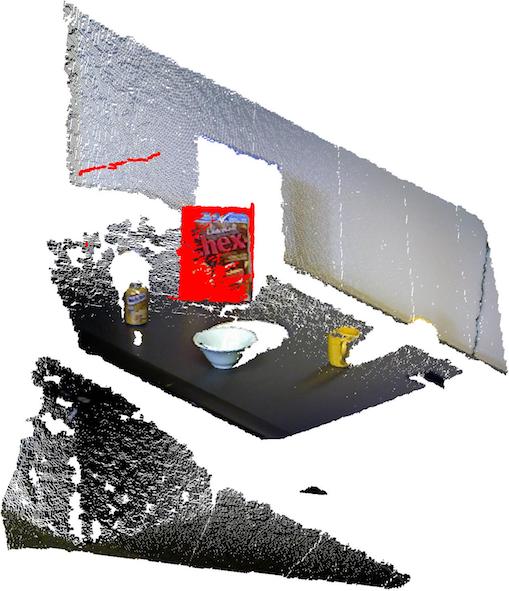} \\[-.2cm]
		&\rotatebox[origin=l]{90}{\hspace{1em} (\emph{cap}, \emph{scene12})}&
		\includegraphics[height=2.7cm]{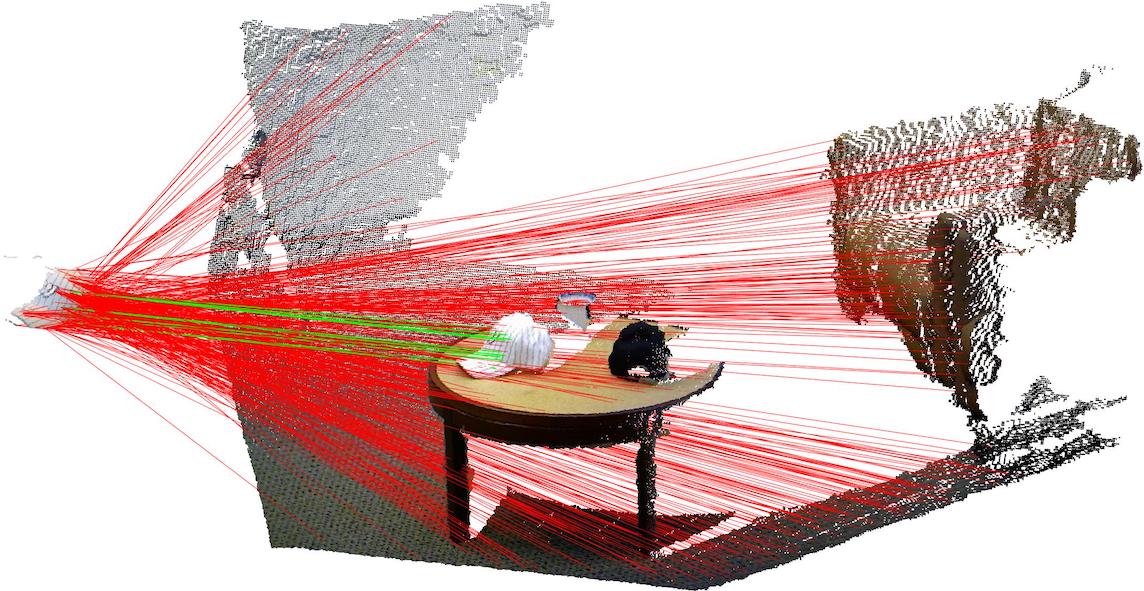}&
		\includegraphics[height=2.7cm]{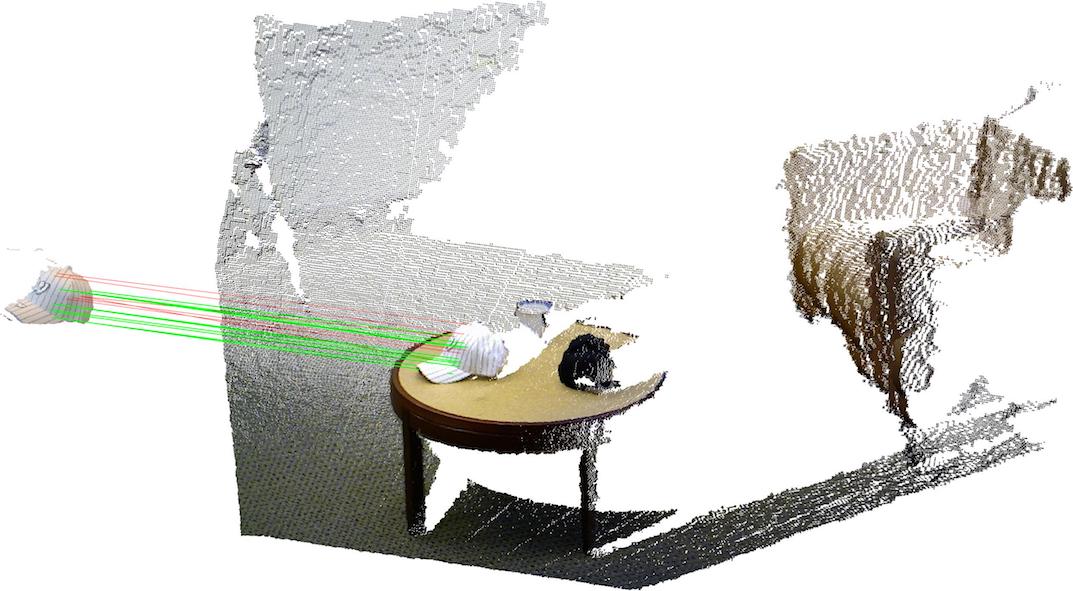}&
		\includegraphics[height=2.7cm]{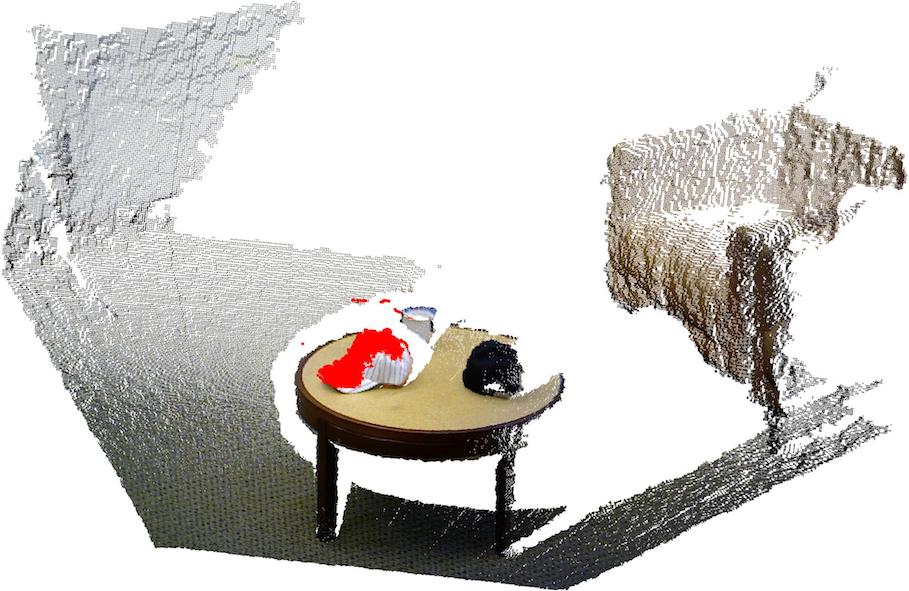}\\[-.3em]
	\end{tabularx} 
	\caption{Qualitative results of GORE for 6 DoF Euclidean registration~\eqref{eq:rigid} with $N=500$ on RGB-D data. Column 1: Input correspondences (true inliers are represented by green lines, and true outliers by red lines). Column 2: Data remaining after GORE. Column 3: Registration using approximate solution $\tilde{\bT}$ produced by GORE. For localization, the localized object is colored in red. }
	\label{fig:rgbdresults}
\end{figure*}

\subsubsection{Experimental setup}

We tested GORE on depth maps (acquired, e.g., using RGB-D cameras) based on two problems:
\begin{itemize}
	\item \textbf{Stitching}: Registering two 3D views of a scene. 
	\item \textbf{Localization}: Registering a 3D object against a cluttered 3D scene.
\end{itemize}

For stitching, we used views from the Microsoft 7-scenes dataset~\cite{shotton13}. Two partially overlapped views of the \emph{office} (Row~1 in Fig.~\ref{fig:rgbdresults}) and \emph{kitchen} (Row~2) scenes were selected. We repeated the experimental setup in Sec.~\ref{sec:expsetuppcr}, except to generate  correspondences. Since ISS3D was inaccurate on RGB-D data, we used SIFT on the associated RGB images. To obtain exact $N$ 3D point correspondences, we select the top-$N$ SIFT correspondences with valid depth values.
 
For localization, we repeated the above setup on (object, scene) pairs from the RGB-D object and scenes datasets~\cite{lai11}. We selected 3D views for the pairs (\emph{cereal-box}, \emph{scene-13}), and (\emph{cup}, \emph{scene-12}); see the last 2 columns in Fig.~\ref{fig:rgbdresults}.

\subsubsection{Qualitative evaluation}
Analogously to Sec.~\ref{sec:pcrqualitative},  Fig.~\ref{fig:rgbdresults} shows qualitative results for both problems with $N=500$. All of these instances contain more than $95\%$ outliers. GORE terminated within $7$ seconds for stitching, and in $< 0.5$ seconds for localization (see Sec.~B in the supp.~material for results on individual instances). GORE produced a reduced set $\cH^\prime$ with  $<10\%$ of the $N$ original points, and a satisfactory approximate transformation $\tilde{\bT}$.

\subsubsection{Quantitative benchmarking}
We tested same pipelines and recorded the same measurements that in Sec.~\ref{sec:benchpcr}. Table~\ref{tab:comparisonrgbd} and Fig.~\ref{fig:rgbd_timerms} summarizes the median values  over all $10$ data instances per problem and $N$. For all methods, RMSE was lower than $0.6$ for stitching and lower than $0.4$  for localization what shows a correct  alignment.

\begin{figure}
\centering
\scriptsize
\begin{tabularx}{\linewidth}{C{1} C{1} }
	Stitching  & 
    Localization \\
(Microsoft 7-scenes) & 
    (RGB-D objects and scenes dataset) \\
	\hline
	\includegraphics[width=\linewidth]{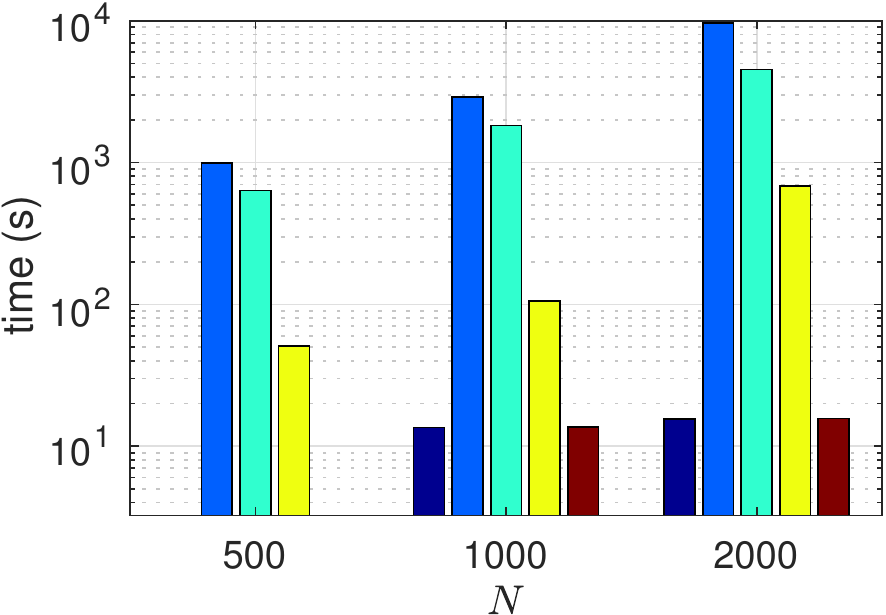} &
	\includegraphics[width=\linewidth]{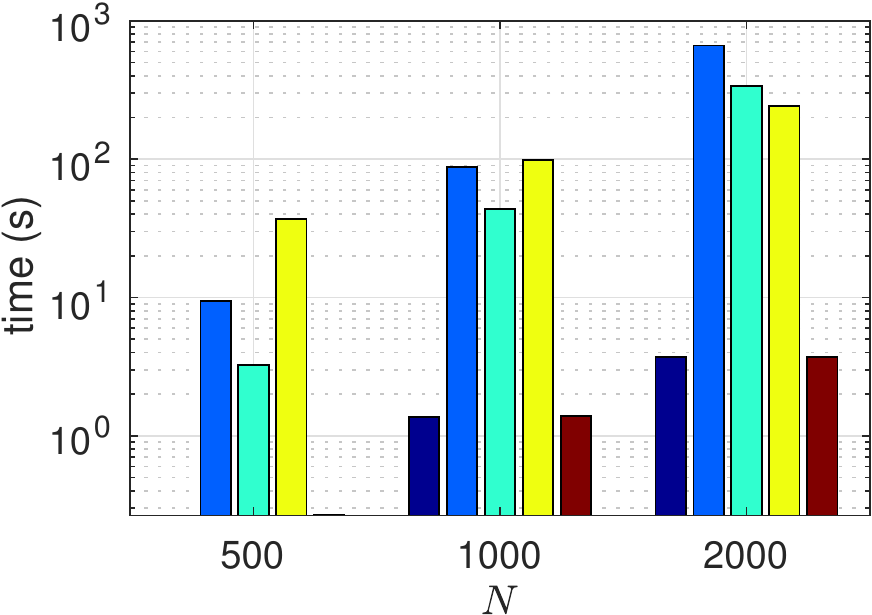}     \\
	\includegraphics[width=\linewidth]{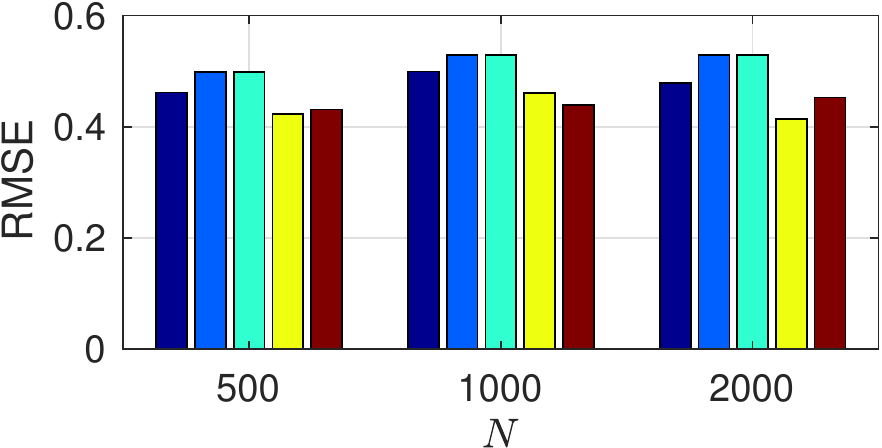}  &
	\includegraphics[width=\linewidth]{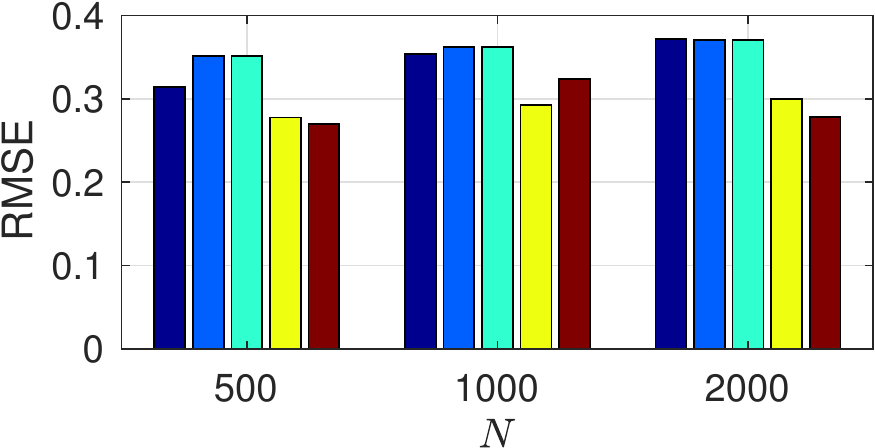}      \\
\end{tabularx}
{
	\setlength{\fboxsep}{1pt}%
	\fbox{
		\begin{minipage}{.9\linewidth}
			\centering
			\begin{tabular}{lll}
		\includegraphics[height=.23cm]{rgbd/legend_gore} &
		\includegraphics[height=.23cm]{rgbd/legend_bnb} & 
		\includegraphics[height=.23cm]{rgbd/legend_gorebnb}\\ 
		\includegraphics[height=.23cm]{rgbd/legend_ransac} &
		\includegraphics[height=.23cm]{rgbd/legend_goreransac} &\\[-.7mm]
			\end{tabular}
		\end{minipage}	
	}	
}
\caption{Time (top row) and RMSE (bottom row) per $N$ for stitching and localization problems on RGB-D data. Time is plotted in log scale.}
\label{fig:rgbd_timerms}
\end{figure}

\begin{table} 
	\begin{center}
		
		\begin{tabular}{|c|c|lr|r|r|}
			\cline{5-6}
			\multicolumn{4}{c|}{} & 
			\multicolumn{1}{c|}  {\multirow{1}{*}{Stitching}} & 
			\multicolumn{1}{c|}  {\multirow{1}{*}{Loc.}} \\
			\hline
			
			\multirow{6}{*}{\rot{$N=500$}} & 
			
			\multicolumn{3}{c|}{outlier ratio} & 0.97 & 0.96 \\
			\cline{2-6}
			
			& \multirow{6}{*}{\rot{Pipeline}} 
			
			&\multirow{2}{*}{GORE} & $|\cI|$ & 
			13 & 16\\ 
			& & & $|\cH^\prime|$ & 36 & 25\\
			
			&  &\multirow{1}{*}{GORE$+$BnB} & $|\cI^*|$ & 16 & 18\\
			
			
			& &\multirow{1}{*}{GORE$+$RANSAC} & $|\cI|$ & 13 & 17\\
			
			&  &\multirow{1}{*}{RANSAC} & $|\cI|$ & 14 & 17\\
			
			\hline
			\multirow{6}{*}{\rot{$N=1000$}} & 
			
			\multicolumn{3}{c|}{outlier ratio} & 0.98 & 0.97 \\
			\cline{2-6}
			
			& \multirow{5}{*}{\rot{Pipeline}} 
			
			&\multirow{2}{*}{GORE} & $|\cI|$ & 20 & 24\\ 
			& & & $|\cH^\prime|$ & 77 & 40\\
			
			&  & GORE$+$BnB & $|\cI^*|$ & 24 & 27\\
			
			
			& & GORE$+$RANSAC & $|\cI|$ & 20 & 26\\
			
			&  & RANSAC & $|\cI|$ & 20 & 25\\
			\hline
			
			\multirow{6}{*}{\rot{$N=2000$}} & 
			
			\multicolumn{3}{c|}{outlier ratio} & 
			0.98 & 0.98 \\
			\cline{2-6}
			
			& \multirow{5}{*}{\rot{Pipeline}} 
			
			&\multirow{2}{*}{GORE} & $|\cI|$ & 24 & 42\\ 
			& & & $|\cH^\prime|$ & 111 & 61\\
			
			&  &GORE$+$BnB & $|\cI^*|$ & 31 & 46\\
			
			
			&  & GORE$+$RANSAC & $|\cI|$ & 24 & 44\\
			
			& & RANSAC & $|\cI|$ & 24 & 43\\
			\hline
		\end{tabular}
	\end{center}
	\caption{RGB-D registration results.}
	\label{tab:comparisonrgbd}
	\vspace{-2.1em}
\end{table}

GORE took $<16$ seconds for stitching and $<4$ seconds for localization to reduce $\cH$ to a very small subset $\cH^\prime$ of size  $<120$. Similar to point cloud registration, GORE+BnB took only half of the runtime of BnB, and GORE+RANSAC gave a 2-orders-of-magnitude reduction in runtime than running RANSAC alone.

\subsection{Investigating the role of BnB in Algorithm~\ref{alg:gore}} \label{sec:usageofbnb}

To investigate the effects on accuracy and runtime due to the usage of BnB in Algorithm~\ref{alg:gore} (Line~\ref{step:bnb}), we ran Algorithm~\ref{alg:gore} with and without BnB.

First we analyse this effect for synthetic data. We use instances of point cloud registration with outliers ratios in the range $[0.5 , 0.98]$ obtained as in~Sec.~\ref{sec:rotresults} and then randomly translated.  Fig.~\ref{fig:bnb_effect_synth_pruning} plots the pruning rates and Fig.~\ref{fig:bnb_effect_synth_time} the runtimes taken as the median values for 100 random instances. For high outlier ratios (which are more relevant in 6 DoF registration), the addition of BnB has an extra cost of one order of magnitude. BnB slightly improved the pruning of GORE, however, as further results will show, the improvement is more significant on real data.

For real data, we used the instances in Table~\ref{tab:comparison6dof}. Results reported in Table~\ref{tab:bnb_effect_real} are median values over all same size instances (Column 2) for every dataset (Column 1).  We recorded the number of removed outliers and runtime for GORE with (Columns 3 and 6) and without BnB (Columns 4 and 7). Table~\ref{tab:bnb_effect_real} also lists the increment ratio of removed outliers of using BnB (Column 5). Whilst the increment was low for the Mian and the Stanford datasets, it was in general more than a $10\%$ for the remote sensing and mining datasets. For example, in remote sensing with $N = 500$,  the usage of BnB allowed reduction to almost 7 times more true outliers. Most likely this effect occurs due to the more challenging characteristics of real data: more clutter, low overlaps, and structured error. 


Although the runtime increases by one order of magnitude by using BnB in Algorithm~\ref{alg:gore}, this increase is insignificant relative to the savings in \emph{overall runtime} (GORE+BNB), due to the much more aggressive pruning by GORE.


\begin{figure}
	\centering
	\subfloat[]{\includegraphics[width=.49\linewidth]{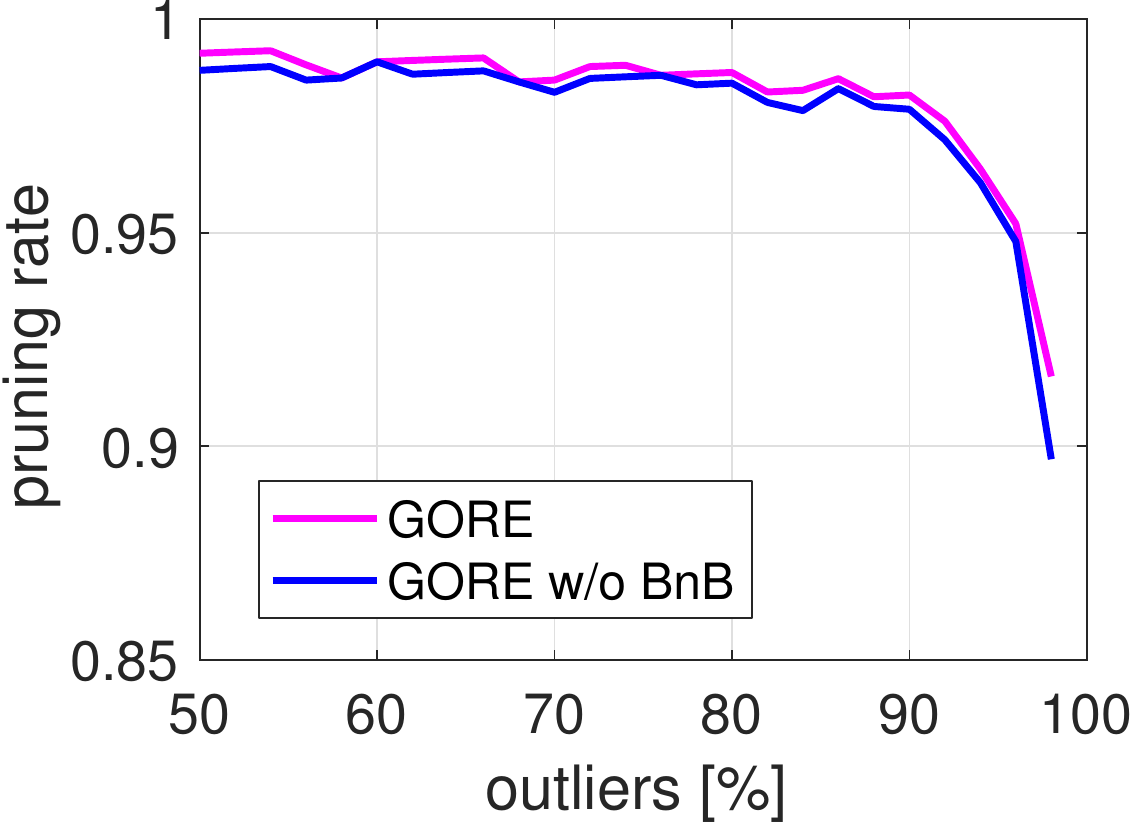} \label{fig:bnb_effect_synth_pruning}} \hfill
	\subfloat[]{\includegraphics[width=.49\linewidth]{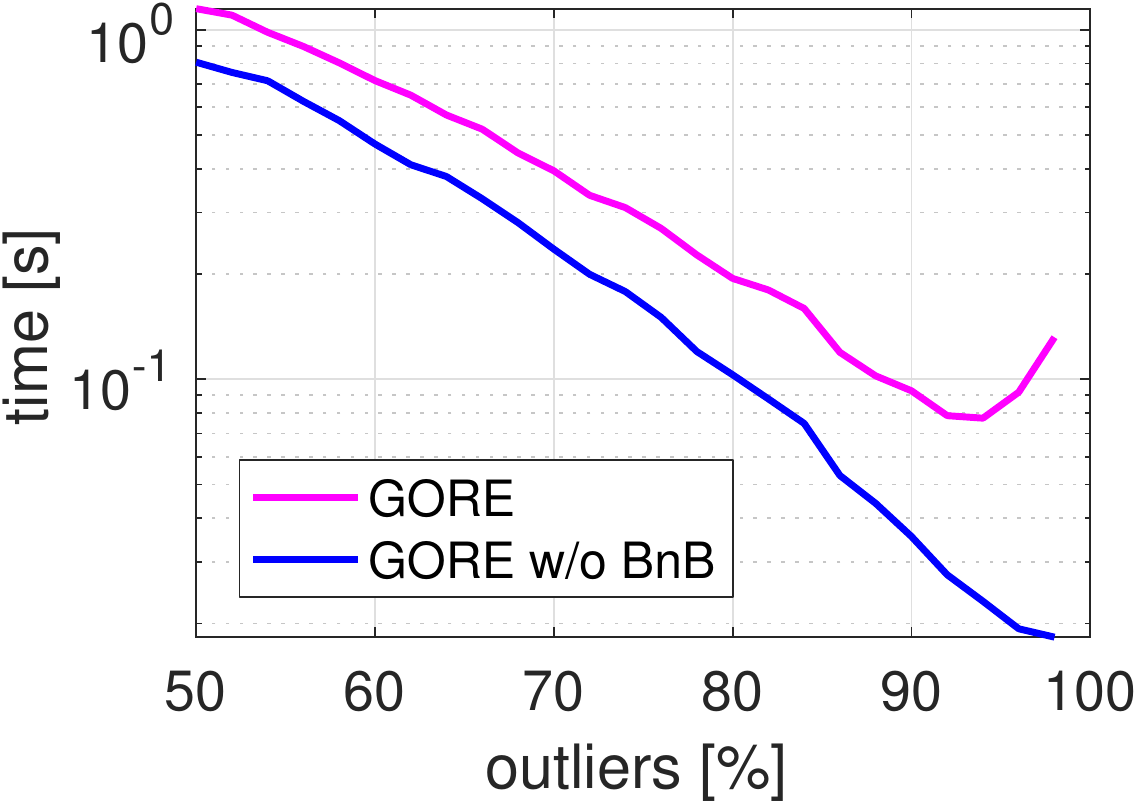}\label{fig:bnb_effect_synth_time}}
	\caption{Effect of BnB on GORE for synthetic data. (a) Pruning rate of GORE with and without BnB is compared for data with $50\%$ to $98\%$ outlier ratios. (b) Time comparison for GORE with and without BnB.} 
	\label{fig:bnb_effect_synth}
\end{figure}

\begin{table} 
	\renewcommand{\arraystretch}{1}
	{
		\begin{center}
			\begin{tabular}{|l|c|rrr|rr|}
				\hline
				\multirow{3}{*}{Dataset} & 
				\multirow{3}{*}{$|\cH|$} &
				\multicolumn{3}{c|}{Removed Outliers} & 
				\multicolumn{2}{c|}{Time (s)} \\ & & 
				\multicolumn{1}{c}{with} & 
				\multicolumn{1}{c}{w/o} & 
				\multicolumn{1}{c|}{inc.} & 
				\multicolumn{1}{c}{with}  & 
				\multicolumn{1}{c|}{w/o} \\
				& & 
				\multicolumn{1}{c}{BnB} & 
				\multicolumn{1}{c}{BnB} & 
				\multicolumn{1}{c|}{ratio} & 
				\multicolumn{1}{c}{BnB}  & 
				\multicolumn{1}{c|}{BnB} \\
				\hline
				
				\multirow{3}{*}{\emph{Mian}}
				&500 &    435 &    422 &  0.025 &  0.424 &  0.110 \\
				&1000 &    929 &    911 &  0.021 &  0.882 &  0.301 \\
				&2000 &   1917 &   1887 &  0.017 &  2.023 &  0.938 \\
				\hline
				\multirow{3}{*}{\emph{Stanford}}
				&500&    424 &    397 &  0.057 &  0.908 &  0.201 \\
				&1000&    919 &    866 &  0.036 &  1.498 &  0.420 \\
				&2000&   1906 &   1831 &  0.035 &  3.371 &  1.368 \\
				\hline
				\multirow{3}{*}{
					\parbox[c][4em][c]{.8cm} {\emph{Remote\\ Sensing}}
				}
				&500 &    353 &     43 &  6.808 &  2.422 &  0.155 \\
				&1000 &    974 &    884 &  0.105 &  0.833 &  0.272 \\
				&2000 &   1957 &   1717 &  0.153 &  5.512 &  2.180 \\
				\hline
				\multirow{3}{*}{\emph{Mining}}
				&500 &    376 &    311 &  0.172 &  0.907 &  0.069 \\
				&1000 &    770 &    744 &  0.063 &  4.386 &  0.305 \\
				&2000 &   1810 &   1698 &  0.105 & 11.034 &  1.284 \\
				\hline
			\end{tabular}
		\end{center}
	}
	\caption{Effect of BnB on GORE for real data. }
	\label{tab:bnb_effect_real}
\end{table}

\subsection{Comparison against an approximate method} \label{sec:albarelli}
Here we compare against the method of Albarelli et al.~\cite{albarelli10,albarelli09,rodala12,rodola13}. We use the following two variants:
\begin{itemize}
	\item Albarelli-Orig: With the proposed payoff matrix of~\cite{albarelli10}
	\begin{align}
	P_{ij} = \dfrac{\min(\|\bx_i - \bx_j\|, \|\by_i - \by_j\|)}{\max(\|\bx_i - \bx_j\|, \|\by_i - \by_j\|)}.
	\end{align}
	\item Albarelli: The payoff matrix is defined as
	\begin{align}
	P_{ij} = \begin{cases}
	1 & \text{if } |\,\| \bx_i - \bx_j\| - \| \by_i -  \by_j\|\, |\leq 2\xi \\
	0 & \text{otherwise.}
	\end{cases}
	\end{align}
This second variant is defined to make~\cite{albarelli10} ``more compatible" to our objective~\eqref{eq:rigid}, in that if a pair of correspondences $(\bx_i,\by_i)$ and $(\bx_j,\by_j)$ are in a consensus set $\cI$ (see~\eqref{eq:rigid}), then the condition
\begin{align}
|\,\| \bx_i - \bx_j\| - \| \by_i -  \by_j\|\, |\leq 2\xi.
\end{align}
must be satisfied.
\end{itemize}
For a fair comparison, we did not impose one-to-one correspondences for the solution of the Albarelli's variants since such a constraint is not considered in the objective function~\eqref{eq:rigid}. We then take the output (surviving correspondences) of Albarelli's variants as the ``reduced set" $\cH^\prime \subseteq \cH$.

To compare GORE against Albarelli's variants, we computed precision (pre) and recall (rec) for $\cH^\prime$ as follows
\begin{align}
\text{pre} &:= |\cH^\prime \cap \cI^* | / |\tilde{\cH}| \\ 
\text{rec} &:= |\cH^\prime \cap \cI^* | / |\cI^*|.           
\end{align}
We also recorded the total runtime (time). Since Albarelli et al.'s method is stochastic, all its reported metrics in Table~\ref{tab:albarelli} were the median value over 100 repetitions.

Both variants converged extremely fast: $< 0.06$ seconds for the larger instances to few milliseconds for small instances ($N = 500$). In general we observed that Albarelli obtained a better precision than Albarelli-Orig whilst the recall was higher for the former. For both variants, precision was higher than GORE in the Stanford and Mian datasets where the method reported very acceptable results. However, in several instances of the remote sensing and mining datasets precision and recall were $0$. The performance of Albarelli's variants is worse for instances of those datasets, presumably because the low overlapping between point clouds at the optimal solution. Contrast to GORE, where recall is always one as it is guaranteed that $\cH^{\prime}$ contains all true inliers.

\begin{table*} 
	\begin{center}
		\begin{tabular}{|l|rr|rrrr|rrrr|rrrr|}
			\hline
			\multicolumn{1}{|c|}{\multirow{2}{*}{Object}} & 
			\multicolumn{1}{c}  {\multirow{2}{*}{$N$}} & 
			\multicolumn{1}{c|} {\multirow{2}{*}{$\eta$}} &
			\multicolumn{4}{c|} {\multirow{2}{*}{GORE}} &
			\multicolumn{4}{c|} {\multirow{2}{*}{Albarelli}} &
			\multicolumn{4}{c|} {\multirow{2}{*}{Albarelli-Orig}} 
			\\[.5em]
			& & 
			& \multicolumn{1}{c}{$|\cH^\prime|$} 
			& \multicolumn{1}{c}{pre}
			& \multicolumn{1}{c}{rec}  
			& \multicolumn{1}{c|}{time (s)}     
			& \multicolumn{1}{c}{$|\cI|$} 
			& \multicolumn{1}{c}{pre} 
			& \multicolumn{1}{c}{rec} 
			& \multicolumn{1}{c|}{time (s)}
			& \multicolumn{1}{c}{$|\cI|$} 
			& \multicolumn{1}{c}{pre} 
			& \multicolumn{1}{c}{rec} 
			& \multicolumn{1}{c|}{time (s)}			
			\\
			\hline
\multirow{3}{*}{{\parbox[c][4em][c]{1.5cm} {\emph{bunny}
		}}}
& 500    & 0.92   & 56     & 0.71   & 1.00   & 0.860  & 47     & \textbf{0.83  } & 1.00   & 0.003  & 57     & 0.70   & 1.00   & 0.004  \\ 
& 1000   & 0.96   & 59     & 0.73   & 1.00   & 1.362  & 50     & \textbf{0.84  } & 0.98   & 0.012  & 58     & 0.74   & 1.00   & 0.013  \\ 
& 2000   & 0.98   & 60     & 0.72   & 1.00   & 4.618  & 50     & \textbf{0.84  } & 1.00   & 0.045  & 60     & 0.72   & 1.00   & 0.048  \\ 
\hline 
\multirow{3}{*}{{\parbox[c][4em][c]{1.5cm} {\emph{dragon}
		}}}
& 500    & 0.94   & 50     & 0.64   & 1.00   & 1.807  & 37     & \textbf{0.83  } & 0.94   & 0.003  & 52     & 0.62   & 1.00   & 0.004  \\ 
& 1000   & 0.97   & 53     & 0.64   & 1.00   & 2.813  & 39     & \textbf{0.82  } & 0.94   & 0.011  & 55     & 0.62   & 1.00   & 0.013  \\ 
& 2000   & 0.98   & 57     & 0.60   & 1.00   & 4.770  & 41     & \textbf{0.79  } & 0.97   & 0.046  & 60     & 0.57   & 1.00   & 0.051  \\ 
\hline 

\multirow{3}{*}{{\parbox[c][4em][c]{1.5cm} {\emph{t-rex}
		}}}
& 500    & 0.94   & 40     & 0.70   & 1.00   & 0.788  & 35     & \textbf{0.80  } & 1.00   & 0.003  & 43     & 0.65   & 1.00   & 0.003  \\ 
& 1000   & 0.97   & 47     & 0.66   & 1.00   & 1.721  & 39     & \textbf{0.79  } & 1.00   & 0.012  & 53     & 0.58   & 1.00   & 0.013  \\ 
& 2000   & 0.98   & 49     & 0.65   & 1.00   & 3.493  & 40     & \textbf{0.80  } & 1.00   & 0.045  & 57     & 0.56   & 1.00   & 0.049  \\ 
\hline 
\multirow{3}{*}{{\parbox[c][4em][c]{1.5cm} {\emph{parasauro}
		}}}
& 500    & 0.96   & 46     & 0.46   & 1.00   & 4.314  & 28     & \textbf{0.69  } & 0.95   & 0.003  & 49     & 0.43   & 1.00   & 0.004  \\ 
& 1000   & 0.98   & 57     & 0.39   & 1.00   & 8.004  & 30     & \textbf{0.69  } & 0.91   & 0.011  & 56     & 0.39   & 1.00   & 0.014  \\ 
& 2000   & 0.99   & 58     & 0.40   & 1.00   & 16.216 & 30     & \textbf{0.69  } & 0.91   & 0.044  & 58     & 0.40   & 1.00   & 0.050  \\ 
\hline 

\multirow{3}{*}{{\parbox[c][4em][c]{1.5cm} {\emph{mining-a}
		}}}
& 500    & 0.98   & 16     & \textbf{0.75  } & 1.00   & 2.669  & 15     & \textbf{0.75  } & 1.00   & 0.003  & 13     & 0.00   & 0.00   & 0.003  \\ 
& 1000   & 0.99   & 18     & \textbf{0.67  } & 1.00   & 9.885  & 6      & 0.33   & 0.17   & 0.011  & 16     & 0.00   & 0.00   & 0.013  \\ 
& 2000   & 0.99   & 21     & \textbf{0.67  } & 1.00   & 27.076 & 6      & 0.00   & 0.00   & 0.044  & 20     & 0.00   & 0.00   & 0.052  \\ 
\hline 
\multirow{3}{*}{{\parbox[c][4em][c]{1.5cm} {\emph{mining-b}
		}}}
& 500    & 0.98   & 20     & 0.50   & 1.00   & 3.546  & 10     & \textbf{0.90  } & 0.90   & 0.003  & 14     & 0.07   & 0.10   & 0.003  \\ 
& 1000   & 0.99   & 33     & \textbf{0.30  } & 1.00   & 20.328 & 6      & 0.00   & 0.00   & 0.011  & 16     & 0.00   & 0.00   & 0.012  \\ 
& 2000   & 0.99   & 38     & \textbf{0.29  } & 1.00   & 98.255 & 7      & 0.00   & 0.00   & 0.044  & 19     & 0.00   & 0.00   & 0.047  \\ 
\hline 

\multirow{3}{*}{{\parbox[c][4em][c]{1.5cm} {\emph{vaihingen-a}
		}}}
& 500    & 0.99   & 75     & \textbf{0.08  } & 1.00   & 10.756 & 7      & 0.00   & 0.00   & 0.003  & 13     & 0.00   & 0.00   & 0.003  \\ 
& 1000   & 0.99   & 14     & \textbf{0.93  } & 1.00   & 1.402  & 7      & 0.00   & 0.00   & 0.011  & 17     & 0.00   & 0.00   & 0.013  \\ 
& 2000   & 0.99   & 20     & \textbf{1.00  } & 1.00   & 10.631 & 8      & 0.00   & 0.00   & 0.046  & 21     & 0.24   & 0.25   & 0.053  \\ 
\hline 
\multirow{3}{*}{{\parbox[c][4em][c]{1.5cm} {\emph{vaihingen-b}
		}}}
& 500    & 0.98   & 20     & \textbf{0.40  } & 1.00   & 3.375  & 7      & 0.17   & 0.12   & 0.003  & 11     & 0.00   & 0.00   & 0.003  \\ 
& 1000   & 0.99   & 22     & \textbf{0.55  } & 1.00   & 2.318  & 7      & 0.00   & 0.00   & 0.011  & 18     & 0.33   & 0.50   & 0.013  \\ 
& 2000   & 0.99   & 22     & \textbf{0.64  } & 1.00   & 20.325 & 7      & 0.00   & 0.00   & 0.046  & 21     & 0.00   & 0.00   & 0.049  \\ 
\hline			
			
		\end{tabular}
	\end{center}
	\caption{Comparison against to Albarelli's variants.}
	\label{tab:albarelli}
	\vspace{-1.2em}
\end{table*}

\subsection{Failure case} \label{se:failure}
Fig.~\ref{fig:failure} depicts a failure case of GORE for 6 DoF point cloud registration for two consecutive views of \emph{bunny}. Correspondences (Fig.~\ref{fig:failin}) with low outlier ratio ($\eta=0.23$) were obtained by selecting ISS3D keypoints with \emph{mutually} lowest distance between their PFH descriptors. Since GORE was only able to remove two outliers, it was ineffective in reducing the overall runtime when finding the optimal solution; BnB took lesser time without the preprocessing of GORE. This result suggest that GORE could be ineffective as a preprocessor for low outlier ratio problems.

\begin{figure}
	\vspace{-1.4cm}
	\subfloat[correspondences]{
		\begin{minipage}{.54\linewidth} 
			\vspace{1cm}
			\includegraphics[width=\linewidth]{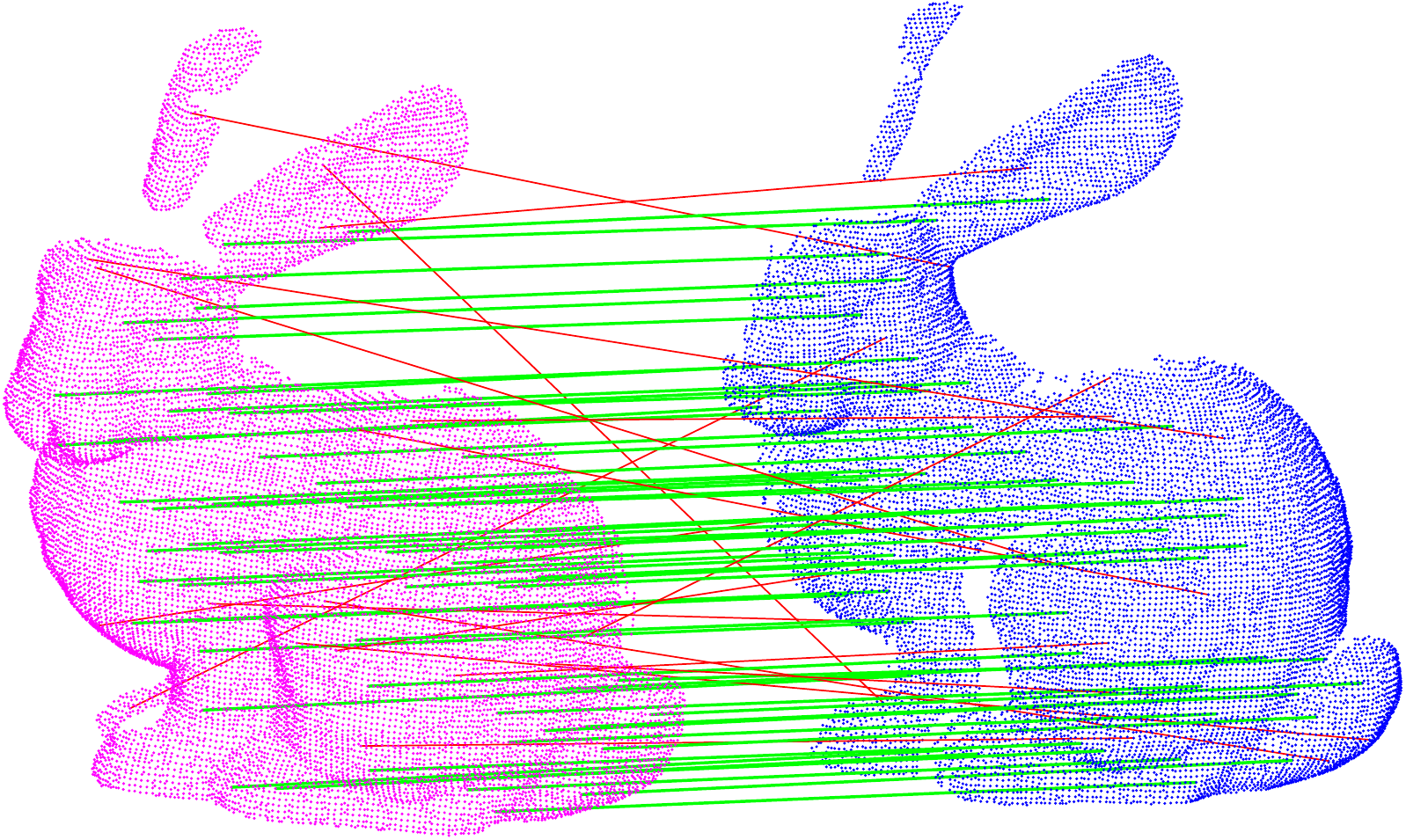}
		\end{minipage}
		\label{fig:failin}
	}
\subfloat[results]{
	\begin{minipage}{.43\linewidth}
\vspace{1.7cm}
	\resizebox*{\linewidth}{!}{
			\begin{tabular}{|c|c|r|}
				\hline
				\multirow{2}{*}{GORE} & $|\cH^\prime|$ & 58\\
				& time (s) & 0.016\\
				\hline
					\begin{minipage}{.9cm}
						\vspace{1mm}
					\centering GORE\\+\\BnB
					\vspace{1mm}
					\end{minipage}
			&
			\begin{minipage}{1.2cm}
				\centering
				$|\cI^*|$ \\ time (s)
			\end{minipage}
		    & 
		    \begin{minipage}{.8cm}
		    	\flushright
		    	56 \\ 1.05
		    \end{minipage}\\
			\hline
			BnB & time (s) & 0.99 \\
			\hline
			\end{tabular}
		}
		\end{minipage}
	\label{fig:failres}
}
\caption{A failure case for \emph{bunny} with $N = 73 $ and $\eta = 0.23$.}
\label{fig:failure}
\end{figure}

\section{Conclusions}

We have presented a guaranteed outlier removal technique for robust Euclidean point cloud registration with correspondences. Any datum removed by our method is guaranteed to not exist in the globally optimal solution. Based on simple geometric operations, our algorithm is deterministic and efficient. Experiments show that, by significantly reducing the amount of data and outliers, our method greatly speeds up the solution (local and global) of the registration.

\section*{Acknowledgements}
This work was supported by the Australian Research Council grant DP160103490.

\bibliographystyle{IEEEtran}

\bibliography{gore.bib}

\vspace{-1.5em}
\begin{IEEEbiography} [{\includegraphics[width=1in,clip,keepaspectratio]{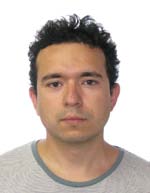}}]
	{\'{A}lvaro Parra} obtained a BSc~Eng. (2006), a B.Eng. Computer Science (2008) and a MSc. Computer Science (2011) from \emph{Universidad de Chile} (Santiago, Chile). He received the PhD degree in computer and mathematical sciences from The University of Adelaide in 2016 (Adelaide, Australia). He is currently a Research Associate at the School of Computer Science in The University of Adelaide. His main research interests include point cloud registration, 3D computer vision and optimization methods.
\end{IEEEbiography}
\vspace{-1.6em}
\begin{IEEEbiography}[{\includegraphics[width=1in,clip,keepaspectratio]{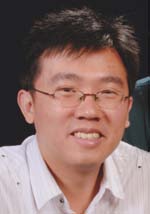}}]
	{Tat-Jun Chin}
	received the BEng degree in	mechatronics engineering from Universiti Teknologi Malaysia (UTM) in 2003 and the PhD degree in computer systems engineering from Monash University, Victoria, Australia, in 2007. He was a research fellow at the Institute for Infocomm Research (I2R) in Singapore from 2007 to 2008. Since 2008, he has been a Senior Research Associate, Lecturer, then Senior Lecturer at The University of Adelaide, South Australia. His research interests include robust estimation and geometric optimization. He is a member of the IEEE.
\end{IEEEbiography}

\end{document}